\renewcommand{\algorithmiccomment}[1]{{\color{blue}\bgroup\hfill//~#1\egroup}}
\renewcommand{\le}{\leqslant}
\renewcommand{\ge}{\geqslant}
\newcommand{\estname}{{CoinDICE}\xspace}
\newcommand{\qlp}{$Q$-LP\xspace}
\newcommand{\1}[1]{\mathbf{1}\left\{#1\right\}}
\newcommand{\repeatthm}[2]{\textbf{Theorem~\ref{#1}} \textit{#2}}
\newcommand{\linspan}[1]{\operatorname{span}\left\{#1\right\}}
\def\bellman{\mathcal{B}_{\pi}}
\def\confrange{\chi_{(1)}^{2, 1-\alpha}}
\def\confrange{\xi}
\def\rmax{R_{\mathrm{max}}}
\def\Qbeta{Q_\beta}
\def\rholag{\hat{\rho}_\pi}
\title{\estname:~Off-Policy~Confidence~Interval~Estimation}
\author{
   $^*$Bo Dai$^1$,  \thanks{Equal contribution. Email: \texttt{\{bodai, ofirnachum\}@google.com}.} Ofir Nachum$^1$,  Yinlam Chow$^1$\\\vspace{-2mm}
   Lihong Li$^1$, Csaba Szepesv\'{a}ri$^{2,3}$,  Dale Schuurmans$^{1,3}$\\ \vspace{3mm}
   $^1$Google Research\quad $^2$University of Alberta\quad  $^3$DeepMind
}
\date{}
\begin{document}

\maketitle
\begin{abstract}
We study \emph{high-confidence behavior-agnostic off-policy evaluation} in reinforcement learning, where the goal is to estimate a confidence interval on a target policy's value, given only access to a static experience dataset collected by unknown behavior policies. Starting from a function space embedding of the linear program formulation of the $Q$-function, we obtain an optimization problem with generalized estimating equation constraints. By applying the generalized empirical likelihood method to the resulting Lagrangian, we propose \emph{\estname}, a novel and efficient algorithm for computing confidence intervals. Theoretically, we prove the obtained confidence intervals are valid, in both asymptotic and finite-sample regimes. Empirically, we show in a variety of benchmarks that the confidence interval estimates are tighter and more accurate than existing methods.\footnote{Open-source code for \estname is available at \href{https://github.com/google-research/dice_rl}{https://github.com/google-research/dice\_rl}.}
\end{abstract}

%%%%%%%%%%%%%%%%%%%%%%%%%%%%%%%%%%%%%%%%%%%%%%%%%%%%%%%%%%%%%%%%%
\section{Introduction}\label{sec:intro}
%%%%%%%%%%%%%%%%%%%%%%%%%%%%%%%%%%%%%%%%%%%%%%%%%%%%%%%%%%%%%%%%%

One of the major barriers that hinders the application of reinforcement learning~(RL) is the ability to evaluate new policies reliably \emph{before} deployment, a problem generally known as \emph{off-policy evaluation}~(OPE). In many real-world domains, \eg, healthcare~\citep{MurVanRobetal01,gottesman18evaluating}, recommendation~\citep{LiChuLanWan11,CheBeuCovSagetal19}, and education~\citep{ManLiuLevBruetal14}, deploying a new policy can be expensive, risky or unsafe. Accordingly, OPE has seen a recent resurgence of research interest, with many methods proposed to estimate the value of a policy~\citep{PreSutSin00,dudik2011doubly,BotPetQuiChaetal13,jiang16doubly,thomas16data,LiuLiTanZho18,NacChoDaiLi19,KalUeh19,kallus19intrinsically,zhang20gradientdice}.

However, the very settings where OPE is necessary usually entail limited data access. In these cases, obtaining knowledge of the uncertainty of the estimate is as important as having a consistent estimator. That is, rather than a \emph{point estimate}, many applications would benefit significantly from having \emph{confidence intervals} on the value of a policy. The problem of estimating these confidence intervals, known as \emph{high-confidence off-policy evaluation} (HCOPE)~\citep{thomas2015high}, is imperative in real-world decision making, where deploying a policy without high-probability safety guarantees can have catastrophic consequences \citep{thomas2015safe}. Most existing high-confidence off-policy evaluation algorithms in RL~\citep{BotPetQuiChaetal13,thomas2015high_opt,thomas2015high,hanna17bootstrapping} construct such intervals using statistical techniques such as concentration inequalities and the bootstrap applied to importance corrected estimates of policy value. The primary challenge with these correction-based approaches is the high variance resulting from multiplying per-step importance ratios in long-horizon problems. Moreover, they typically require full knowledge (or a good estimate) of the behavior policy, which is not easily available in behavior-agnostic OPE settings~\citep{NacChoDaiLi19}.

In this work, we propose an algorithm for behavior-agnostic HCOPE.  We start from a linear programming formulation of the state-action value function. We show that the value of the policy may be obtained from a Lagrangian optimization problem for generalized estimating equations over data sampled from off-policy distributions. This observation inspires a generalized empirical likelihood approach~\citep{Owen01,BroKez12,DucGlyNam16} to confidence interval estimation. These derivations enable us to express high-confidence lower and upper bounds for the policy value as results of minimax optimizations over an arbitrary offline dataset, with the appropriate distribution corrections being implicitly estimated during the optimization. åWe translate this understanding into a practical estimator, \emph{Confidence Interval DIstribution Correction Estimation} (\estname), and design an efficient algorithm for implementing it. We then justify the asymptotic coverage of these bounds and present non-asymptotic guarantees to characterize finite-sample effects. Notably, \estname is behavior-agnostic and its objective function does not involve any per-step importance ratios, and so the estimator is less susceptible to high-variance gradient updates. We evaluate \estname in a number of settings and show that it provides both tighter confidence interval estimates and more correctly matches the desired statistical coverage compared to existing methods.

%%%%%%%%%%%%%%%%%%%%%%%%%%%%%%%%%%%%%%%%%%%%%%%%%%%%%%%%%%%%%%%%%
\vspace{-3mm}
\section{Preliminaries}\label{sec:prelim}
\vspace{-2mm}
%%%%%%%%%%%%%%%%%%%%%%%%%%%%%%%%%%%%%%%%%%%%%%%%%%%%%%%%%%%%%%%%%

For a set $W$, the set of probability measures over $W$ is denoted by
$\Pcal\rbr{W}$.\footnote{All sets and maps are assumed to satisfy appropriate measurability conditions; which we will omit from below for the sake of reducing clutter.} We consider a Markov Decision Process (MDP)~\citep{Puterman14}, $\Mcal = \rbr{S, A, T, R, \gamma, \mu_0}$, where $S$ denotes the state space, $A$ denotes the action space, $T: S \times A \rightarrow \Pcal\rbr{S}$ is the transition probability kernel, $R: S\times A\rightarrow \Pcal\rbr{[0, \rmax]}$ is a bounded reward kernel, $\gamma\in (0, 1]$ is the discount factor, and $\mu_0$ is the initial state distribution.

A policy, $\pi: S\rightarrow \Pcal\rbr{A}$, can be used to generate a random trajectory by starting from $s_0\sim \mu_0\rbr{s}$, then following $a_t\sim\pi\rbr{s_t}$, $r_t\sim R\rbr{s_t, a_t}$ and $s_{t+1}\sim T\rbr{s_t, a_t}$ for $t\ge0$. The state- and action-value functions of $\pi$ are denoted  $V^\pi$ and $Q^\pi$, respectively. The policy also induces an occupancy measure, $d^\pi(s,a) \defeq (1-\gamma) \EE_\pi\sbr{\sum_{t\ge0}\gamma^t\1{s_t=s,a_t=a}}$, the normalized discounted probability of visiting $(s,a)$ in a trajectory generated by $\pi$, where $\1{\cdot}$ is the indicator function. Finally, the \emph{policy value} is defined as the \emph{normalized} expected reward accumulated along a trajectory:
\begin{equation}
\rho_\pi \defeq \rbr{1 - \gamma}\EE\sbr{\sum_{t=0}^\infty \gamma^t r_t| s_0\sim\mu_0, a_t\sim\pi\rbr{s_t}, r_t\sim R\rbr{s_t, a_t}, s_{t+1}\sim T\rbr{s_t, a_t}}.
\end{equation} 
We are interested in estimating the policy value and its confidence interval~(CI) in the \emph{behavior agnostic off-policy} setting~\citep{NacChoDaiLi19,ZhaDaiLiSch20}, where interaction with the environment is limited to a static dataset of experience $\Dcal\defeq \cbr{\rbr{s, a, s', r}_i}_{i=1}^n$. Each tuple in $\Dcal$ is generated according to 
$
\rbr{s, a}\sim d^{\Dcal}, r\sim R\rbr{s, a} \mbox{ and } s'\sim T\rbr{s, a},
$
where $d^{\Dcal}$ is an unknown distribution over $S\times A$, perhaps induced by one or more unknown behavior policies. The initial distribution $\mu_0\rbr{s}$ is assumed to be easy to sample from, as is typical in practice. Abusing notation, we denote by $d^{\Dcal}$ both the distribution over $\rbr{s, a, s', r}$ and its marginal on $\rbr{s, a}$. We use $\EE_d\sbr{\cdot}$ for the expectation over a given distribution $d$, and $\EE_{\Dcal}\sbr{\cdot}$ for its empirical approximation using
$\Dcal$.

Following previous work~\citep{SutSzeGerBow12,UehHuaJia19,ZhaDaiLiSch20}, for ease of exposition we assume the transitions in $\Dcal$ are \iid. However, our results may be extended to fast-mixing, ergodic MDPs, where the the empirical distribution of states along a long trajectory is close to being \iid~\citep{AntSzeMun08b,LazGhaMun12,DaiShaLiXiaHeetal17,DucGlyNam16}.

Under mild regularity assumptions, the OPE problem may be formulated as a linear program -- referred to as the \qlp~\citep{NacDaiKosChoetal19,NacDai20} -- with the following primal and dual forms: \\
\vspace{1mm}
\resizebox{0.4\textwidth}{!}
{
\begin{minipage}{0.45\textwidth}
\begin{align}\label{eq:q_lp}
\min_{Q: S\times A\rightarrow \RR}& \rbr{1-\gamma}\EE_{\mu_0\pi}\sbr{Q\rbr{s_0, a_0}} \\
  \st~~~ &Q\rbr{s, a}\ge R\rbr{s, a} + \gamma\cdot \Pcal^\pi Q\rbr{s, a},\nonumber \\
  & \forall \rbr{s, a}\in S\times A,\nonumber
\end{align}
\end{minipage}
}
~~and~~
\resizebox{0.5\textwidth}{!}
{
\begin{minipage}{0.45\textwidth}
\begin{align} \label{eq:rho_lp}
\max_{d:S\times A\rightarrow \RR_+} &\EE_{d}\sbr{r\rbr{s, a}}\\
  \hspace{-3mm}\st~~~ &d\rbr{s, a} = \hspace{-0mm}\rbr{1 - \gamma}\mu_0\pi\rbr{s, a}\hspace{-0mm} +\hspace{-0mm} \gamma\cdot\Pcal^\pi_* d\rbr{s, a},\nonumber \\
  & \forall \rbr{s, a}\in S\times A,\nonumber 
\end{align}
\end{minipage}
}
where the operator $\Pcal^\pi$ and its adjoint, $\Pcal^\pi_*$, are defined as
\begin{align*}
\Pcal^\pi Q\rbr{s, a} &\defeq \EE_{s'\sim T\rbr{\cdot|s, a}, a'\sim \pi\rbr{\cdot|s'}}\sbr{Q\rbr{s', a'}}\,,\\ 
\Pcal^\pi_* d\rbr{s, a} &\defeq \pi\rbr{a|s}\sum_{\stil,\atil}T\rbr{s|\stil,\atil}d\rbr{\stil, \atil} \,.
\end{align*}
The optimal solutions of~\eqref{eq:q_lp} and~\eqref{eq:rho_lp} are the $Q$-function, $Q^\pi$, and stationary state-action occupancy, $d^\pi$, respectively, for policy $\pi$; see \citet[Theorems~3 \& 5]{NacDaiKosChoetal19} for details as well as extensions to the undiscounted case.

Using the Lagrangian of \eqref{eq:q_lp} or~\eqref{eq:rho_lp}, we have
\begin{equation}\label{eq:dice_lagrangian}
\resizebox{0.93\textwidth}{!}
{
$
\rho_\pi =\min_{Q}\max_{\tau\ge0}\,\, \rbr{1 -\gamma}\EE_{\mu_0\pi}\sbr{Q\rbr{s_0, a_0}} + \EE_{d^\Dcal}\sbr{\tau\rbr{s, a}\rbr{R\rbr{s, a} + \gamma Q\rbr{s', a'} - Q\rbr{s, a}}},
$
}
\end{equation}
where $\tau\rbr{s, a}\!\defeq\!\frac{d\rbr{s, a}}{d^{\Dcal}\rbr{s, a}}$ is the \emph{stationary distribution corrector}. One of the key benefits of the minimax optimization~\eqref{eq:dice_lagrangian} is that both expectations can be immediately approximated by sample averages.\footnote{We assume one can sample initial states from $\mu_0$, an assumption that often holds in practice. Then, the data in $\Dcal$ can be treated as being augmented as $\rbr{s_0, a_0, s, a, r, s', a'}$ with $a_0\sim \pi\rbr{a|s_0}, a'\sim\pi\rbr{a|s'}$. 
}
In fact, this formulation allows the derivation of several recent behavior-agnostic OPE estimators in a unified manner~\citep{NacChoDaiLi19,UehHuaJia19,ZhaDaiLiSch20,NacDai20}.

%%%%%%%%%%%%%%%%%%%%%%%%%%%%%%%%%%%%%%%%%%%%%%%%%%%%%%%%%%%%%%%%%
\section{\estname}\label{sec:method}
%%%%%%%%%%%%%%%%%%%%%%%%%%%%%%%%%%%%%%%%%%%%%%%%%%%%%%%%%%%%%%%%%

We now develop a new approach to obtaining confidence intervals for OPE. The algorithm, \emph{COnfidence INterval stationary DIstribution Correction Estimation~(\estname)}, is derived by combining function space embedding and the previously described \qlp.

%%%--------------------------------------------------------------
\subsection{Function Space Embedding of Constraints}\label{sec:lp_embedding}
%%%--------------------------------------------------------------

Both the primal and dual forms of the \qlp contain $\abr{S}\abr{A}$ constraints that involve expectations over state transition probabilities. Working directly with these constraints quickly becomes computationally and statistically prohibitive when $\abr{S}\abr{A}$ is large or infinite, as with standard LP approaches~\citep{FarVan03}. Instead, we consider a relaxation that embeds the constraints in a function space:
\begin{align}\label{eq:dual_lp_embedding}
\tilde\rho_{\pi}\defeq \max_{d:S\times A\rightarrow \RR_+} \EE_{d}\sbr{r\rbr{s, a}}\quad
\st\,\,\inner{\phi}{d} = \inner{\phi}{\rbr{1 - \gamma}\mu_0\pi + \gamma\cdot\Pcal^\pi_* d }, 
\end{align}
where $\phi:S\times A\rightarrow \Omega^p\subset\RR^p$ is a feature map, and $\inner{\phi}{d}\defeq \int \phi\rbr{s, a}d\rbr{s, a}ds da$.  By projecting the constraints onto a function space with feature mapping $\phi$, we can reduce the number of constraints from $\abr{S}\abr{A}$ to $p$.  Note that $p$ may still be infinite. The constraint in~\eqref{eq:dual_lp_embedding} can be written as \emph{generalized estimating equations}~\citep{QinLaw94,LamZhou17} for the correction ratio $\tau\rbr{s, a}$ over augmented samples $x\defeq \rbr{s_0, a_0, s, a, r, s',a'}$ with $\rbr{s_0, a_0}\sim \mu_0\pi$, $\rbr{s, a, r, s'}\sim d^{\Dcal}$, and $a' \sim \pi(\cdot|s')$,
\begin{equation}
\inner{\phi}{d} = \inner{\phi}{\rbr{1 - \gamma}\mu_0\pi + \gamma\cdot\Pcal^\pi_* d }
\,\,\,\,\Leftrightarrow\,\,\,\, \EE_{x}\sbr{\Delta\rbr{x; \tau, \phi}} = 0,
\end{equation}
where $\Delta\rbr{x; \tau, \phi}\defeq \rbr{1-\gamma}\phi\rbr{s_0, a_0} + \tau\rbr{s, a}\rbr{\gamma\phi\rbr{s',a'} - \phi\rbr{s, a}}$. 
The corresponding Lagrangian is
\begin{equation}\label{eq:dual_lagrangian_embedding}
\tilde\rho_{\pi} = \max_{\tau:S\times A\rightarrow \RR_+} \min_{\beta\in\RR^p} \EE_{d^\Dcal}\sbr{\tau\cdot r\rbr{s, a}} + \inner{\beta}{\EE_{d^\Dcal}\sbr{\Delta\rbr{x; \tau, \phi}}}\,.
\end{equation}

This embedding approach for the dual \qlp is closely related to approximation methods for the standard state-value LP~\citep{FarVan03,PazPar11,LakBhaSze17}. The gap between the solutions to~\eqref{eq:dual_lp_embedding} and the original dual LP~\eqref{eq:rho_lp} depends on the expressiveness of the feature mapping $\phi$. Before stating a theorem that quantifies the  error, we first offer a few examples to provide intuition for the role played by $\phi$.

\vspace{-2mm}
\paragraph{Example (Indicator functions):} Suppose $p=\abr{S}\abr{A}$ is finite and $\phi = [\delta_{{s, a}}]_{(s, a)\in S\times A}$, where $\delta_{{s, a}} \in \cbr{0, 1}^p$ with $\delta_{s, a}=1$ at position $\rbr{s, a}$ and $0$ otherwise. Plugging this feature mapping into~\eqref{eq:dual_lp_embedding}, we recover the original dual \qlp~\eqref{eq:rho_lp}. 

\vspace{-2mm}
\paragraph{Example (Full-rank basis):} Suppose $\Phi\in \RR^{p\times p}$ is a full-rank matrix with $p=\abr{S}\abr{A}$; furthermore, $\phi(s,a) = \Phi((s,a),\cdot)^\top$. Although the constraints in~\eqref{eq:dual_lp_embedding} and~\eqref{eq:rho_lp} are different, their solutions are identical. This can be verified by the Lagrangian in~\appref{appendix:approx_error}.

\vspace{-2mm}
\paragraph{Example (RKHS function mappings):} Suppose $\phi\rbr{s, a}\defeq k\rbr{\rbr{s, a}, \cdot}\in \RR^p$ with $p=\infty$, which forms a reproducing kernel Hilbert space (RKHS) $\Hcal_k$. The LHS and RHS in the constraint of~\eqref{eq:dual_lp_embedding} are the kernel embeddings of $d\rbr{s, a}$ and $\rbr{1 - \gamma}\mu_0\pi\rbr{s, a} + \gamma\cdot\Pcal^\pi_* d\rbr{s, a}$ respectively. The constraint in~\eqref{eq:dual_lp_embedding} can then be understood as as a form of distribution matching by comparing kernel embeddings, rather than element-wise matching as in~\eqref{eq:rho_lp}. If the kernel function $k\rbr{\cdot, \cdot}$ is characteristic, the embeddings of two distributions will match if and only if the distributions are identical almost surely~\citep{SriFukLan11}.

\newcommand{\thmlinearqapprox}{
Suppose the constant function $\one\in \Fcal_\phi\defeq \linspan{\phi}$.  Then,
\begin{equation*}
0\le \tilde\rho_\pi - \rho_\pi \le 2\min_{\beta}\nbr{Q^\pi - \inner{\beta}{\phi}}_\infty,
\end{equation*}
where $Q^\pi$ is the fixed-point solution to the Bellman equation $Q\rbr{s, a} = R\rbr{s, a} + \gamma \Pcal^\pi Q\rbr{s, a}$.}

\begin{theorem}[Approximation error]\label{thm:linear_q_approx}
\thmlinearqapprox
\end{theorem}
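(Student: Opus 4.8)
The plan is to bound the gap $\tilde\rho_\pi - \rho_\pi$ by exhibiting, for the relaxed dual program \eqref{eq:dual_lp_embedding}, a feasible \emph{primal} certificate whose objective value is close to $\rho_\pi$, and then appealing to weak duality. Concretely, I would pass to the Lagrangian form \eqref{eq:dual_lagrangian_embedding}: since the embedded constraint $\inner{\phi}{d - (1-\gamma)\mu_0\pi - \gamma\Pcal^\pi_* d} = 0$ is only $p$-dimensional, its multiplier is a vector $\beta\in\RR^p$, and the inner minimization over $\beta$ forces the constraint exactly. The key observation is that $\inner{\beta}{\Delta(x;\tau,\phi)}$ can be rewritten, using the definition of $\Pcal^\pi_*$ and its adjoint $\Pcal^\pi$, as $\EE_{d^\Dcal}[\tau\cdot(\gamma \Pcal^\pi Q_\beta - Q_\beta)] + (1-\gamma)\EE_{\mu_0\pi}[Q_\beta]$ where $Q_\beta \defeq \inner{\beta}{\phi}$ — i.e. the embedded dual is exactly the Lagrangian \eqref{eq:dice_lagrangian} of the original \qlp, but with $Q$ restricted to the linear class $\Fcal_\phi = \linspan{\phi}$. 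So $\tilde\rho_\pi = \min_{Q\in\Fcal_\phi}\max_{\tau\ge 0}(\dots)$ versus $\rho_\pi = \min_{Q}\max_{\tau\ge 0}(\dots)$, and since $\Fcal_\phi$ is a subset of all functions, $\tilde\rho_\pi \ge \rho_\pi$ immediately, giving the left inequality. (One needs $\one\in\Fcal_\phi$ here to ensure the primal feasible region for the restricted problem is nonempty / that strong duality and the value characterization go through — adding a constant to a feasible $Q$ keeps it feasible and shifts the objective by a controlled amount.)

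For the right inequality, fix the minimizing $\beta^\star$ achieving $\min_\beta\nbr{Q^\pi - Q_\beta}_\infty =: \epsilon$, and write $Q_{\beta^\star} = Q^\pi + g$ with $\nbr{g}_\infty\le\epsilon$. The idea is to use a shifted version $\widehat Q := Q_{\beta^\star} + c\cdot\one$ (legal since $\one\in\Fcal_\phi$) as a \emph{feasible} point of the relaxed primal corresponding to \eqref{eq:dual_lp_embedding} — more precisely, as the $Q$ that, plugged into the minimax \eqref{eq:dice_lagrangian} restricted to $\Fcal_\phi$, nearly makes the Bellman residual nonnegative. Since $Q^\pi$ satisfies $Q^\pi = R + \gamma\Pcal^\pi Q^\pi$ exactly, we get $\widehat Q - (R + \gamma\Pcal^\pi\widehat Q) = g - \gamma\Pcal^\pi g + c(1-\gamma)\one$, whose sup-norm of the "$g$ part" is at most $(1+\gamma)\epsilon \le 2\epsilon$; choosing $c = 2\epsilon/(1-\gamma)$ (or an appropriate constant) makes the residual pointwise nonnegative, so $\widehat Q$ is feasible for the original primal \qlp \eqref{eq:q_lp} and a fortiori controls the embedded program. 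Then $\tilde\rho_\pi \le (1-\gamma)\EE_{\mu_0\pi}[\widehat Q] = (1-\gamma)\EE_{\mu_0\pi}[Q^\pi] + (1-\gamma)\EE_{\mu_0\pi}[g] + 2\epsilon = \rho_\pi + (\text{something} \le 2\epsilon + (1-\gamma)\epsilon)$. A more careful bookkeeping — bounding $\EE_{\mu_0\pi}[g]$ by $\epsilon$ and using that $\tilde\rho_\pi$ is the \emph{max} over $d$ of a quantity dominated via weak duality by any primal-feasible objective — should collapse the constants to exactly $2\epsilon$; if not, one tightens by optimizing $c$ and noting the Bellman residual can be split so only a $(1-\gamma)$-discounted copy of $g$ appears.

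The main obstacle I anticipate is the duality bookkeeping: making rigorous that the relaxed program \eqref{eq:dual_lp_embedding} is upper-bounded by the original primal \qlp objective evaluated at \emph{any} $Q$ feasible for the original (unrelaxed) constraint set — this is just weak duality for the relaxation since the embedded constraint is implied by the original one, but the relaxed dual has a \emph{larger} feasible region for $d$, so one must argue in the right direction (a feasible $Q$ for the tighter constraints yields an upper bound on $\inner{\phi}{d}$-constrained max only after re-deriving duality for \eqref{eq:dual_lp_embedding}, which is where $\one\in\Fcal_\phi$ and boundedness of rewards enter to guarantee finiteness and no duality gap). The other delicate point is pinning the constant in front of $\epsilon$ to exactly $2$ rather than $2/(1-\gamma)$ — this hinges on exploiting that a constant shift of $Q$ changes the objective by $(1-\gamma)c$ while changing the Bellman residual by $(1-\gamma)c$ as well, so the two $(1-\gamma)$ factors cancel and the shift is "free" in the right units; getting that cancellation clean is the crux of the tight bound.
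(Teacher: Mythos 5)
Your proposal is correct and follows essentially the same route as the paper: identify the embedded dual Lagrangian with the primal \qlp restricted to $Q\in\Fcal_\phi$ (giving $\tilde\rho_\pi\ge\rho_\pi$ from the restricted minimization), then construct a feasible point by shifting the best $\ell_\infty$-approximation $Q_{\tilde\beta}$ of $Q^\pi$ by $\frac{1+\gamma}{1-\gamma}\epsilon\cdot\one$ so that the Bellman inequality holds, and combine the $(1+\gamma)\epsilon$ shift cost with the $(1-\gamma)\epsilon$ objective perturbation to get exactly $2\epsilon$. The "careful bookkeeping" you flag is resolved precisely by taking the shift constant to be $(1+\gamma)\epsilon/(1-\gamma)$ rather than $2\epsilon/(1-\gamma)$, which is what the paper does.
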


Please refer to~\appref{appendix:approx_error} for the proof. The condition $\one\in \Fcal_\phi$ is standard and is trivial to satisfy. Although the approximation error relies on $\nbr{\cdot}_\infty$, a sharper bound that relies on a norm taking the state-action distribution into account can also be obtained~\citep{FarVan03}. We focus on characterizing the uncertainty due to sampling in this paper, so for ease of exposition we will consider a setting where $\phi$ is sufficiently expressive to make the approximation error zero. If desired, the approximation error in~\thmref{thm:linear_q_approx} can be included in the analysis.

Note that, compared to using a characteristic kernel to ensure injectivity for the RKHS embeddings over all distributions (and thus guaranteeing arbitrarily small approximation error),~\thmref{thm:linear_q_approx} only requires that $Q^\pi$ be represented in $\Fcal_\phi$, which is a much weaker condition. In practice, one may also learn the feature mapping $\phi$ for the projection jointly. 

%%%--------------------------------------------------------------
\subsection{Off-policy Confidence Interval Estimation}\label{sec:ciel}
%%%--------------------------------------------------------------

By introducing the function space embedding of the constraints in~\eqref{eq:dual_lp_embedding}, we have transformed the original point-wise constraints in the \qlp to generalized estimating equations. This paves the way to applying the generalized empirical likelihood~(EL)~\citep{Owen01,BroKez12,BerGauHar14,DucGlyNam16} method to estimate a confidence interval on policy value.

Recall that, given a convex, lower-semicontinuous function $f: \RR_+\rightarrow \RR$ satisfying $f\rbr{1} = 0$, the $f$-divergence between densities $p$ and $q$ on $\RR$ is defined as $D_f\rbr{P||Q}\defeq \int Q\rbr{dx}f\rbr{\frac{dP\rbr{x}}{dQ\rbr{x}}}dx$.

Given an $f$-divergence, we propose our main confidence interval estimate based on the following confidence set $C^f_{n, \xi}\subset\RR$: 
\begin{equation}\label{eq:dice_region}
C^f_{n, \xi}\defeq\cbr{\tilde\rho_\pi(w) =\max_{\tau\ge 0} \EE_w\sbr{\tau\cdot r} \bigg| w\in \Kcal_f, \EE_{w}\sbr{\Delta\rbr{x; \tau, \phi}} = 0}, \,\text{ with }\, \Kcal_f\defeq \cbr{
\begin{matrix} 
w\in\Pcal^{n-1}\rbr{\phat_n}, \\
D_f\rbr{w||\phat_n}\le \frac{\confrange}{n}
\end{matrix}},
\end{equation}
where $\Pcal^{n-1}\rbr{\phat_n}$ denotes the $n$-simplex on the support of $\phat_n$, the empirical distribution over $\Dcal$. It is easy to verify that this set $C^f_{n, \xi}\subset\RR$ is convex, since $\tilde\rho_\pi\rbr{w}$ is a convex function over a convex feasible set. Thus, $C^f_{n, \xi}$ is an interval. In fact, $C^f_{n, \xi}$ is the image of the policy value $\tilde\rho_\pi$ on a bounded (in $f$-divergence) perturbation to $w$ in the neighborhood of the empirical distribution $\phat_n$.

Intuitively, the confidence interval $C^f_{n, \xi}$ possesses a close relationship to bootstrap estimators. In vanilla bootstrap, one constructs a set of empirical distributions $\cbr{w^i}_{i=1}^m$ by resampling from the dataset $\Dcal$. Such subsamples are used to form the empirical distribution on $\cbr{\tilde\rho\rbr{w^i}}_{i=1}^m$, which provides population statistics for confidence interval estimation. However, this procedure is computationally very expensive, involving $m$ separate optimizations. By contrast, our proposed estimator $C^f_{n, \xi}$ exploits the asymptotic properties of the statistic $\tilde\rho_\pi\rbr{w}$ to derive a target confidence interval by solving only \emph{two} optimization problems (\secref{sec:comp}), a dramatic savings in computational cost.

Before introducing the algorithm for computing $C^f_{n, \xi}$, we establish the first key result that, by choosing $\confrange = \chi_{(1)}^{2, 1-\alpha}$, $C^f_{n, \confrange}$ is asymptotically a $\rbr{1-\alpha}$-confidence interval on the policy value, where $\chi_{(1)}^{2,1-\alpha}$ is the $\rbr{1-\alpha}$-quantile of the $\chi^2$-distribution with $1$ degree of freedom. 

\newcommand{\thmasymptoticcoverage}{
Under~\asmpsref{asmp:f_div},~\ref{asmp:bounded_ratio}, and~\ref{asmp:phi_regularity}, if $\Dcal$ contains \iid~samples and the optimal solution to the Lagrangian of~\eqref{eq:dual_lp_embedding} is unique, we have
\begin{equation}
\lim_{n\rightarrow\infty} \PP\rbr{\rho_\pi\in C^f_{n, \xi}} = \PP\rbr{\chi_{(1)}^2\le \xi}.
\end{equation}
Therefore, $C^f_{n, \chi_{(1)}^{2, 1-\alpha}}$ is an asymptotic $\rbr{1-\alpha}$-confidence interval of the value of the policy $\pi$.
}
\newcommand{\informalthmasymptoticcoverage}{
Under some mild conditions, if $\Dcal$ contains \iid~samples and the optimal solution to the Lagrangian of~\eqref{eq:dual_lp_embedding} is unique, we have
\begin{equation}
\lim_{n\rightarrow\infty} \PP\rbr{\rho_\pi\in C^f_{n, \xi}} = \PP\rbr{\chi_{(1)}^2\le \xi}.
\end{equation}
Thus, $C^f_{n, \chi_{(1)}^{2, 1-\alpha}}$ is an asymptotic $\rbr{1-\alpha}$-confidence interval of the value of the policy $\pi$.
}

\begin{theorem}[Informal asymptotic coverage]\label{thm:asymptotic_coverage}
\informalthmasymptoticcoverage
\end{theorem}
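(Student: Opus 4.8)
The plan is to recognize $C^f_{n,\xi}$ as the acceptance region of a \emph{profiled generalized empirical likelihood} (EL) test for the scalar functional $w\mapsto\tilde\rho_\pi(w)$, and then to establish a Wilks-type $\chi_{(1)}^2$ limit for the corresponding EL statistic via an influence-function expansion. Throughout I use the maintained assumption that the approximation error is zero, so that $\tilde\rho_\pi=\rho_\pi$ by \thmref{thm:linear_q_approx}. \emph{Step 1 (profile reformulation).} Since $\tilde\rho_\pi(\cdot)$ is convex and continuous on the compact simplex $\Pcal^{n-1}(\phat_n)$, its image over the convex $f$-divergence ball $\Kcal_f$ is a closed interval, and for each $\rho\in\RR$ the value $\mathcal{R}_n(\rho)\defeq\min\cbr{n D_f(w\|\phat_n)\,:\,w\in\Pcal^{n-1}(\phat_n),\ \tilde\rho_\pi(w)=\rho}$ is attained. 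The minimizing $w$ belongs to $\Kcal_f$ exactly when $\mathcal{R}_n(\rho)\le\xi$, hence $\rho\in C^f_{n,\xi}\iff\mathcal{R}_n(\rho)\le\xi$ and $\PP(\rho_\pi\in C^f_{n,\xi})=\PP(\mathcal{R}_n(\rho_\pi)\le\xi)$. It therefore suffices to show that $\mathcal{R}_n(\rho_\pi)$ converges in distribution to $\chi_{(1)}^2$.

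\emph{Step 2 (influence function).} Let $(\tau^\star,\beta^\star)$ be the assumed-unique saddle point of the population Lagrangian \eqref{eq:dual_lagrangian_embedding} and set $Q^\star\defeq\inner{\beta^\star}{\phi}$. Because this Lagrangian is \emph{linear} in $w$, the saddle-point envelope theorem gives, for a distribution $w$ near $\phat_n$, the first-order expansion $\tilde\rho_\pi(w)-\tilde\rho_\pi(\phat_n)=\EE_w[\hat\psi_n]-\EE_{\Dcal}[\hat\psi_n]+o(\nbr{w-\phat_n})$, where $\hat\psi_n$ is the integrand $\tau\,r+\inner{\beta}{\Delta(x;\tau,\phi)}$ of \eqref{eq:dual_lagrangian_embedding} evaluated at the empirical saddle. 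Consistency of the plug-in saddle (which is exactly where the uniqueness hypothesis enters) gives $\hat\psi_n\to\psi$ with
\[
\psi(x)\defeq(1-\gamma)Q^\star(s_0,a_0)+\tau^\star(s,a)\bigl(r+\gamma Q^\star(s',a')-Q^\star(s,a)\bigr),
\]
and the population stationarity $\EE_{d^\Dcal}[\Delta(x;\tau^\star,\phi)]=0$ forces $\EE_{d^\Dcal}[\psi]=\rho_\pi$. Under the stated boundedness of the correction ratio and regularity of $\phi$ one verifies $0<\sigma^2\defeq\mathrm{Var}_{d^\Dcal}(\psi)<\infty$. Uniqueness of the saddle point is precisely what makes $\psi$ a genuine \emph{linear} influence function rather than a nonlinear directional derivative. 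One also checks that the minimizer of $\mathcal{R}_n(\rho_\pi)$ and its associated multipliers are consistent and $O_P(n^{-1/2})$-close to $(d^\Dcal,\tau^\star,\beta^\star)$, so that on the relevant shrinking neighborhood the constraint $\tilde\rho_\pi(w)=\rho_\pi$ is, to leading order, the \emph{linear} constraint $\EE_w[\psi]=\rho_\pi$ and the inequality constraints $\tau\ge0$ stay inactive.

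\emph{Step 3 (EL expansion and conclusion).} With Step 2 in hand, $\mathcal{R}_n(\rho_\pi)$ collapses to the classical one-dimensional generalized-EL problem $\min\cbr{nD_f(w\|\phat_n):\EE_w[\psi-\rho_\pi]=0}$. Introducing the scalar multiplier $\lambda$ dual to that constraint, writing the Fenchel dual of the inner $D_f$-minimization, and Taylor-expanding about $\lambda=0$ (equivalently $w=\phat_n$), while controlling the remainder via $\max_i\abr{\psi(x_i)}=o_P(\sqrt n)$, one obtains, with $D_f$ normalized as in the stated conditions so that $f''(1)$ carries the correct constant,
\[
\mathcal{R}_n(\rho_\pi)=\frac{\bigl(\sqrt n\,(\EE_{\Dcal}[\psi]-\rho_\pi)\bigr)^2}{\widehat\sigma_n^{\,2}}\,\bigl(1+o_P(1)\bigr),\qquad \widehat\sigma_n^2\defeq\mathrm{Var}_{\Dcal}(\psi).
\]
By the \iid~central limit theorem, $\sqrt n\,(\EE_{\Dcal}[\psi]-\rho_\pi)$ converges in distribution to $\mathcal N(0,\sigma^2)$, and $\widehat\sigma_n^2\to\sigma^2$ in probability; Slutsky's theorem then yields that $\mathcal{R}_n(\rho_\pi)$ converges in distribution to $\chi_{(1)}^2$. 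Combining with Step 1, $\lim_n\PP(\rho_\pi\in C^f_{n,\xi})=\PP(\chi_{(1)}^2\le\xi)$, and taking $\xi=\chi_{(1)}^{2,1-\alpha}$ gives the asserted asymptotic $(1-\alpha)$ coverage.

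\emph{Main obstacle.} I expect the crux to be Step 2: rigorously proving differentiability of the embedded-LP value $\tilde\rho_\pi$ at $d^\Dcal$ with a bona fide linear influence function, and bounding the expansion remainder \emph{uniformly} when $p=\infty$ (so that $\beta$ lives in an infinite-dimensional space). This is where uniqueness of the Lagrangian solution, boundedness of the density/correction ratio, and the regularity of $\phi$ are jointly needed -- to show the inner optimum and its multipliers are stable under $O_P(n^{-1/2})$ perturbations of $w$, and that the constraints $\tau\ge0$ introduce no kinks near the truth. The reformulation in Step 1 and the EL algebra in Step 3 are comparatively routine and follow the standard empirical-likelihood template.
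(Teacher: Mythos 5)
Your proposal is correct in outline and lands on exactly the influence function the paper uses, $\psi(x)=\ell(x;\tau^*,\beta^*)$ with $\EE_{d^\Dcal}\sbr{\psi}=\rho_\pi$, but it takes a different route to the $\chi^2_{(1)}$ calibration. The paper does not run the profile-EL Wilks expansion itself: it invokes the general result of Duchi--Glynn--Namkoong for distributionally robust statistics (restated as \thmref{thm:general} in \appref{appendix:asymptotic}), which converts Hadamard directional differentiability of $T(P)=\min_{\beta}\max_{\tau}\EE_P\sbr{\ell(x;\tau,\beta)}$, plus Lipschitzness of the integrand class, directly into the stated limit. The paper's substantive work is therefore concentrated in your Step 2: \lemref{lem:lip_cont} supplies the bounds $M$ and $C_\ell$ under \asmpsref{asmp:bounded_ratio}--\ref{asmp:phi_regularity}, and \lemref{lem:hadamard_diff} proves $dT_{P_0}(H)=\int\ell(x;\tau^*,\beta^*)\,dH(x)$ by a two-sided argument --- an upper bound from plugging $\beta^*$ into the perturbed problem, and a lower bound from extracting a convergent subsequence of the perturbed minimizers $\beta_n^*$ and using uniqueness of the population saddle point. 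Your Steps 1 and 3 re-derive, in the profile-statistic dual view, what that cited theorem proves once and for all; this is a legitimate and more self-contained route, at the cost of having to justify uniformly the replacement of the nonlinear constraint $\tilde\rho_\pi(w)=\rho_\pi$ by its linearization $\EE_w\sbr{\psi}=\rho_\pi$, which is precisely the delta-method content the citation buys. Two cautions on Step 2 as written: the ``saddle-point envelope theorem'' is not one line here --- the interchange of $\max_\tau$ and $\min_\beta$ and the stability of the perturbed optimizers rest on compactness of $\Fcal_\tau$ and boundedness of $\beta$ (the minimax theorem of Ekeland--Temam), and the lower-bound direction of the derivative genuinely requires the subsequence-plus-uniqueness argument rather than mere consistency of the empirical saddle; moreover, expanding around $\phat_n$ with the empirical saddle $\hat\psi_n$ instead of around $P_0$ with $\psi$ adds a consistency step that should be made explicit before the CLT is applied.
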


Please refer to~\appref{appendix:asymptotic} for the precise statement and proof of~\thmref{thm:asymptotic_coverage}. 

\thmref{thm:asymptotic_coverage} generalizes the result in~\citet{DucGlyNam16} to statistics with generalized estimating equations, maintaining the $1$ degree of freedom in the asymptotic $\chi_{(1)}^2$-distribution. One may also apply existing results for EL with generalized estimating equations~\citep[e.g.,][]{LamZhou17}, but these would lead to a limiting distribution of $\chi^2_{(m)}$ with $m \gg 1$ degrees of freedom, resulting in a much looser confidence interval estimate than 
\thmref{thm:asymptotic_coverage}.

Note that \thmref{thm:asymptotic_coverage} can also be specialized to multi-armed contextual bandits to achieve a tighter confidence interval estimate in this special case. In particular, for contextual bandits, the stationary distribution constraint in~\eqref{eq:dual_lp_embedding}, $\EE_{w}\sbr{\Delta\rbr{x;\tau, \phi}} = 0$, is no longer needed, and can be replaced by $\EE_{w}\sbr{\tau - 1} = 0$. Then by the same technique used for MDPs, we can obtain a confidence interval estimate for offline contextual bandits; see details in~\appref{appendix:coin_bandit}. Interestingly, the resulting confidence interval estimate not only has the same asymptotic coverage as previous work \citep{KarLanMin19}, but is also simpler and computationally more efficient.

%%%--------------------------------------------------------------
\subsection{Computing the Confidence Interval}\label{sec:comp}
%%%--------------------------------------------------------------

Now we provide a distributional robust optimization view of the upper and lower bounds of $C_{n, \confrange}^f$.
\begin{theorem}[Upper and lower confidence bounds]\label{thm:upper_lower}
Denote the upper and lower confidence bounds of $C_{n, \confrange}^f$ by $u_n$ and $l_n$, respectively:
\begin{eqnarray}
[l_n, u_n] &=& \sbr{{\color{blue}\min_{w\in\Kcal_f}}\min_{\beta\in \RR^{p}}\max_{\tau\ge 0} \EE_w\sbr{\ell\rbr{x; \tau, \beta} },
\quad
{\color{red}\max_{w\in\Kcal_f}}\max_{\tau\ge 0}\min_{\beta\in \RR^{p}} \EE_w\sbr{\ell\rbr{x; \tau, \beta}}
},\label{eq:upper_lower_outer}\\
&=&\sbr{\min_{\beta\in \RR^{p}}\max_{\tau\ge 0}{\color{blue}\min_{w\in\Kcal_f}} \EE_w\sbr{\ell\rbr{x; \tau, \beta} },
\quad
\max_{\tau\ge 0}\min_{\beta\in \RR^{p}}{\color{red}\max_{w\in\Kcal_f}} \EE_w\sbr{\ell\rbr{x; \tau, \beta}}\label{eq:upper_lower}
},
\end{eqnarray}
where $\ell\rbr{x; \tau, \beta} \defeq \tau\cdot r + \beta^\top \Delta\rbr{x; \tau, \phi}$.
For any $\rbr{\tau, \beta, \lambda, \eta}$ that satisfies the constraints in
\eqref{eq:upper_lower}, the optimal weights for the upper and lower confidence bounds are
\begin{equation}\label{eq:opt_weights}
w_l = f'_*\rbr{\frac{\eta - \ell\rbr{x;\tau, \beta}}{\lambda}}
\quad\text{and}\quad
w_u = f'_*\rbr{\frac{\ell\rbr{x; \tau, \beta} - \eta}{\lambda}}.
\end{equation}
respectively.  Therefore, the confidence bounds can be simplified as:
\begin{eqnarray}\label{eq:simple_upper_lower}
\begin{bmatrix}
l_n \\
u_n
\end{bmatrix}
=
\begin{bmatrix}
\min_{\beta}\max_{\tau\ge 0, \lambda\ge 0, \eta} \EE_{\Dcal}\sbr{-\lambda f_*\rbr{\frac{\eta - \ell\rbr{x; \tau, \beta}}{\lambda}} + \eta - \lambda\frac{\confrange}{n}}\\
\max_{\tau\ge 0} \min_{\beta,\lambda\ge 0, \eta} \EE_{\Dcal}\sbr{\lambda f_*\rbr{\frac{\ell\rbr{x; \tau, \beta} - \eta}{\lambda}} + \eta + \lambda\frac{\confrange}{n} }
\end{bmatrix}
.
\end{eqnarray}
\end{theorem}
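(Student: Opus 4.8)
The plan is to build up the two displayed chains of identities in three stages. \emph{First}, for each fixed $w\in\Kcal_f$ I rewrite the constrained inner problem defining $\tilde\rho_\pi(w)$ as an unconstrained minimax in $(\tau,\beta)$, obtaining~\eqref{eq:upper_lower_outer}. \emph{Second}, I push the optimization over $w$ inside, past the $\tau$- and $\beta$-optimizations, using a minimax theorem, obtaining~\eqref{eq:upper_lower}. \emph{Third}, I solve the resulting innermost $f$-divergence-constrained linear program over $w$ in closed form by Fenchel duality, which simultaneously produces~\eqref{eq:simple_upper_lower} and the optimal weights~\eqref{eq:opt_weights}.

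For the first stage, fix $w\in\Kcal_f$. Since $\Delta\rbr{x;\tau,\phi}$ is affine in $\tau$ and $\tau\cdot r$ is linear in $\tau$, the quantity $\tilde\rho_\pi(w)=\max_{\tau\ge 0}\cbr{\EE_w\sbr{\tau\cdot r}:\EE_w\sbr{\Delta\rbr{x;\tau,\phi}}=0}$ is a linear program in $\tau$; attaching a multiplier $\beta\in\RR^p$ to the equality constraint and using $\min_{\beta}\beta^\top v=-\infty\cdot\1{v\neq 0}$ to turn the inner minimum into the indicator of the constraint set, we get the identity $\tilde\rho_\pi(w)=\max_{\tau\ge 0}\min_{\beta}\EE_w\sbr{\ell\rbr{x;\tau,\beta}}$, valid with no extra hypotheses. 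When moreover this LP is feasible and bounded --- the same regularity that makes $C_{n,\confrange}^f$ a well-defined bounded interval, and which follows from the \qlp duality of~\citet{NacDaiKosChoetal19} --- strong duality additionally yields $\tilde\rho_\pi(w)=\min_{\beta}\max_{\tau\ge 0}\EE_w\sbr{\ell\rbr{x;\tau,\beta}}$. Substituting the first form into $u_n=\max_{w\in\Kcal_f}\tilde\rho_\pi(w)$ and the second into $l_n=\min_{w\in\Kcal_f}\tilde\rho_\pi(w)$ gives exactly~\eqref{eq:upper_lower_outer}. For the second stage, note that $\rbr{w,\tau,\beta}\mapsto\EE_w\sbr{\ell\rbr{x;\tau,\beta}}$ is affine in each argument separately ($\ell$ is affine in $\tau$, linear in $\beta$, and $\EE_w$ is linear in $w$), while $\Kcal_f$ is compact and convex, being the simplex supported on $\Dcal$ intersected with a sublevel set of the lower-semicontinuous convex map $w\mapsto D_f\rbr{w\|\phat_n}$. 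Sion's minimax theorem then licenses swapping $\max_{w\in\Kcal_f}$ with $\min_{\beta}$ at any fixed $\tau$ (for $u_n$) and swapping $\min_{w\in\Kcal_f}$ with $\max_{\tau\ge 0}$ at any fixed $\beta$ (for $l_n$); the remaining reorderings are between two minima or two maxima and are immediate, which gives~\eqref{eq:upper_lower}.

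For the third stage, fix $\rbr{\tau,\beta}$ and abbreviate $g\rbr{x}\defeq\ell\rbr{x;\tau,\beta}$. Because $\confrange>0$, the point $w=\phat_n$ is strictly feasible, so Slater's condition holds and strong duality applies to $\max_{w\in\Kcal_f}\EE_w\sbr{g}$: introducing $\lambda\ge 0$ for the $f$-divergence constraint and $\eta$ for the normalization $\sum_i w_i=1$, passing to the density ratio $t\rbr{x}=w\rbr{x}/\phat_n\rbr{x}$, and applying the Fenchel identity $\sup_{t\ge 0}\rbr{t\,a-\lambda f\rbr{t}}=\lambda f_*\rbr{a/\lambda}$ (with the usual recession-function convention at $\lambda=0$), we obtain $\max_{w\in\Kcal_f}\EE_w\sbr{g}=\min_{\lambda\ge 0,\eta}\EE_{\Dcal}\sbr{\lambda f_*\rbr{\frac{g-\eta}{\lambda}}}+\eta+\lambda\frac{\confrange}{n}$, with the Fenchel optimality condition giving the maximizing ratio $w_u=f'_*\rbr{\frac{g-\eta}{\lambda}}$. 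Applying the same computation to $\min_{w\in\Kcal_f}\EE_w\sbr{g}=-\max_{w\in\Kcal_f}\EE_w\sbr{-g}$, and relabeling $\eta\mapsto-\eta$, gives the lower-bound expression together with $w_l=f'_*\rbr{\frac{\eta-g}{\lambda}}$. Substituting these two closed forms back into~\eqref{eq:upper_lower} and merging the like-signed optimizations delivers~\eqref{eq:simple_upper_lower} and~\eqref{eq:opt_weights}.

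I expect the main obstacle to be the coupling of the first two stages in the \emph{lower}-bound chain: because $\tau$ ranges over the unbounded cone $\cbr{\tau\ge 0}$ and $\beta$ over all of $\RR^p$, the interchange $\max_{\tau\ge 0}\min_\beta=\min_\beta\max_{\tau\ge 0}$ implicit in writing $\tilde\rho_\pi(w)=\min_\beta\max_{\tau\ge 0}\EE_w\sbr{\ell}$ cannot be obtained from a compactness-based minimax theorem and must instead be underwritten by \qlp strong duality, which forces one to check (or assume) that the embedded LP is feasible and bounded on the support of every $w\in\Kcal_f$; in the infinite-dimensional feature regime $p=\infty$ this needs a Slater-type condition on the embedded constraints rather than a plain finite-LP argument. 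Once that point is secured, the surviving interchanges involve only the compact set $\Kcal_f$ with affine integrands, so Sion's theorem applies without friction, and the inner $f$-divergence subproblem is a textbook distributionally-robust dual whose only delicate point is the boundary behavior as $\lambda\to 0^+$, handled by the usual perspective/recession convention.
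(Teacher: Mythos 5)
Your proposal is correct and follows essentially the same route as the paper's proof: a Lagrangian rewrite of $\tilde\rho_\pi(w)$ as a $(\tau,\beta)$ saddle point, interchange of the $w$-optimization with the inner optimizations via convexity/compactness of $\Kcal_f$, and Lagrangian--Fenchel duality on the $f$-divergence-constrained inner problem to obtain~\eqref{eq:simple_upper_lower} and the weights~\eqref{eq:opt_weights}. Your treatment is in fact somewhat more careful than the paper's, in that you explicitly flag which interchanges come for free (two maxima, or the weak $\max_\tau\min_\beta$ form of the Lagrangian) versus which require LP strong duality or Sion's theorem, and you note the Slater and $\lambda\to 0^+$ recession issues that the paper's appendix passes over silently.
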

The proof of this result relies on Lagrangian duality and the convexity and concavity of the optimization; it may be found in full detail in~\appref{appendix:upper_lower}. 

As we can see in~\thmref{thm:upper_lower}, by exploiting strong duality properties to move $w$ into the inner most optimizations in~\eqref{eq:upper_lower}, the obtained optimization~\eqref{eq:upper_lower} is the distributional robust optimization extenion of the saddle-point problem. The closed-form reweighting scheme is demonstrated in~\eqref{eq:opt_weights}. For particular $f$-divergences, such as the $KL$- and $2$-power divergences, for a fixed $(\beta, \tau)$, the optimal $\eta$ can be easily computed and the weights $w$ recovered in closed-form. For example, by using $KL\rbr{w||\phat_n}$, \eqref{eq:opt_weights} can be used to obtain the updates
\begin{align}
w_l\rbr{x}  = {\exp\rbr{\frac{\eta_l - \ell\rbr{x;\tau, \beta}}{\lambda}}}, 
\quad 
w_u\rbr{x}  = {\exp\rbr{\frac{\ell\rbr{x;\tau, \beta} - \eta_u}{\lambda}}}
,
\end{align}
where $\eta_l$ and $\eta_u$ provide the normalizing constants. (For closed-form updates of $w$ w.r.t.\ other $f$-divergences, please refer to~\appref{appendix:closed_form_reweight}.) Plug the closed-form of optimal weights into~\eqref{eq:upper_lower}, this greatly simplifies the optimization over the data perturbations yielding~\eqref{eq:simple_upper_lower}, and estabilishes the connection to the prioritized experiences replay~\citep{SchQuaAntSil16}, where both reweight the experience data according to their loss, but with different reweighting schemes. 

Note that it is straightforward to check that the estimator for $u_n$ in~\eqref{eq:simple_upper_lower} is nonconvex-concave and the estimator for $l_n$ in~\eqref{eq:simple_upper_lower} is nonconcave-convex. Therefore, one could alternatively apply stochastic gradient descent-ascent~(SGDA) for to solve~\eqref{eq:simple_upper_lower} and benefit from attractive finite-step convergence guarantees~\citep{LinJinJor19}. 

\vspace{-2mm}
\paragraph{Remark (Practical considerations):} As also observed in~\citet{NamDuc16}, SGDA for~\eqref{eq:simple_upper_lower} could potentially suffer from high variance in both the objective and gradients when $\lambda$ approaches $0$. In~\appref{appendix:practical_algorithm}, we exploit several properties of~\eqref{eq:upper_lower}, which leads to a computational efficient algorithm, to overcome the numerical issue. Please refer to~\appref{appendix:practical_algorithm} for the details of~\algref{alg:coindice} and the practical considerations. 

\vspace{-2mm}
\paragraph{Remark (Joint learning for feature embeddings):} The proposed framework also allows for the possibility to learn the features for constraint projection. In particular, consider $\zeta\rbr{\cdot, \cdot} \defeq \beta^\top\phi\rbr{\cdot, \cdot}: S\times A\rightarrow \RR$. Note that we could treat the combination $\beta^\top\phi\rbr{s, a}$ together as the Lagrange multiplier function for the original \qlp with \emph{infinitely} many constraints, hence both $\beta$ and $\phi\rbr{\cdot, \cdot}$ could be updated jointly. Although the conditions for asymptotic coverage no longer hold, the finite-sample correction results of the next section are still applicable. This might offer an interesting way to reduce the approximation error introduced by inappropriate feature embeddings of the constraints, while still maintaining calibrated confidence intervals.

%%%%%%%%%%%%%%%%%%%%%%%%%%%%%%%%%%%%%%%%%%%%%%%%%%%%%%%%%%%%%%%%%
\section{Finite-sample Analysis}\label{sec:analysis}
%%%%%%%%%%%%%%%%%%%%%%%%%%%%%%%%%%%%%%%%%%%%%%%%%%%%%%%%%%%%%%%%%

\thmref{thm:asymptotic_coverage} establishes the asymptotic $\rbr{1-\alpha}$-coverage of the confidence interval estimates produced by~\estname, ignoring higher-order error terms that vanish as sample size $n\to\infty$.  In practice, however, $n$ is always finite, so it is important to quantify these higher-order terms. This section addresses this problem, and presents a finite-sample bound for the estimate of \estname. In the following, we let $\Fcal_\tau$ and $\Fcal_\beta$ be the function classes of $\tau$ and $\beta$ used by \estname.
\newcommand{\informalthmfinitesample}{
Denote by $d_{\Fcal_\tau}$ and $d_{\Fcal_\beta}$ the finite \texttt{VC}-dimension of $\Fcal_\tau$ and $\Fcal_\beta$, respectively. 
Under some mild conditions, when $D_f$ is $\chi^2$-divergence, we have
$$
\PP\rbr{\rho_\pi\in [l_n-\kappa_n, u_n + \kappa_n]} \ge 1 - 12\exp\rbr{c_1 + 2\rbr{d_{\Fcal_\tau} + d_{\Fcal_\beta} -1}\log n - \frac{\confrange}{18}},
$$
where $c_1 = 2c + \log d_{\Fcal_\tau} + \log d_{\Fcal_\beta} + \rbr{d_{\Fcal_\tau}  + d_{\Fcal_\beta} -1}$, $\kappa_n = \frac{11M\confrange}{6n} + 2\frac{C_\ell M}{n}\rbr{1 + 2\sqrt{\frac{\confrange}{9n}}}$, and $\rbr{c, M, C_\ell}$ are univeral constants.  
}
\vspace{-2mm}
\begin{theorem}[Informal finite-sample correction]\label{thm:finite_sample}
\informalthmfinitesample
\vspace{-2mm}
\end{theorem}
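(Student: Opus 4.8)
The plan is to bound the gap between the finite-sample confidence interval $C^f_{n,\xi}$ (or equivalently the bounds $l_n,u_n$ from \thmref{thm:upper_lower}) and the ideal interval one would get if the empirical distribution $\phat_n$ were replaced by the true distribution, and then to show that this ideal interval covers $\rho_\pi$ with the required probability. Concretely, I would first fix the $\chi^2$-divergence and use the closed-form expressions in~\eqref{eq:simple_upper_lower}: because $f_*$ is then a quadratic, the inner optimization over $\eta$ (and the Lagrange multiplier $\lambda$) can be carried out explicitly, reducing $u_n$ to something of the form $\max_\tau\min_\beta\,\EE_{\Dcal}[\ell] + \sqrt{\tfrac{2\xi}{n}\,\mathrm{Var}_{\Dcal}(\ell)} + O(1/n)$, and symmetrically for $l_n$. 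This exposes the two sources of finite-sample error: the $\sqrt{\xi/n}$ ``profile'' term, which is exactly what gives asymptotic coverage, and the $O(1/n)$ remainder, which will become part of $\kappa_n$.

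Second, I would replace empirical quantities by population ones uniformly over the function classes $\Fcal_\tau$ and $\Fcal_\beta$. This is where the \texttt{VC}-dimension hypotheses enter: standard uniform-convergence (VC / Rademacher) bounds give, with the stated failure probability, $\sup_{\tau,\beta}\abr{\EE_{\Dcal}[\ell] - \EE[\ell]} \le c\,M\sqrt{(d_{\Fcal_\tau}+d_{\Fcal_\beta})\log n / n}$ and a similar bound for the empirical variance $\mathrm{Var}_{\Dcal}(\ell)$; boundedness of $\ell$ (from $\tau,\beta$ bounded and $r\le\rmax$, cf.~\asmpref{asmp:bounded_ratio} and~\ref{asmp:phi_regularity}) supplies the constant $M$. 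Multiplying these deviations against the $\sqrt{\xi/n}$ factor and collecting terms produces the claimed $\kappa_n = \tfrac{11M\xi}{6n} + 2\tfrac{C_\ell M}{n}(1+2\sqrt{\xi/(9n)})$; the somewhat baroque numerical constants presumably come from carefully splitting the variance-deviation term into a $\xi/n$ piece and a cross piece. The counting of $12$ terms and the $2(d_{\Fcal_\tau}+d_{\Fcal_\beta}-1)\log n$ exponent in the probability bound comes from applying the VC bound to several related function classes (the loss $\ell$, its square, and the shifted versions for $l_n$ and $u_n$) and a union bound, with the $-\xi/18$ term tracking how the confidence level $\xi$ trades off against the deviation in the profile term.

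Third, having controlled the difference between $[l_n,u_n]$ and the population-level interval, I would invoke (a finite-$n$ version of) \thmref{thm:asymptotic_coverage}: the population profile statistic $2n\cdot(\text{empirical-likelihood ratio at }\rho_\pi)$ is, up to the deviations just bounded, at most $\xi$ whenever $\rho_\pi$ lies in the interval, and its tail is controlled by a Bernstein-type argument rather than the asymptotic $\chi^2_{(1)}$. Combining the ``interval is wide enough'' estimate from Step 2 with this tail bound yields $\PP(\rho_\pi \in [l_n-\kappa_n, u_n+\kappa_n]) \ge 1 - (\text{the stated exponential})$.

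The main obstacle I expect is Step 2, specifically getting the \emph{variance} term right: the natural EL/DRO analysis controls $\EE_\Dcal[\ell]$ easily, but the $\sqrt{\xi/n}$ correction is driven by $\mathrm{Var}_\Dcal(\ell)$, and one needs a uniform (over $\Fcal_\tau\times\Fcal_\beta$) lower \emph{and} upper bound on this empirical variance, together with care that the inner minimax over $(\tau,\beta)$ does not inflate the deviation — the order of the $\min_\beta\max_\tau$ versus $\max_\tau\min_\beta$ matters, and one must check that the saddle-point value is Lipschitz in the underlying measure so that perturbing $\phat_n$ to the true distribution moves $l_n,u_n$ by no more than $\kappa_n$. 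A secondary subtlety is that near $\lambda\to 0$ the closed-form manipulations degenerate (the same numerical issue flagged in the practical-algorithm remark), so the argument must restrict to the regime $\lambda\gtrsim 1/\sqrt{n}$ and argue separately that the optimum is attained there, which is presumably where part of the constant $c_1$ and the factor $12$ originate.
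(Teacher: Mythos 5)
There is a genuine gap in your Step 2, and it is exactly the step that carries the theorem's main content: the $\Ocal\rbr{1/n}$ rate of $\kappa_n$. You propose to pass from empirical to population quantities via a standard uniform VC bound $\sup_{\tau,\beta}\abr{\EE_{\Dcal}\sbr{\ell}-\EE\sbr{\ell}}\le cM\sqrt{(d_{\Fcal_\tau}+d_{\Fcal_\beta})\log n/n}$ and then "collect terms." But that deviation is additive in the comparison between $[l_n,u_n]$ and the population interval, so it would force $\kappa_n=\Ocal\rbr{\sqrt{\log n/n}}$, not the claimed $\frac{11M\confrange}{6n}+\Ocal\rbr{1/n}$; no amount of multiplying it against the $\sqrt{\confrange/n}$ profile width recovers the fast rate. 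The paper's argument hinges on a cancellation you do not articulate: by the Namkoong--Duchi variance expansion, the $\chi^2$-DRO upper bound \emph{exceeds} the empirical mean by exactly $\sqrt{\confrange\,\mathrm{Var}_{\phat_n}(\ell)/n}$ (up to an $M\confrange/n$ truncation term), while the uniform \emph{empirical Bernstein} bound (Maurer--Pontil, over an $\ell_\infty$-cover of $\Fcal_\tau\times\Fcal_\beta$) controls $\EE\sbr{\ell}-\EE_{\phat_n}\sbr{\ell}$ by $\sqrt{18\,\mathrm{Var}_{\phat_n}(\ell)\,t/n}+15Mt/n+\ldots$. Choosing $t\le\confrange/18$ makes the Bernstein variance term no larger than the width the DRO interval already provides, so the two $\sqrt{\mathrm{Var}/n}$ terms cancel and only the $\Ocal\rbr{Mt/n}$ and $\Ocal\rbr{M\confrange/n}$ remainders survive as $\kappa_n$; this choice of $t$ is also where the $e^{-\confrange/18}$ failure probability comes from. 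Without replacing your generic $\sqrt{d\log n/n}$ bound by a variance-sensitive (Bernstein-type) bound that can be absorbed into the DRO width, the proof does not go through at the stated rate. The other ingredient you omit is the one-sided use of saddle-point optimality, $\rho_\pi=\EE_{d^\Dcal}\sbr{\ell(x;\tau^*,\beta^*)}\le\EE_{d^\Dcal}\sbr{\ell(x;\tau^*,\hat\beta^*)}$, which is what lets the concentration argument close directly onto the minimax value.

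Two smaller points. Your Step 3 (a finite-$n$ tail bound on the empirical-likelihood profile statistic, i.e., a quantitative version of \thmref{thm:asymptotic_coverage}) is not needed in the paper's route and would be an entirely separate, harder argument: coverage follows deterministically on the high-probability event of the Bernstein bound, and the factor $12$ is just $2\times 6$ from the upper/lower union bound times the constant in the covering-number Bernstein inequality, not from covering $\ell$ and $\ell^2$ separately. Likewise, the $\lambda\to 0$ degeneracy you worry about never arises, because the paper works with the exact variance representation of the $\chi^2$-DRO value rather than with the $(\lambda,\eta)$-parametrized dual form of~\eqref{eq:simple_upper_lower}.
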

The precise statement and detailed proof of~\thmref{thm:finite_sample} can be found in~\appref{appendix:finite_sample}. The proof relies on empirical Bernstein bounds with a careful analysis of the variance term. Compared to the vanilla sample complexity of $\Ocal\rbr{\frac{1}{\sqrt{n}}}$, we achieve a faster rate of $\Ocal\rbr{\frac{1}{n}}$ without any additional assumptions on the noise or curvature conditions. The tight sample complexity in \thmref{thm:finite_sample} implies that one can construct the $\rbr{1 -\alpha}$-finite sample confidence interval by optimizing~\eqref{eq:upper_lower} with $\confrange = 18\rbr{\log \frac{\alpha}{12} - c_1 - 2\rbr{d_{\Fcal_\tau} + d_{\Fcal_\beta} - 1}\log n}$, and composing with $\kappa_n$. However, we observe that this bound can be conservative compared to the asymptotic confidence interval in \thmref{thm:asymptotic_coverage}. Therefore, we will evaluate the asymptotic version of \estname based on \thmref{thm:asymptotic_coverage} in the experiment.

The conservativeness arises from the use of a union bound. However, we conjecture that the rate is optimal up to a constant. We exploit the VC dimension due to its generality. In fact, the bound can be improved by considering a data-dependent measure, \eg, Rademacher complexity, or by some function class dependent measure, \eg, function norm in RKHS, for specific function approximators.

%%%%%%%%%%%%%%%%%%%%%%%%%%%%%%%%%%%%%%%%%%%%%%%%%%%%%%%%%%%%%%%%%
\section{Optimism vs. Pessimism Principle}\label{sec:opt_pes}
%%%%%%%%%%%%%%%%%%%%%%%%%%%%%%%%%%%%%%%%%%%%%%%%%%%%%%%%%%%%%%%%%

\estname provide both upper and lower bounds of the target policy's estimated value, which paves the path for applying the principle of optimism~\citep{LatSze20} or pessimism~\citep{SwaJoa15} in the face of uncertainty for policy optimization in different learning settings. 

\vspace{-2mm}
\paragraph{Optimism in the face of uncertainty.}  Optimism in the face of uncertainty leads to \emph{risk-seeking} algorithms, which can be used to balance the exploration/exploitation trade-off. Conceptually, they always treat the environment as the best plausibly possible. This principle has been successfully applied to stochastic bandit problems, leading to many instantiations of UCB algorithms~\citep{LatSze20}. In each round, an action is selected  according to the upper confidence bound, and the obtained reward will be used to refine the confidence bound iteratively. When applied to MDPs, this principle inspires many optimistic model-based~\citep{BarMen02,AueJakOrt09,Strehl09Pac,SziSze10,DanLatBru17}, value-based~\citep{JinAllBubJor18}, and policy-based algorithms~\citep{CaiYanJinWan19}. Most of these algorithms are not compatible with function approximators. 

We can also implement the optimism principle by optimizing the upper bound in~\estname iteratively, \ie, $\max_{\pi}u_\Dcal\rbr{\pi}$. In $t$-th iteration, we calculate the gradient of $u_{\Dcal}\rbr{\pi^t}$, \ie, $\nabla_{\pi}u_{\Dcal}\rbr{\pi^t}$, based on the existing dataset $\Dcal_t$, then, the policy $\pi_{t}$ will be updated by (natural) policy gradient and samples will be collected through the updated policy $\pi_{t+1}$. Please refer to~\appref{appendix:opt_pes} for the gradient computation and algorithm details. 
% The regret of the proposed algorithm is out of the scope of this paper, and we leave as our future work.

\vspace{-2mm}
\paragraph{Pessimism in the face of uncertainty.} In offline reinforcement learning~\citep{LanGabRie12,FujMegPre19,WuTucNac19,NacDaiKosChoetal19}, only a fixed set of data from behavior policies is given, a safe optimization criterion is to maximize the worst-case performance among a set of statistically plausible models~\citep{LarTriDes19,KumFuSohTucetal19,YuThoYuErmetal20}. In contrast to the previous case of online exploration, this is a pessimism principle~\citep{CohHut20,BucGelBel20} or counterfactual risk minimization~\citep{SwaJoa15}, and highly related to robust MDP~\citep{Iyengar05,NilEl05,TamXuMan13,ChoTamManPav15}.

Different from most of the existing methods where the worst-case performance is characterized by model-based perturbation or ensemble, the proposed~\estname provides a lower bound to implement the pessimism principle, \ie, $\max_\pi l_\Dcal\rbr{\pi}$. Conceptually, we apply the (natural) policy gradient w.r.t. $l_\Dcal\rbr{\pi^t}$ to update the policy iteratively. Since we are dealing with policy optimization in the offline setting, the dataset $\Dcal$ keeps unchanged. Please refer to~\appref{appendix:opt_pes} for the algorithm details.

%%%%%%%%%%%%%%%%%%%%%%%%%%%%%%%%%%%%%%%%%%%%%%%%%%%%%%%%%%%%%%%%%
\section{Related Work}\label{sec:related_work}
%%%%%%%%%%%%%%%%%%%%%%%%%%%%%%%%%%%%%%%%%%%%%%%%%%%%%%%%%%%%%%%%%

Off-policy estimation has been extensively studied in the literature, given its practical importance.  Most existing methods are based on the core idea of mportance reweighting to correct for distribution mismatches between the target policy and the off-policy data~\citep{PreSutSin00,BotPetQuiChaetal13,LiMunSze15,xie19optimal}. Unfortunately, when applied naively, importance reweighting can result in an excessively high variance, which is known as the ``curse of horizon''~\citep{LiuLiTanZho18}. To avoid this drawback, there has been rapidly growing interest in estimating the correction ratio of the \emph{stationary} distribution~\citep[e.g.,][]{LiuLiTanZho18,NacChoDaiLi19,UehHuaJia19,liu19understanding,ZhaDaiLiSch20,zhang20gradientdice}. This work is along the same line and thus applicable in long-horizon problems.  Other off-policy approaches are also possible, notably model-based~\citep[e.g.,][]{fonteneau13batch} and doubly robust methods~\citep{jiang16doubly,thomas16data,tang20doubly,UehHuaJia19}.  These techniques can potentially be combined with our algorithm, which we leave for future investigation.

While most OPE works focus on obtaining accurate \emph{point} estimates, several authors provide ways to quantify the amount of uncertainty in the OPE estimates.  In particular, confidence bounds have been developed using the central limit theorem~\citep{BotPetQuiChaetal13}, concentration inequalities~\citep{thomas2015high,kuzborskij2020confident}, and nonparametric methods such as the bootstrap~\citep{thomas2015high_opt,hanna17bootstrapping}. In contrast to these works, the CoinDICE is asymptotically pivotal, meaning that there are no hidden quantities we need to estimate, which is based on correcting for the stationary distribution in the \emph{behavior-agnostic} setting, thus avoiding the curse of horizon and broadening the application of the uncertainty estimator. Recently, \citet{jiang20minimax} provide confidence intervals for OPE, but focus on the intervals determined by the \emph{approximation error} induced by a function approximator, while our confidence intervals quantify \emph{statistical error}.

Empirical likelihood~\citep{Owen01} is a powerful tool with many applications
in statistical inference like econometrics~\citep{chen18monte}, and more recently in distributionally robust optimization~\citep{DucGlyNam16,LamZhou17}. EL-based confidence intervals can be used to guide exploration in multiarmed bandits~\citep{HonTak10,CapGarMaiMunetal13}, and for OPE~\citep{KarLanMin19,kallus19intrinsically}. While the work of \citet{kallus19intrinsically} is also based on EL, it differs from the present work in two important ways. First, their focus is on developing an asymptotically efficient OPE \emph{point} estimate, not confidence intervals.  Second, they solve for timestep-dependent weights, whereas we only need to solve for timestep-\emph{independent} weights from a system of moment matching equations induced by an underlying ergodic Markov chain.

%%%%%%%%%%%%%%%%%%%%%%%%%%%%%%%%%%%%%%%%%%%%%%%%%%%%%%%%%%%%%%%%%
\section{Experiments}\label{sec:experiments}
%%%%%%%%%%%%%%%%%%%%%%%%%%%%%%%%%%%%%%%%%%%%%%%%%%%%%%%%%%%%%%%%%
%
\begin{figure}[h]
  \setlength{\tabcolsep}{0pt}
  \renewcommand{\arraystretch}{0.7}
  \begin{center}
    \begin{tabular}{lcccc}
      & \multicolumn{2}{c}{\bf FrozenLake}
      & \multicolumn{2}{c}{\bf Taxi} \\
    &
    \small $\text{\# trajectories}=50$ &
    \small $\text{\# trajectories}=100$ &
    \small $\text{\# trajectories}=20$ &
    \small $\text{\# trajectories}=50$ \\
    \rotatebox[origin=c]{90}{\tiny interval coverage\hspace{-0.85in}} &
    \includegraphics[width=0.2\textwidth]{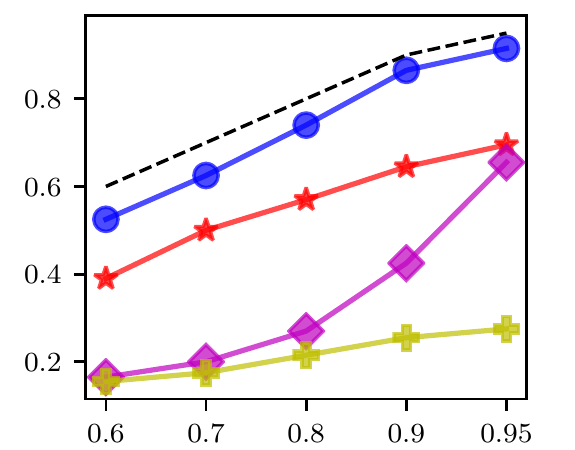} &
    \includegraphics[width=0.2\textwidth]{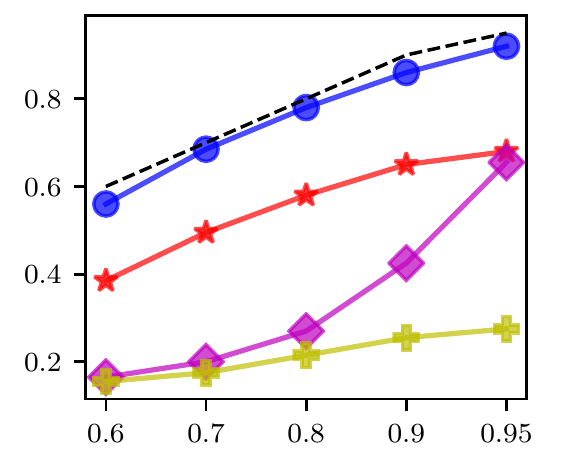} &
    \includegraphics[width=0.2\textwidth]{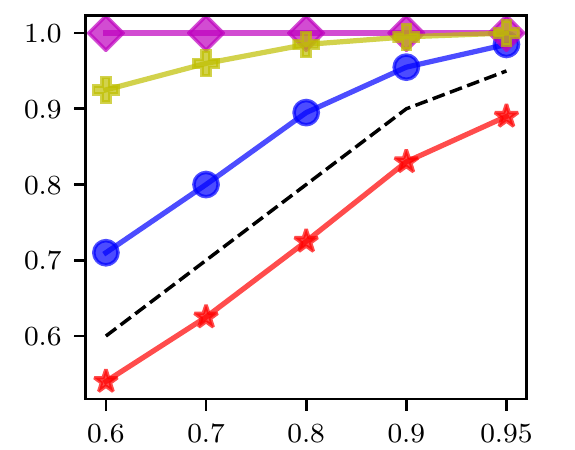} &
    \includegraphics[width=0.2\textwidth]{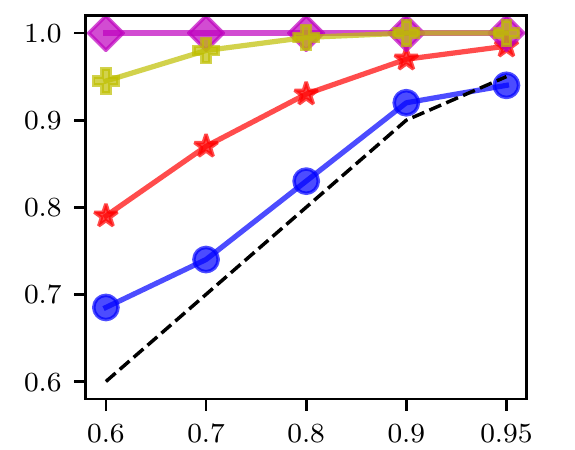} \\
    \rotatebox[origin=c]{90}{\tiny interval log-width\hspace{-0.85in}} &
    \includegraphics[width=0.2\textwidth]{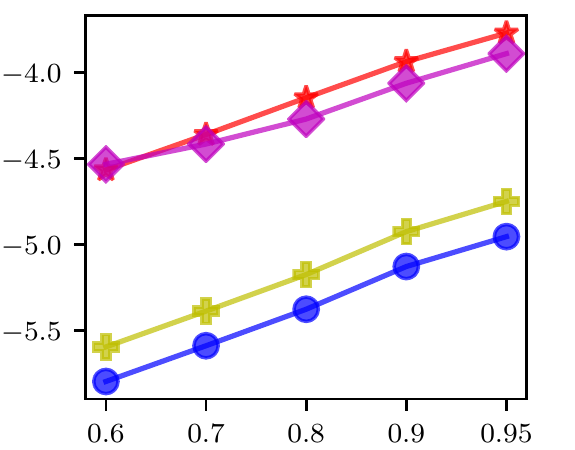} &
    \includegraphics[width=0.2\textwidth]{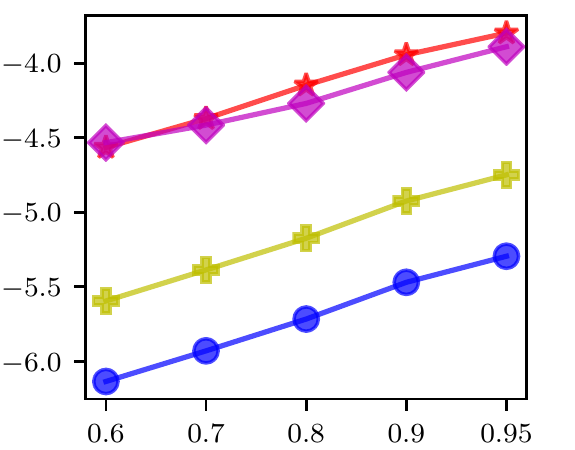} &
    \includegraphics[width=0.2\textwidth]{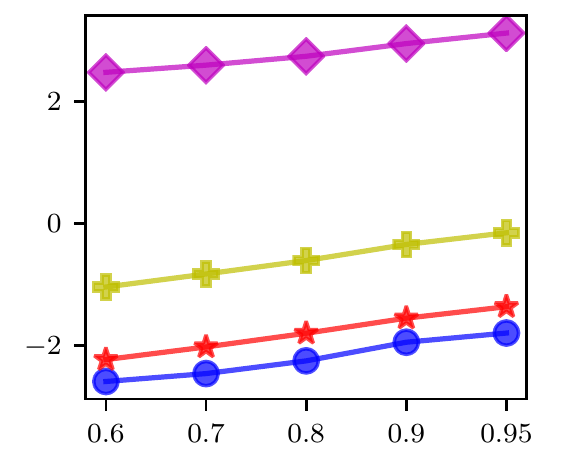} &
    \includegraphics[width=0.2\textwidth]{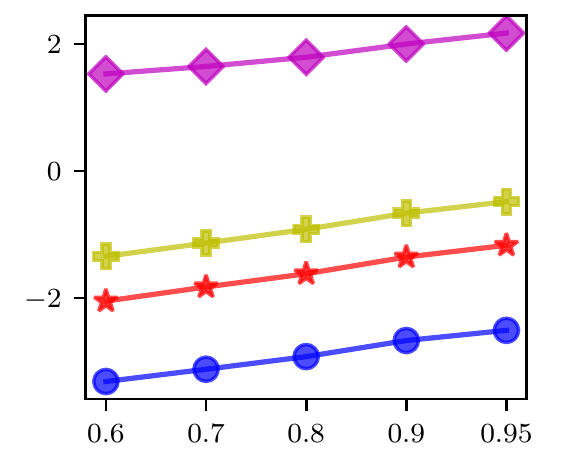} \\
      &  \multicolumn{4}{c}{\tiny Confidence level ($1 \!-\! \alpha$)} \\
      \vspace{-1mm}
    & \multicolumn{4}{c}{\includegraphics[width=0.55\columnwidth]{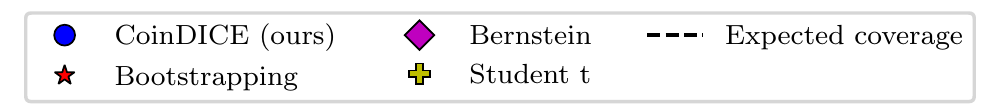}}
    \end{tabular}
  \end{center}
  \vspace{-5mm}
  \caption{Results of \estname and baseline methods on an infinite-horizon version of FrozenLake and Taxi. In FrozenLake, each dataset consists of trajectories of length $100$; in Taxi, each dataset consists of trajectories of length $500$.
  }
  \label{fig:lake-taxi}
  \vspace{-3mm}
\end{figure}

We now evaluate the empirical performance of \estname, comparing it to a number of existing confidence interval estimators for OPE based on concentration inequalities. Specifically, given a dataset of logged trajectories, we first use weighted step-wise importance sampling~\citep{PreSutSin00} to calculate a separate estimate of the target policy value for each trajectory. Then given such a finite sample of estimates, we then use the empirical \emph{Bernstein} inequality~\citep{thomas2015high} to derive high-confidence lower and upper bounds for the true value. Alternatively, one may also use \emph{Student's $t$-test} or Efron's bias corrected and accelerated \emph{bootstrap}~\citep{thomas2015high_opt}.

We begin with a simple bandit setting, devising a two-armed bandit problem with stochastic payoffs. We define the target policy as a near-optimal policy, which chooses the optimal arm with probability $0.95$. We collect off-policy data using a behavior policy which chooses the optimal arm with probability of only $0.55$. Our results are presented in Figure~\ref{fig:bandit}. We plot the empirical coverage and width of the estimated intervals across different confidence levels. More specifically, each data point in Figure~\ref{fig:bandit} is the result of $200$ experiments. In each experiment, we randomly sample a dataset and then compute a confidence interval. The \emph{interval coverage} is then computed as the proportion of intervals out of $200$ that contain the true value of the target policy. The \emph{interval $\log$-width} is the median of the log of the width of the $200$ computed intervals. Figure~\ref{fig:bandit} shows that the intervals produced by \estname achieve an empirical coverage close to the intended coverage. In this simple bandit setting, the coverages of Student's $t$ and bootstrapping are also close to correct, although they suffer more in the low-data regime. Notably, the width of the intervals produced by \estname are especially narrow while maintaining accurate coverage.
\begin{wrapfigure}{R}{0.6\textwidth}
  \setlength{\tabcolsep}{0pt}
  \renewcommand{\arraystretch}{0.7}
  \begin{center}
  \vspace{-6mm}
    \begin{tabular}{lccccc}
    &
    \small $\text{\# samples}=50$ &
    \small $\text{\# samples}=100$ &
    \small $\text{\# samples}=200$ \\
    \rotatebox[origin=c]{90}{\tiny interval coverage\hspace{-0.85in}} &
    \includegraphics[width=0.2\textwidth]{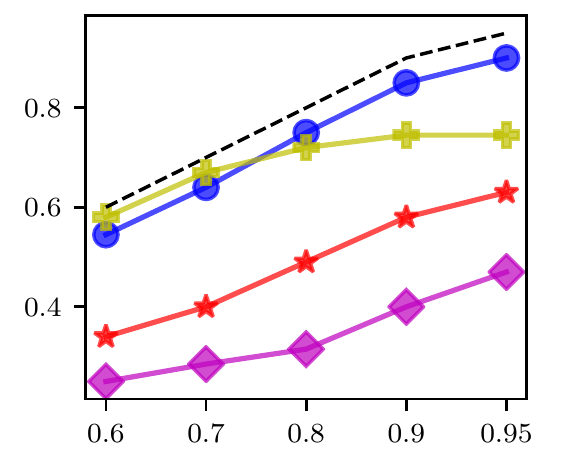} &
    \includegraphics[width=0.2\textwidth]{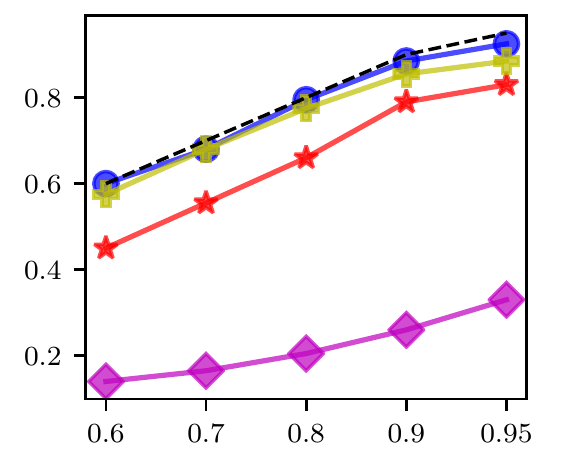} &
    \includegraphics[width=0.2\textwidth]{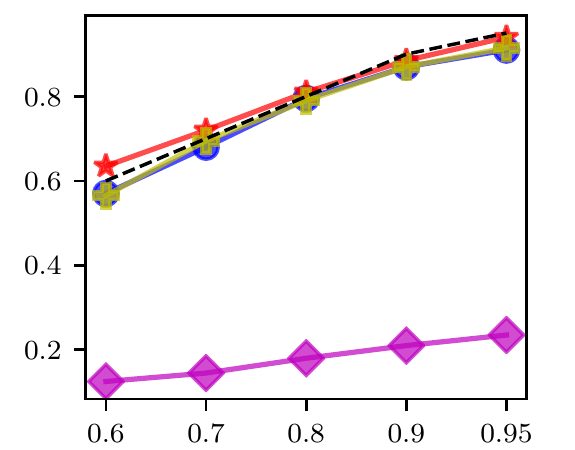} \\
    \rotatebox[origin=c]{90}{\tiny interval log-width\hspace{-0.85in}} &
    \includegraphics[width=0.2\textwidth]{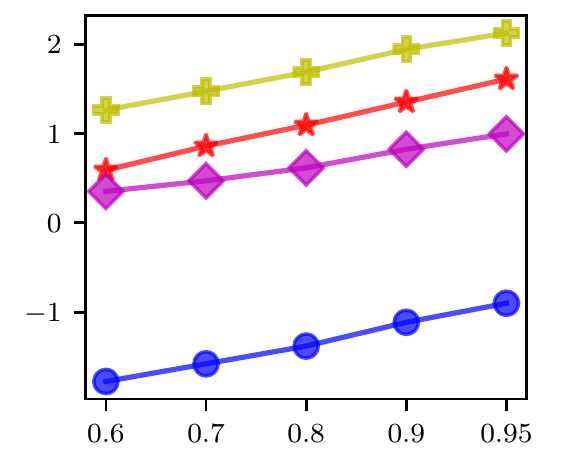} &
    \includegraphics[width=0.2\textwidth]{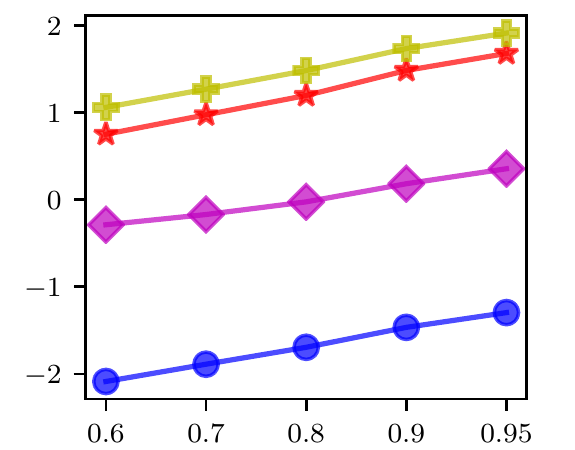} &
    \includegraphics[width=0.2\textwidth]{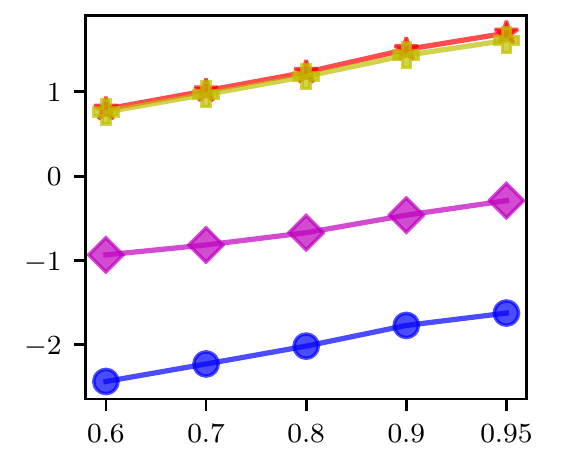} \\
      &  \multicolumn{3}{c}{\tiny Confidence level ($1 \!-\! \alpha$)} \\
      \vspace{-2mm}
    & \multicolumn{3}{c}{\includegraphics[width=0.55\columnwidth]{figs/legend.pdf}}
    \end{tabular}
  \end{center}
  \vspace{-2mm}
  \caption{Results of \estname and baseline methods on a simple two-armed bandit. We plot empirical coverage and median $\log$-width ($y$-axes) of intervals evaluated at a number of desired confidence levels ($x$-axis), as measured over 200 random trials. We find that \estname achieves more accurate coverage and narrower intervals compared to the baseline confidence interval estimation methods.
  }
  \label{fig:bandit}
  \vspace{-2mm}
\end{wrapfigure}

We now turn to more complicated MDP environments. We use FrozenLake~\citep{brockman2016openai}, a highly stochastic gridworld environment, and Taxi~\citep{Dietterich98b}, an environment with a moderate state space of $2\,000$ elements. As in~\citep{LiuLiTanZho18}, we modify these environments to be infinite horizon by randomly resetting the state upon termination. The discount factor is $\gamma=0.99$. The target policy is taken to be a near-optimal one, while the behavior policy is highly suboptimal.  The behavior policy in FrozenLake is the optimal policy with 0.2 white noise, which reduces the policy value dramatically, from ~0.74 to ~0.24. For the behavior policies in Taxi and Reacher, we follow the same experiment setting for constructing the behavior policies to collect data as in~\citep{NacChoDaiLi19,LiuLiTanZho18}.

We follow the same evaluation protocol as in the bandit setting, measuring empirical interval coverage and $\log$-width over $200$ experimental trials for various dataset sizes and confidence levels. Results are shown in Figure~\ref{fig:lake-taxi}. We find a similar conclusion that \estname consistently achieves more accurate coverage and smaller widths than baselines. Notably, the baseline methods' accuracy suffers more significantly compared to the simpler bandit setting described earlier.

\begin{figure}
  \vspace{-3mm}
    \centering
    \setlength{\tabcolsep}{0pt}
    \renewcommand{\arraystretch}{0.7}
    \begin{tabular}{lccc}
      \rotatebox[origin=c]{90}{\tiny interval coverage\hspace{-1.05in}} &
      \includegraphics[width=0.2\textwidth]{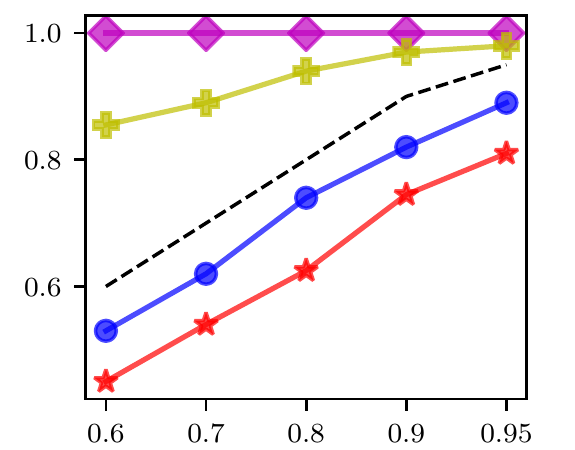} &
      \rotatebox[origin=c]{90}{\tiny interval log-width\hspace{-1.05in}} &
      \includegraphics[width=0.2\textwidth]{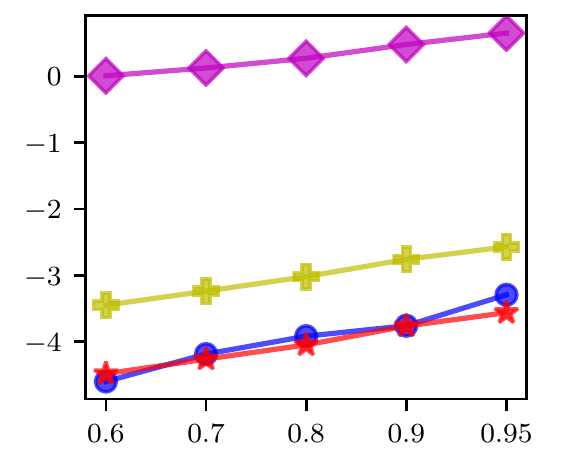} \\
      &  \multicolumn{3}{c}{\tiny Confidence level ($1 - \alpha$)}
    \end{tabular}
      \vspace{-1mm}
      \caption{Results of \estname and baseline methods on Reacher~\citep{brockman2016openai,TodEreTas12}, using $25$ trajectories of length $100$.
    Colors and markers are as defined in the legends of previous figures.}
    \label{fig:reacher}
    \vspace{-2mm}
\end{figure}
Lastly, we evaluate \estname on Reacher~\citep{brockman2016openai,TodEreTas12}, a continuous control environment. In this setting, we use a one-hidden-layer neural network with ReLU activations. Results are shown in Figure~\ref{fig:reacher}. To account for the approximation error of the used neural network, we measure the coverage of \estname with respect to a true value computed as the median of a large ensemble of neural networks trained on the off-policy data. To keep the comparison fair, we measure the coverage of the IS-based baselines with respect to a true value computed as the median of a large number of IS-based point estimates. The results show similar conclusions as before: \estname achieves more accurate coverage than the IS-based methods. Still, we see that \estname coverage suffers in this regime, likely due to optimization difficulties. If the optimum of the Lagrangian is only approximately found, the empirical coverage will inevitably be inexact.

%%%%%%%%%%%%%%%%%%%%%%%%%%%%%%%%%%%%%%%%%%%%%%%%%%%%%%%%%%%%%%%%%
\section{Conclusion}\label{sec:conclusion}
%%%%%%%%%%%%%%%%%%%%%%%%%%%%%%%%%%%%%%%%%%%%%%%%%%%%%%%%%%%%%%%%%

In this paper, we have developed \estname, a novel and efficient confidence interval estimator applicable to the \emph{behavior-agnostic offline} setting. The algorithm builds on a few technical components, including a new feature embedded \qlp, and a generalized empirical likelihood approach to confidence interval estimation. We analyzed the asymptotic coverage of \estname's estimate, and provided an inite-sample bound. On a variety of off-policy benchmarks we empirically compared the new algorithm with several strong baselines and found it to be superior to them.

\section*{Acknowledgements}
We thank Hanjun Dai, Mengjiao Yang and other members of the Google Brain team for helpful discussions. Csaba Szepesv\'ari gratefully acknowledges funding from the Canada CIFAR AI Chairs Program, Amii and NSERC.

%%%%%%%%%%%%%%%%%%%%%%%%%%%%%%%%%%%%%%%%%%%%%%%%%%%%%%%%%%%%%%%%%
% \section*{References}
%%%%%%%%%%%%%%%%%%%%%%%%%%%%%%%%%%%%%%%%%%%%%%%%%%%%%%%%%%%%%%%%%

%----------------------------------------------------------------------------------------------------------------------------------
%----------------------------------------------------------------------------------------------------------------------------------
\clearpage
\newpage

\appendix
\onecolumn

\begin{appendix}

\begin{center}
{\huge Appendix}
\end{center}

%%%%%%%%%%%%%%%%%%%%%%%%%%%%%%%%%%%%%%%%%%%%%%%%%%%%%%%%%%%%%%%%%
\section{Approximation Error Analysis}\label{appendix:approx_error}
%%%%%%%%%%%%%%%%%%%%%%%%%%%%%%%%%%%%%%%%%%%%%%%%%%%%%%%%%%%%%%%%%

In this section, we provide a complete proof of~\thmref{thm:linear_q_approx}, quantifying the effect of function embedding of constraints in dual \qlp.  The proof is an adaptation from the standard LP for state-value functions to the case of \qlp~\citep{FarVan03}.

We first provide an equivalent reformulation of the primal of the feature embedded LP, 
\begin{lemma}\label{lem:reform_opt}
The solution defined by
$$
\beta^* = \argmin_{\beta\in\RR^p} \cbr{\rbr{1-\gamma}\EE_{\mu_0\pi}\sbr{\beta^\top \phi\rbr{s_0, a_0}} |
\beta^\top\phi\rbr{s, a}\ge \bellman\rbr{\beta^\top\phi}\rbr{s, a}, \,\, \forall \rbr{s, a}\in S\times A
},
$$
with $\rbr{\bellman Q} \rbr{s, a}\defeq R\rbr{s, a} + \gamma\cdot \Pcal^\pi Q\rbr{s, a}$ is also the solution to
\begin{eqnarray}
\min_{\beta\in\RR^p}&& \nbr{Q^\pi - \beta^\top \phi}_{1, \mu_0\pi} \\
  \st &&\beta^\top\phi\rbr{s, a}\ge \bellman \rbr{\beta^\top\phi}\rbr{s, a}, \,\, \forall \rbr{s, a}\in S\times A,\nonumber
\end{eqnarray}
where $\nbr{f}_{1, \mu_0\pi}\defeq \int \abr{f\rbr{s, a}}\mu_0\rbr{s}\pi\rbr{a|s}dsda$. 
\end{lemma}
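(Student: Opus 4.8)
The plan is to show that the two optimization problems in Lemma~\ref{lem:reform_opt} have \emph{identical feasible sets} and objectives that agree up to an additive constant and a positive rescaling, so that they share the same minimizer $\beta^*$. The constraint $\beta^\top\phi\rbr{s, a}\ge \bellman\rbr{\beta^\top\phi}\rbr{s, a}$ for all $\rbr{s, a}$ is literally the same in both problems, so it suffices to compare the objectives over the common feasible set $\Fcal \defeq \cbr{\beta\in\RR^p : \beta^\top\phi\ge \bellman\rbr{\beta^\top\phi}}$.

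The key step is to establish that every feasible iterate dominates $Q^\pi$ pointwise: if $Q \defeq \beta^\top\phi$ with $\beta\in\Fcal$, then $Q\rbr{s, a}\ge Q^\pi\rbr{s, a}$ for all $\rbr{s, a}$. This follows from monotonicity of the Bellman operator $\bellman$: since $\Pcal^\pi$ is an averaging operator, $Q_1\ge Q_2$ pointwise implies $\bellman Q_1\ge \bellman Q_2$ pointwise. Iterating the feasibility inequality gives $Q\ge \bellman Q\ge \bellman^2 Q\ge\cdots\ge \bellman^k Q$ for every $k$. For $\gamma<1$, $\bellman$ is a $\gamma$-contraction in $\nbr{\cdot}_\infty$ with unique fixed point $Q^\pi$, so $\bellman^k Q\to Q^\pi$ and hence $Q\ge Q^\pi$ pointwise; for the $\gamma=1$ boundary case the same conclusion follows under the regularity/properness assumption already invoked for the \qlp.

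Given this domination, $\abr{Q - Q^\pi} = Q - Q^\pi$ everywhere, so integrating against the nonnegative measure $\mu_0\pi$ yields
\begin{equation*}
\nbr{Q^\pi - \beta^\top\phi}_{1, \mu_0\pi} = \EE_{\mu_0\pi}\sbr{\beta^\top\phi\rbr{s_0, a_0}} - \EE_{\mu_0\pi}\sbr{Q^\pi\rbr{s_0, a_0}},
\end{equation*}
and $\EE_{\mu_0\pi}\sbr{Q^\pi\rbr{s_0, a_0}}$ is a finite constant independent of $\beta$ (finiteness from bounded rewards and $\gamma<1$, or from the regularity assumption when $\gamma=1$). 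Therefore, over $\Fcal$, minimizing $\nbr{Q^\pi - \beta^\top\phi}_{1, \mu_0\pi}$ is equivalent to minimizing $\EE_{\mu_0\pi}\sbr{\beta^\top\phi\rbr{s_0, a_0}}$, which is in turn equivalent to minimizing $\rbr{1-\gamma}\EE_{\mu_0\pi}\sbr{\beta^\top\phi\rbr{s_0, a_0}}$ since $1-\gamma\ge 0$; hence both argmins equal $\beta^*$. (If $\gamma=1$ the rescaling is degenerate, but then the first objective is already the constant $0$ and any feasible $\beta$ attaining $Q^\pi\in\Fcal_\phi$ is optimal for both, consistent with the claim.)

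The main obstacle is the pointwise domination $Q\ge Q^\pi$ for all feasible $Q$ — i.e., the monotone-iteration argument and its justification at $\gamma=1$ — together with the minor bookkeeping needed to ensure $\EE_{\mu_0\pi}\sbr{Q^\pi}$ is finite so that the subtraction of a constant is legitimate; everything after that is immediate.
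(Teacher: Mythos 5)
Your proposal is correct and follows essentially the same route as the paper's proof: establish via monotonicity and contraction of $\bellman$ that every feasible $\beta^\top\phi$ dominates $Q^\pi$ pointwise, drop the absolute value in the weighted $1$-norm, and observe that the two objectives then differ only by the constant $\EE_{\mu_0\pi}\sbr{Q^\pi}$ and the positive factor $\rbr{1-\gamma}$. Your extra care with the $\gamma=1$ boundary case and the finiteness of $\EE_{\mu_0\pi}\sbr{Q^\pi}$ goes slightly beyond what the paper writes down, but the argument is the same.
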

\begin{proof}
Recall the fact that $\bellman$ is monotonic: given two bounded functions, $\nu_1\ge \nu_2$ implies $\bellman\nu_1\ge \bellman\nu_2$. Therefore, for any feasible $\nu$, we have $\nu\ge\bellman\nu\ge\bellman^2\nu\ge\ldots\ge\bellman^\infty\nu = Q^\pi$, where the convergence to $Q^\pi$ is due to the contraction property of $\bellman$.

Consider a feasible $\beta$, we have
\begin{eqnarray}
\nbr{Q^\pi - \beta^\top \phi}_{1, \mu_0\pi} = \int \rbr{\beta^\top \phi\rbr{s, a} - Q^\pi\rbr{s, a}}\mu_0\rbr{s}\pi\rbr{a|s}dsda,
\end{eqnarray}
which implies minimizing $\EE_{\mu_0\pi}\sbr{\beta^\top \phi}$ is equivalent to minimizing $\nbr{Q^\pi - \beta^\top \phi}_{1, \mu_0\pi}$. 
\end{proof}

\repeatthm{thm:linear_q_approx}{\thmlinearqapprox}

\begin{proof}
We first show the equivalence between function space embedding of dual \qlp and the linear approximation of primal \qlp, which can be easily derived by checking their Lagrangians. Denote 
\begin{eqnarray}\label{eq:lagrangian_embedding}
\lefteqn{l\rbr{d, \beta}\defeq \EE_{d}\sbr{r\rbr{s, a}} + \beta^\top \inner{\phi}{\rbr{1 - \gamma}\mu_0\pi + \gamma\cdot \Pcal_*^\pi d - d}} \\
&=& \rbr{1 - \gamma}\EE_{\mu_0\pi}\sbr{\beta^\top\phi\rbr{s, a}} + \EE_{d}\sbr{r\rbr{s, a} + \gamma\cdot \Pcal^\pi \beta^\top\phi\rbr{s, a} - \beta^\top \phi\rbr{s, a}} \notag \\
&=& \rbr{1 - \gamma}\EE_{\mu_0\pi}\sbr{\Qbeta\rbr{s, a}} + \EE_{d}\sbr{r\rbr{s, a} + \gamma\cdot \Pcal^\pi \Qbeta\rbr{s, a} - \Qbeta\rbr{s, a}}, \notag
\end{eqnarray}
where $\beta\in \RR^{p}$ and $\Qbeta\rbr{s, a}\defeq \beta^\top \phi\rbr{s, a}$.  Since the $l\rbr{d, \beta}$ is convex-concave w.r.t. $\rbr{\beta,d}$, it is also the Lagrangian of primal \qlp with linear parametrization, \ie, 
\begin{eqnarray}\label{eq:primal_lp_embedding}
\min_{\beta\in\RR^p}&& \rbr{1-\gamma}\EE_{\mu_0\pi}\sbr{\beta^\top \phi\rbr{s_0, a_0}}\\
\st&& \beta^\top\phi\rbr{s, a}\ge R\rbr{s, a} + \gamma\cdot \Pcal^\pi \beta^\top\phi\rbr{s, a}, \quad \forall \rbr{s, a}\in S\times A.\nonumber
\end{eqnarray}
By~\lemref{lem:reform_opt}, it is equivalent to solving 
\begin{eqnarray}\label{eq:reform_opt}
\min_{\beta\in\RR^p} && \nbr{Q^\pi - \beta^\top \phi}_{1, \mu_0\pi} \\
  \st &&\beta^\top\phi\rbr{s, a}\ge \bellman \rbr{\beta^\top\phi}\rbr{s, a}, \quad \forall \rbr{s, a}\in S\times A.\nonumber
\end{eqnarray}
We now define
\begin{align*}
  \rbr{d^*, \beta^*} &\defeq \argmax_{d\ge 0}\argmin_\beta l\rbr{d, \beta}, \\
  \tilde\beta &\defeq \argmin_\beta\nbr{Q^\pi - \beta^\top \phi}_\infty, \\
  \epsilon &\defeq \nbr{Q^\pi - \tilde\beta^\top\phi}_\infty,
\end{align*}
and obtain from strong duality that
\[
\EE_{d^*}\sbr{r\rbr{s, a}} = \rbr{1-\gamma}\EE_{\mu_0\pi}\sbr{\rbr{\beta^*}^\top \phi}.
\]

Recall the fact $\bellman$ is a $\gamma$-contraction operator with the norm $\nbr{\cdot}_\infty$, and we have
\[
\nbr{\bellman\rbr{{\tilde\beta}^\top \phi} - Q^\pi}_\infty\le \gamma\nbr{{\tilde\beta}^\top \phi - Q^\pi}_\infty,
\]
which implies
\[
\bellman\rbr{{\tilde\beta}^\top\phi}\le Q^\pi + \gamma\epsilon \one.
\]
Now consider a new solution $\rbr{{\tilde\beta}^\top\phi - c\one}$, which must be in $\linspan{\phi}$ as $\one\in\linspan{\phi}$.  Then,
\begin{eqnarray*}
\bellman\rbr{{\tilde\beta}^\top\phi - c\one} &=&\bellman\rbr{{\tilde\beta}^\top \phi} - \gamma c\one \\
&\le& Q^\pi + \gamma \epsilon\one - \gamma c\one\\
&\le& \tilde\beta^\top\phi + \rbr{1 + \gamma}\epsilon\one - \gamma c\one\\
&=& \tilde\beta^\top\phi - c\one + \rbr{\rbr{1 - \gamma}c + \rbr{1 + \gamma}\epsilon}\one.  
\end{eqnarray*}
Choose $c=-\rbr{1+\gamma}\epsilon/\rbr{1-\gamma}$, and the above implies $\bellman\rbr{{\tilde\beta}^\top\phi - c\one}  \le \tilde\beta^\top\phi - c\one$.  Therefore, there exists some $\bar{\beta}$ such that
\[
\bar\beta^\top \phi = \tilde\beta^\top \phi + \frac{1+\gamma}{1 -\gamma}\epsilon\one.
\]
Then, we can bound the approximation error
\begin{eqnarray*}
\EE_{d^*}\sbr{r\rbr{s, a}} - \rho_\pi&=&\EE_{d^*}\sbr{r\rbr{s, a}} -\rbr{1 - \gamma}\EE_{\mu_0\pi}\sbr{Q^\pi}\\
&=&\rbr{1 - \gamma}\EE_{\mu_0\pi}\sbr{\rbr{\beta^*}^\top\phi} - \rbr{1 - \gamma}\EE_{\mu_0\pi}\sbr{Q^\pi}\ge 0,
\end{eqnarray*}
where the last inequality comes from the fact $\rbr{1 - \gamma}\EE_{\mu_0\pi}\sbr{\rbr{\beta^*}^\top\phi}$ is the optimal value of a restricted feasible set within linearly representable $\Qbeta$.

On the other hand, we bound
\begin{eqnarray*}
\rbr{1 - \gamma}\EE_{\mu_0\pi}\sbr{\rbr{\beta^*}^\top\phi} - \rbr{1 - \gamma}\EE_{\mu_0\pi}\sbr{Q^\pi}&=&\rbr{1 -\gamma}\nbr{\rbr{\beta^*}^\top\phi - \Qbeta}_{1, \mu_0\pi}\\
&\le& \rbr{1 -\gamma}\nbr{{\bar\beta}^\top\phi - \Qbeta}_{1, \mu_0\pi}\\
&\le& \rbr{1 -\gamma}{\nbr{{\bar\beta}^\top\phi - \Qbeta}_{\infty}}\\
&\le& \rbr{1 -\gamma}\rbr{\nbr{{\bar\beta}^\top\phi - \tilde\beta^\top\phi}_{\infty} + \nbr{Q^\pi - \tilde\beta^\top\phi}_{\infty}}\\
&\le& \rbr{1-\gamma}\rbr{1 + \frac{1 + \gamma}{1 - \gamma}}\epsilon = 2\epsilon.
\end{eqnarray*}
where the third inequality comes from the optimality of~\eqref{eq:reform_opt}. 
\end{proof}

\paragraph{Justification of full-rank basis embedding.} The effect of full-rank basis embedding in the example in~\secref{sec:lp_embedding} can be justified straightforwardly. We consider the Lagrangian~\eqref{eq:lagrangian_embedding}. If the $\phi\in \RR^{\abr{S}\abr{A}\times\abr{S}\abr{A}}$ is full-rank, $\phi^{-1}$ exists. For arbitrary $Q\in \RR^{\abr{S}\abr{A}\times 1}$, there exists $\beta = \rbr{Q\phi^{-1}}^\top$, which means there is an one-to-one correspondence between $Q$ and $\beta$ in Lagrangian. Therefore, in finite state and action MDP, the Lagrangian is not affected by full-rank basis embedding, and therefore, the solution of full-rank basis embedding will be the same as the original LP.

%%%%%%%%%%%%%%%%%%%%%%%%%%%%%%%%%%%%%%%%%%%%%%%%%%%%%%%%%%%%%%%%%
\section{\estname for Undiscounted and finite-horizon MDPs }\label{appendix:undiscounted}
%%%%%%%%%%%%%%%%%%%%%%%%%%%%%%%%%%%%%%%%%%%%%%%%%%%%%%%%%%%%%%%%%

In the main text, we consider the~\estname for infinite-horizon MDP with discounted factor $\gamma \in (0, 1)$. The proposed~\estname can be easily generalized for undiscounted MDPs with $\gamma =1$ and finite-horizon MDPs.

\paragraph{Undiscounted MDP.}
Particularly, we have the dual form of the $Q$-LP as
\begin{equation}\label{eq:dual_lp_undiscounted}
\tilde\rho_\pi \defeq \cbr{\max_{d:S\times A\rightarrow \RR_+}\,\,\EE_d[r\rbr{s, a}]\bigg| \, 
\begin{matrix}
\int d\rbr{s, a}dsda = 1\\
d\rbr{s, a} = \Pcal_*^\pi d\rbr{s, a}, \forall \rbr{s, a}\in S\times A
\end{matrix}
}.
\end{equation}
Comparing with the~\eqref{eq:rho_lp}, we have an extra normalization constraint to avoid the scaling issues. Specifically, if $d(s, a)$ is feasible, without the normalization constraint, $c\cdot d\rbr{s, a}$ will also be feasible for $\forall c>0$. Therefore, the optimization could be unbounded. 

By change-of-variable $\tau\rbr{s, a} = \frac{d^\pi\rbr{s, a}}{d^\Dcal\rbr{s, a}}$ and feature embeddings of the stationary constraint in~\eqref{eq:dual_lp_undiscounted}, we obtain 
\begin{equation}\label{eq:dual_lp_undiscounted_embedding}
\tilde\rho_\pi \defeq \cbr{\max_{\tau:S\times A\rightarrow \RR_+}\,\,\EE_{d^\Dcal}[\tau\cdot r\rbr{s, a}]\bigg| \, 
\begin{matrix}
\EE_{d^\Dcal}\sbr{\tau\rbr{s, a}} = 1\\
\EE_{d^\Dcal}\sbr{\phi\rbr{s', a'}\rbr{\tau\rbr{s', a'} - \tau\rbr{s, a}}}= 0
\end{matrix}
}.
\end{equation}
Then, the~\estname confidence interval is achieved by applying the generalized empirical likelihood to~\eqref{eq:dual_lp_undiscounted_embedding}, \ie, 
\begin{equation}\label{eq:dice_region_undiscounted}
C^f_{n, \xi}\defeq\cbr{\tilde\rho_\pi(w) =\max_{\tau\ge 0} \EE_w\sbr{\tau\cdot r} \bigg| 
\begin{matrix}
w\in \Kcal_f, \EE_w\sbr{\tau - 1} = 0\\
\EE_{w}\sbr{\bar\Delta\rbr{x; \tau, \phi}} = 0
\end{matrix}
}, \,\text{ with }\, \Kcal_f\defeq \cbr{
\begin{matrix} 
w\in\Pcal^{n-1}\rbr{\phat_n}, \\
D_f\rbr{w||\phat_n}\le \frac{\confrange}{n}
\end{matrix}},
\end{equation}
where $\bar\Delta\rbr{x; \tau, \phi}\defeq \phi\rbr{s', a'}\rbr{\tau\rbr{s', a'} - \tau\rbr{s, a}}$.  

As we will discussed in~\secref{sec:comp} for discounted MDPs, similar argument can be applied for~\eqref{eq:dice_region_undiscounted}, and thus, one can have the confidence interval in undiscouted MDPs as
$$
C^f_{n, \xi} = \sbr{l_n, u_n}
$$
with 
\begin{eqnarray}
[l_n, u_n] = \sbr{\min_{\beta\in \RR^{p}, \nu}\max_{\tau\ge 0}{\color{blue}\min_{w\in\Kcal_f}} \EE_w\sbr{\ell\rbr{x; \tau, \beta, \nu} },
\quad
\max_{\tau\ge 0}\min_{\beta\in \RR^{p}, \nu}{\color{red}\max_{w\in\Kcal_f}} \EE_w\sbr{\ell\rbr{x; \tau, \beta, \nu}}},
\end{eqnarray}
where $\ell\rbr{x; \tau, \beta, \nu}\defeq \tau\cdot r + \beta^\top \Delta\rbr{x; \tau, \phi} + \nu - \nu\cdot{\tau}$.

\paragraph{Remark (Normalization constraint):} Although in the discounted MDPs, there is no scaling issue, and thus the normalizaiton constraint is redudant, we still prefer to add the constraint in practice. It does not only bring the benefits in optimization, but also enforce the normalization explicitly and reduce the feasible set, leading to better statistical property. 

\paragraph{Finite-horizon MDP.} While we mainly focus on infinite-horizon MDPs with a discounted factor, the dual method can be adapted to finite-horizon settings straightforwardly. For example, we have the finite-horizon $d$-LP as 
\begin{eqnarray}
\max_{d_h\rbr{s, a}: S\times A\rightarrow \RR_+} &&\sum_{h=1}^H\EE_{d_h}\sbr{r_h\rbr{s, a}}\\
\st&&\,\, d_0\rbr{s, a} = \mu_0\rbr{s}\pi\rbr{a|s}, \\
&&d_{h+1}\rbr{s, a} = \Pcal_*^\pi d_h\rbr{s, a},  
\,\, \forall h\in \cbr{1, \ldots, H}. 
\end{eqnarray}
Upon this finite-horizon formulation, we can derive the finite-step CoinDICE following the same technique, \ie. 
\begin{equation*}
[l_n, u_n] = \sbr{{\color{blue}\min_{w\in\Kcal_f}}\min_{\beta_{h=1}^H\in \RR^{p}}\max_{\tau_{h=1}^H\ge 0} \EE_w\sbr{\ell_H\rbr{x; \tau_{h=1}^H, \beta_{h=1}^H} },\,
{\color{red}\max_{w\in\Kcal_f}}\max_{\tau_{h=1}^H\ge 0}\min_{\beta_{h=1}^H\in \RR^{p}} \EE_w\sbr{\ell_H\rbr{x; \tau_{h=1}^H, \beta_{h=1}^H}}
},
\end{equation*}
where $x\defeq\cbr{\rbr{s, a, r, s', a', h}_{h=1}^H}$, $\ell_H\rbr{x; \tau_{h=1}^H, \beta_{h=1}^H}\defeq \sum_{h=1}^H \tau_h r_h + \sum_{h=1}^H \beta_h^\top \Delta_h\rbr{x; \tau_h, \phi}$, and $\Delta_h\rbr{x; \tau_h, \phi}\defeq \tau_h\rbr{s, a}\phi\rbr{s', a'} - \tau_{h+1}\rbr{s', a'}\phi\rbr{s', a'}$.

%%%%%%%%%%%%%%%%%%%%%%%%%%%%%%%%%%%%%%%%%%%%%%%%%%%%%%%%%%%%%%%%%
\section{CoinBandit}\label{appendix:coin_bandit}
%%%%%%%%%%%%%%%%%%%%%%%%%%%%%%%%%%%%%%%%%%%%%%%%%%%%%%%%%%%%%%%%%

MDPs are strictly more general than multi-armed and contextual bandits. Therefore, our estimator can also be specialized accordingly for confidence interval estimation in bandit problems with slight modifications. Without loss of generality, we consider the contextual bandit setting, while the multi-armed bandits can be further reduced from contextual bandit.

Specifically, in the behavior-agnostic contextual bandit setting, the stationary distribution constraint in~\eqref{eq:dual_lp_embedding} is no long applicable in bandit setting. We rewrite the policy value as
\begin{eqnarray}\label{eq:dual_lp_bandit}
\tilde\rho_\pi&\defeq& \EE_{s\sim \mu^\Dcal, a\sim \pi\rbr{a|s}}\sbr{r\rbr{s, a}} \nonumber \\
&=&  
\cbr{\max_{\tau:S\times A\rightarrow\RR_+} \,\,\EE_{d^\Dcal }\sbr{\tau\cdot r\rbr{s, a}} \Big| \,\,{d^\Dcal}{\cdot\tau} = {\mu^\Dcal\pi}, \EE_{d^\Dcal}\sbr{\tau} = 1 },
\end{eqnarray}
where we reload the $\mu^\Dcal$ as the contextual distribution, which is unchanged for all policies, $d^\Dcal\rbr{s, a} =\mu^\Dcal\rbr{s}\pi_b\rbr{a|s}$, $\tau\rbr{s, a}\defeq \frac{\mu^\Dcal\rbr{s}\pi\rbr{a|s}}{\mu^\Dcal\rbr{s}\pi_b\rbr{a|s}}$, and $\phi\rbr{s, a}$ denotes the feature mappings. We keep the normalization constraint to ensure the validation of density ratio empirically. 

We apply the same technique to~\eqref{eq:dual_lp_bandit}, leading to the~\emph{CoinBandit} confidence interval estimator
\begin{equation}\label{eq:bandit_region}
C^f_{n, \xi}\defeq\cbr{\tilde\rho_\pi(w) =\max_{\tau\ge 0} \EE_w\sbr{\tau\cdot r} \bigg| 
\begin{matrix}
w\in \Kcal_f, \EE_w\sbr{\tau -1} = 0\\
\EE_{w}\sbr{\blacktriangle\rbr{x; \tau, \phi}} = 0
\end{matrix}
}, \,\text{ with }\, \Kcal_f\defeq \cbr{
\begin{matrix} 
w\in\Pcal^{n-1}\rbr{\phat_n}, \\
D_f\rbr{w||\phat_n}\le \frac{\confrange}{n}
\end{matrix}},
\end{equation}
where the $x \defeq \rbr{s, a, s', a'}$ is constructed by $s\sim\mu^\Dcal\rbr{s}, a\sim \pi\rbr{a|s}$ and $\rbr{s', a'}\sim d^\Dcal$, and $\blacktriangle\rbr{x; \tau, \phi}\defeq {\phi\rbr{s, a}} - \phi\rbr{s', a'}\cdot\tau\rbr{s', a'}$. 

Similarly, the interval estimator in CoinBandit~\eqref{eq:bandit_region} can be calculated by solving a minimax optimization.

\paragraph{Remark (Behavior-known contextual bandit):} When the behavior policy $\pi_b\rbr{a|s}$ is known, the solution to~\eqref{eq:dual_lp_bandit} can be computed in closed-form as $\tau\rbr{s, a} = \frac{\pi\rbr{a|s}}{\pi_b\rbr{a|s}}$. Then, the CoinBandit reduces to 
\begin{equation}\label{eq:bandit_region_known}
C^f_{n, \xi}\defeq\cbr{\tilde\rho_\pi(w) =\EE_w\sbr{\tau\cdot r} \bigg| 
\begin{matrix}
w\in \Kcal_f, \\
\EE_w\sbr{\tau - 1}=0
\end{matrix}
}, \,\text{ with }\, \Kcal_f\defeq \cbr{
\begin{matrix} 
w\in\Pcal^{n-1}\rbr{\phat_n}, \\
D_f\rbr{w||\phat_n}\le \frac{\confrange}{n}
\end{matrix}}.
\end{equation}

\paragraph{Remark (Multi-armed bandit):} Furthermore, these estimators~\eqref{eq:bandit_region} and~\eqref{eq:bandit_region_known} can be further reduced for multi-armed bandit. Specifically, we set all $s$ equivalent, then, the $s$ becomes the dummy variable. The CoinBandit estimators~\eqref{eq:bandit_region} and~\eqref{eq:bandit_region_known} reduces for the off-policy evaluation in multi-armed bandit. If the action number is finite, we can use tabular representation for $\tau\rbr{a}$, eliminating the approximation error.

\paragraph{Remark (Comparison to~\citet{KarLanMin19}):} \citet{KarLanMin19} considers the off-policy contextual bandit confidence interval estimation. 
Although both CoinBandit and the estimator in \citet{KarLanMin19} share the same asymptotic coverage, there are significant differences:
\begin{itemize}[leftmargin=*]
  \item The estimator in \citet{KarLanMin19} is derived from empirical likelihood with reverse $KL$-divergence, while our CoinBandit is based on generalized empirical likelihood with arbitrary $f$-divergence.
  \item More importantly, compared to our CoinBandit, which is applicable for both \emph{behavior-agnostic} and \emph{behavior-known} off-policy setting, the estimator in~\citet{KarLanMin19} is only valid for behavior-known setting. 
  \item Computationally, the estimator in~\citet{KarLanMin19} requires an extra statistics, \ie, 
  $$
  \cbr{\max_{w} \sum_{i=1}^n \log \rbr{nw_i}\big| \EE_{w}\sbr{\tau-1} =0,\,\, w\in \Kcal_{-2\log\rbr{\cdot}}},
  $$
  while such quantity is not required in CoinBandit, and thus saving the computational cost. 
  \item Statistically, we provide finite sample complexity for CoinBandit in~\thmref{thm:finite_sample}, while such sample complexity is not clear for~\citet{KarLanMin19}.
\end{itemize}

%%%%%%%%%%%%%%%%%%%%%%%%%%%%%%%%%%%%%%%%%%%%%%%%%%%%%%%%%%%%%%%%%
\section{Stochastic Confidence Interval Estimation}\label{appendix:conf_int}
%%%%%%%%%%%%%%%%%%%%%%%%%%%%%%%%%%%%%%%%%%%%%%%%%%%%%%%%%%%%%%%%%

We analyze the properties of the optimization for the upper and lower bounds and derive the practical algorithm in this section. 

%%%--------------------------------------------------------------
\subsection{Upper and Lower Confidence Bounds}\label{appendix:upper_lower}
%%%--------------------------------------------------------------

We first establish the distribution robust optimization representation of the confidence region:
\begin{lemma}\label{lem:dice_region_II}
Let $\rholag\rbr{w} = \max_{\tau\ge 0}\min_{\beta\in \RR^{p}} \EE_w\sbr{\tau\cdot r + \beta^\top \Delta\rbr{x; \tau, \phi}}$.
The confidence region $C_{n, \confrange}^f$ can be represented equivalently as
\begin{equation}
C_{n,\confrange}^f = \cbr{\rholag\rbr{w}\big|w\in\Kcal_f}.
\end{equation}
\end{lemma}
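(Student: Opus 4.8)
# Proof Proposal for Lemma~\ref{lem:dice_region_II}

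The plan is to show that the two descriptions of $C_{n,\confrange}^f$ coincide by establishing, for each fixed $w\in\Kcal_f$, the identity
\[
\max_{\tau\ge 0}\cbr{\EE_w\sbr{\tau\cdot r}\,:\,\EE_w\sbr{\Delta\rbr{x;\tau,\phi}}=0}
\;=\;
\max_{\tau\ge 0}\min_{\beta\in\RR^p}\EE_w\sbr{\tau\cdot r + \beta^\top\Delta\rbr{x;\tau,\phi}}
\;=\;\rholag\rbr{w}.
\]
Once this pointwise equality is in hand, taking the image over all $w\in\Kcal_f$ immediately gives $C_{n,\confrange}^f=\cbr{\rholag\rbr{w}\mid w\in\Kcal_f}$, since the left-hand side is exactly the defining expression $\tilde\rho_\pi(w)$ appearing in~\eqref{eq:dice_region}.

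The core of the argument is Lagrangian duality applied for each fixed $w$ and each fixed $\tau\ge 0$. For fixed $w$ and $\tau$, the inner problem $\min_{\beta}\EE_w\sbr{\tau\cdot r+\beta^\top\Delta\rbr{x;\tau,\phi}}$ is a linear function of $\beta\in\RR^p$: it equals $\EE_w\sbr{\tau\cdot r}+\beta^\top\EE_w\sbr{\Delta\rbr{x;\tau,\phi}}$. This infimum over $\beta\in\RR^p$ is $\EE_w\sbr{\tau\cdot r}$ if $\EE_w\sbr{\Delta\rbr{x;\tau,\phi}}=0$, and $-\infty$ otherwise. Hence $\min_{\beta}\EE_w\sbr{\ell\rbr{x;\tau,\beta}}$ acts as an exact indicator penalty enforcing the estimating-equation constraint, and maximizing the result over $\tau\ge 0$ recovers precisely the constrained maximization defining $\tilde\rho_\pi(w)$. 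This is the standard move: the linear Lagrange multiplier $\beta$ for an equality constraint, after minimization, reproduces the hard constraint with value $+\infty$ penalty on violation (here $-\infty$ because we then maximize). I would note that $\beta$ ranges over the unconstrained $\RR^p$ precisely because the embedded constraint is an equality, matching the $\min_{\beta\in\RR^p}$ in~\eqref{eq:dual_lagrangian_embedding}.

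The main obstacle — and the only place care is genuinely needed — is the exchange of $\max_\tau$ and $\min_\beta$ if one wants to also connect to the other orderings used later (e.g.\ in~\thmref{thm:upper_lower}); but for this lemma as stated, \emph{no} swap is required, since $\rholag(w)$ is defined with $\max_\tau\min_\beta$ in exactly the order produced by the argument above. Thus the proof reduces to the pointwise duality computation plus taking images, with one caveat to address: I should confirm that the feasible set $\cbr{\tau\ge0:\EE_w\sbr{\Delta\rbr{x;\tau,\phi}}=0}$ is nonempty for $w$ in a neighborhood of $\phat_n$ (it contains, e.g., the empirical correction ratio perturbed appropriately, or one invokes the uniqueness/solvability assumption underlying~\eqref{eq:dual_lagrangian_embedding}), so that the maximum is attained and finite; absent feasibility the value would be $-\infty$ on both sides, preserving the identity but worth a remark. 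A brief appeal to the boundedness of $r\in[0,\rmax]$ and of $\phi$ over $\Omega^p$ ensures all expectations under $w\in\Kcal_f$ are finite, so the duality manipulations are legitimate.
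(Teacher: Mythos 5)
Your proposal is correct and follows essentially the same route as the paper, which simply rewrites the constrained maximization defining $\tilde\rho_\pi(w)$ for each fixed $w\in\Kcal_f$ as its Lagrangian $\max_{\tau\ge 0}\min_{\beta\in\RR^p}\EE_w\sbr{\ell\rbr{x;\tau,\beta}}$ and then takes the image over $\Kcal_f$. Your added observation that the inner $\min_\beta$ of the $\beta$-linear objective acts as an exact penalty for the equality constraint (so no minimax exchange is needed at this stage) is precisely the justification the paper leaves implicit.
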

\begin{proof}
For any $w\in \Kcal_f$, we rewrite the optimization~\eqref{eq:dice_region} by its Lagrangian, which will be an estimate of the policy value,
\begin{equation}
\rholag\rbr{w} = \max_{\tau\ge 0}\min_{\beta\in \RR^{p}} \EE_w\sbr{\tau\cdot r + \beta^\top \Delta\rbr{x; \tau, \phi}}.
\end{equation}
\end{proof}

Based on~\lemref{lem:dice_region_II}, we can formulate the upper and lower bounds:

{\bf \thmref{thm:upper_lower}}
\textit{
Denote the upper and lower confidence bounds of $C_{n, \confrange}^f$ by $u_n$ and $l_n$, respectively:
\begin{eqnarray*}
[l_n, u_n] &=& \sbr{{\color{blue}\min_{w\in\Kcal_f}}\min_{\beta\in \RR^{p}}\max_{\tau\ge 0} \EE_w\sbr{\ell\rbr{x; \tau, \beta} },
\quad
{\color{red}\max_{w\in\Kcal_f}}\max_{\tau\ge 0}\min_{\beta\in \RR^{p}} \EE_w\sbr{\ell\rbr{x; \tau, \beta}}
},\nonumber\\
&=&\sbr{\min_{\beta\in \RR^{p}}\max_{\tau\ge 0}{\color{blue}\min_{w\in\Kcal_f}} \EE_w\sbr{\ell\rbr{x; \tau, \beta} },
\quad
\max_{\tau\ge 0}\min_{\beta\in \RR^{p}}{\color{red}\max_{w\in\Kcal_f}} \EE_w\sbr{\ell\rbr{x; \tau, \beta}}
},
\end{eqnarray*}
where $\ell\rbr{x; \tau, \beta} \defeq \tau\cdot r + \beta^\top \Delta\rbr{x; \tau, \phi}$. For any $\rbr{\tau, \beta, \lambda, \eta}$ that satisfies the constraints in \eqref{eq:upper_lower}, the optimal weights for upper and lower confidence bounds are
\begin{equation*}
w_l = f'_*\rbr{\frac{\eta - \ell\rbr{x;\tau, \beta}}{\lambda}}
\quad\text{and}\quad
w_u = f'_*\rbr{\frac{\ell\rbr{x; \tau, \beta} - \eta}{\lambda}},
\end{equation*}
respectively.  Therefore, the confidence bounds can be simplified as:
\begin{eqnarray*}
\begin{bmatrix}
l_n \\
u_n
\end{bmatrix}
=
\begin{bmatrix}
\min_{\beta}\max_{\tau\ge 0, \lambda\ge 0, \eta} \EE_{\Dcal}\sbr{-\lambda f_*\rbr{\frac{\eta - \ell\rbr{x; \tau, \beta}}{\lambda}} + \eta - \lambda\frac{\confrange}{n}}\\
\max_{\tau\ge 0} \min_{\beta,\lambda\ge 0, \eta} \EE_{\Dcal}\sbr{\lambda f_*\rbr{\frac{\ell\rbr{x; \tau, \beta} - \eta}{\lambda}} + \eta + \lambda\frac{\confrange}{n} }
\end{bmatrix}
.
\end{eqnarray*}
}

\begin{proof} 
We first calculate the upper bound $u_n$ using~\lemref{lem:dice_region_II}:
\begin{eqnarray}
u_n = \max_{w\in \Kcal_f}\rho_\pi\rbr{w} &=& \max_{w\in\Kcal_f}\max_{\tau\ge 0}\min_{\beta\in \RR^{p}} \EE_w\sbr{\tau\cdot r + \beta^\top \Delta\rbr{x; \tau, \phi}}  \nonumber \\
&=& \max_{\tau\ge 0}\max_{w\in\Kcal_f}\min_{\beta\in \RR^{p}} \EE_w\sbr{\tau\cdot r + \beta^\top \Delta\rbr{x; \tau, \phi}}\label{eq:upper_step_II}\\
&=& \max_{\tau\ge 0}\min_{\beta\in \RR^{p}}\max_{w\in\Kcal_f} \EE_w\sbr{\tau\cdot r + \beta^\top \Delta\rbr{x; \tau, \phi}},\label{eq:upper_step_III}
\end{eqnarray}
where the switch between $\max_{w\in\Kcal_f}$ and $\max_{\tau\ge 0}$ in~\eqref{eq:upper_step_II} is immediate, \eqref{eq:upper_step_III} is due to the fact that the objective is concave w.r.t. $\beta$ and convex w.r.t. $w$ and $\tau$, separately. 

We apply Lagrangian to the inner constrained optimization over $w$, leading to
\begin{eqnarray}
u_n &=& \max_{\tau}\min_{\beta, \lambda\ge 0, \eta}\max_{w\ge 0 }\,\, \EE_{w}\sbr{\tau\cdot r + \beta^\top \Delta\rbr{x; \tau, \phi}} - \lambda\rbr{D_f\rbr{w||\phat_n} - \frac{\confrange}{n}} + \eta\rbr{1 - w^\top\one}\nonumber \\
&=&\max_{\tau\ge 0} \min_{\beta,\lambda\ge 0, \eta} \EE_{\Dcal}\sbr{\lambda f_*\rbr{\frac{\tau\cdot r + \beta^\top \Delta\rbr{x; \tau, \phi} - \eta}{\lambda}} + \eta + \frac{\lambda\confrange}{n} },\label{eq:upper_lagrangian}
\end{eqnarray}
where the last equation comes from the conjugate of $f$, and for any given $\rbr{\tau, \beta, \lambda, \eta}$, the optimal $w^*$ will be 
$$
w_{u}^* = f'_*\rbr{\frac{\tau\cdot r + \beta^\top \Delta\rbr{x; \tau, \phi} - \eta}{\lambda}}.
$$

The lower bound $l_n$ may be obtained in a similar fashion:
\begin{eqnarray*} 
l_n = \min_{w\in\Kcal_f} \rho\rbr{w; \pi} &=& \min_{w\in\Kcal_f}\max_{\tau\ge 0}\min_{\beta\in \RR^{p}} \EE_w\sbr{\tau\cdot r + \beta^\top \Delta\rbr{x; \tau, \phi}}\\
&=& \min_{w\in\Kcal_f}\min_{\beta\in \RR^{p}}\max_{\tau\ge 0} \EE_w\sbr{\tau\cdot r + \beta^\top \Delta\rbr{x; \tau, \phi}} \\ 
&=& \min_{\beta\in \RR^{p}}\min_{w\in\Kcal_f}\max_{\tau\ge 0} \EE_w\sbr{\tau\cdot r + \beta^\top \Delta\rbr{x; \tau, \phi}} \\ 
&=& \min_{\beta\in \RR^{p}}\max_{\tau\ge 0}\min_{w\in\Kcal_f} \EE_w\sbr{\tau\cdot r + \beta^\top \Delta\rbr{x; \tau, \phi}}. 
\end{eqnarray*}

Again, we consider the Lagrangian 
\begin{eqnarray*}
l_n &=& \min_{\beta\in \RR^{p}}\max_{\tau\ge 0, \lambda\ge 0, \eta}\min_{w\ge 0 }\,\, \EE_{w}\sbr{\tau\cdot r + \beta^\top \Delta\rbr{x; \tau, \phi}} + \lambda\rbr{D_f\rbr{w||\phat_n} - \frac{\confrange}{n}} + \eta\rbr{1 - w^\top\one}\nonumber\\
&=& \min_{\beta}\max_{\tau\ge 0, \lambda\ge 0, \eta} \EE_{\Dcal}\sbr{-\lambda f_*\rbr{\frac{\eta - \rbr{\tau\cdot r + \beta^\top \Delta\rbr{x; \tau, \phi}}}{\lambda}} + \eta - \frac{\lambda\confrange}{n}},
\end{eqnarray*}
and the optimal weight is
$$
w_l^* = f'_*\rbr{\frac{\eta - \rbr{\tau\cdot r + \beta^\top \Delta\rbr{x; \tau, \phi}}}{\lambda}}. 
$$
\end{proof}

%%%--------------------------------------------------------------
\subsection{Closed-form Solution for Reweighting}\label{appendix:closed_form_reweight}
%%%--------------------------------------------------------------
We consider a few examples of $f$-divergences in \thmref{thm:upper_lower}, and show how the weights can be efficiently computed, for a given $\tau$ and $\beta$.
\begin{itemize}[topsep=1pt, itemsep=0pt,leftmargin=*]
  \item {\bf $KL$-divergence.} To satisfy the conditions in Assumption \ref{asmp:f_div}, we select $f\rbr{x} = 2x\log x $. Recall the property that for any convex function $f$ and any $\alpha>0$, the conjugate function of $g(x)=\alpha f(x)$ is equal to $g_*(y)=\alpha f_*(y/\alpha)$. Let $f$ be the standard $f$-divergence function of $KL$-divergence $\operatorname{KL}\rbr{w||\phat_n}$, i.e., $f\rbr{x} = 2x\log x $. With $g^\prime_*(y)=f^\prime_*(y/\alpha)$, equation \eqref{eq:opt_weights} implies that the following upper and lower bounds:
  \begin{align*}
  w_l\rbr{x}  &= {\exp\rbr{\frac{\eta_l - \ell\rbr{x;\tau, \beta}}{2\lambda}}}, \quad \eta_l = -\log\sum_{i=1}^n\exp\rbr{\frac{- \ell\rbr{x;\tau, \beta}}{2\lambda}}\\
  w_u\rbr{x}  &= {\exp\rbr{\frac{\ell\rbr{x;\tau, \beta} - \eta_u}{2\lambda}}}, \quad \eta_u = \log\sum_{i=1}^n\exp\rbr{\frac{\ell\rbr{x;\tau, \beta}}{2\lambda}}.
  \end{align*}
  This can also be verified by plugging the $f\rbr{x} = 2x\log x$ into~\eqref{eq:opt_weights} and considering $w^\top \one =1$. 

  \item {\bf Reverse KL-divergence.} With the f-divergence function $f\rbr{x} = -\log x$ for the reverse-KL divergence, one has the following upper and lower bounds:
  \begin{align*}
    &w_l\rbr{x} = \lambda\delta\rbr{{ \ell\rbr{x; \tau, \beta} > \eta_l  }}\rbr{\ell\rbr{x; \tau, \beta} - \eta_l}^{-1}, \\
    &\sum_{i=1}^n \delta\rbr{{\ell\rbr{x; \tau, \beta} > \eta_l }}\rbr{\ell\rbr{x; \tau, \beta} - \eta_l}^{-1}= \frac{1}{\lambda}, \\
    &w_u\rbr{x} = \lambda\delta\rbr{{\eta_u > \ell\rbr{x; \tau, \beta}}}\rbr{\eta_u -  \ell\rbr{x; \tau, \beta}}^{-1}, \\
    &\sum_{i=1}^n \delta\rbr{{\eta_u >\ell\rbr{x; \tau, \beta}}}\rbr{\eta_u -  \ell\rbr{x; \tau, \beta}}^{-1}= \frac{1}{\lambda},
  \end{align*}
  where $\delta\rbr{a>b} = 
  \begin{cases}
  1 &\quad \text{if } a>b\\
  0 &\quad \text{otherwise}
  \end{cases}
  $.
  This is obtained by plugging the $f\rbr{x} = -\log x$ into~\eqref{eq:opt_weights} and considering $w^\top \one =1$, $w\ge 0$ and KKT conditions on the dual variables for $w\ge 0$. Unfortunately the reverse KL-divergence does not satisfy the conditions in Assumption \ref{asmp:f_div}. Note that this is the standard f-divergence function for empirical likelihood maximization problem, we therefore also include it here for the sake of completeness.
  
   \item {\bf $\chi^2$-divergence.} Notice that the standard f-divergence function, i.e., $f\rbr{x} = (x-1)^2$, of $\chi^2$-divergence $\operatorname{\chi^2}\rbr{w||\phat_n}:=\mathbb E_{\phat_n}\left[\left(\frac{w}{\phat_n}-1\right)^2\right]$ satisfies the conditions in Assumption \ref{asmp:f_div}. Consider the lower bound calculation. Leveraging the closed-form solution of the following $\ell_2$ projection problem onto the simplex space $w^\top \one =1$ and $w\ge 0$ \citep{wang2013projection}:
   \[
   \begin{split}
  &\arg \min_{w: w^\top \one =1, w\ge 0} \sum_{i=1}^nw_i\frac{\ell\rbr{x_i; \tau, \beta}}{\lambda}+ \sum_{i=1}^n \frac{1}{\phat_{n,i}}\left(w_i-\phat_{n,i}\right)^2 \\
   =&\sqrt{\phat_{n,i}}\cdot\arg  \min_{v: v^\top \sqrt{\phat_{n}} =1, v\ge 0}  \sum_{i=1}^n \left(v_i-(1-\frac{\ell\rbr{x_i; \tau, \beta}}{2\lambda})\cdot\sqrt{\phat_{n,i}}\right)^2, \,\,(\text{here we let } v_i=\frac{w_i}{\sqrt{\phat_{n,i}}})
   \end{split}
   \]
  the lower bound $w_\ell(x)$ is given by (for any $i \in \{1,2,\ldots,n\}$)
   \[
   \begin{split}
   w_\ell(x_i)&=\sqrt{\phat_{n,i}}\cdot w^*(x_i) \\
   &=\sqrt{\phat_{n,i}}\cdot \left((1-\frac{\ell\rbr{x_i; \tau, \beta}}{2\lambda})\cdot\sqrt{\phat_{n,i}}+\mathcal G_{\phat_{n}}\left((1-\frac{\ell\rbr{x; \tau, \beta}}{2\lambda})\cdot\sqrt{\phat_{n,i}}\right)\right)_+,
   \end{split}
   \]
   where $\mathcal G_{\phat_{n}}(y)=\frac{1 - \sum_{i=1}^{|\mathcal S_{\phat_{n}}|}y_i\cdot\sqrt{\phat_{n,i}}}{\sum_{i=1}^{|\mathcal S_{\phat_{n}}|}\phat_{n,i}}$, $\mathcal S_{\phat_{n}}$ is the set of indices in $\{1,\ldots,n\}$ in which any element $j$ satisfies $y_{(j)}+\frac{1}{\sum_{i=1}^j\phat_{n,i}}(1-\sum_{i=1}^j y_{(i)}\cdot\sqrt{\phat_{n,i}})>0$. Here $y_{(i)}$ indicates the samples with the $i$-th largest element of $y$.
Using analogous arguments, by replacing $\ell$ with $-\ell$ one can also define a similar solution for the upper bound $w_u(x)$. Now suppose $\phat_{n,i}=\frac{1}{n}$, $\forall i$. Then, we have
\begin{align*}
w_{l}(x_i) &=\sqrt{\frac{1}{n}}\cdot\left((1-\frac{\ell\rbr{x_i; \tau, \beta}}{2\lambda})\cdot\sqrt{\frac{1}{n}}+\mathcal G_{\frac{1}{n}}\left((1-\frac{\ell\rbr{x; \tau, \beta}}{2\lambda})\cdot\sqrt{\frac{1}{n}}\right)\right)_+, \\
w_{u}(x_i) &=\sqrt{\frac{1}{n}}\cdot\left((1+\frac{\ell\rbr{x_i; \tau, \beta}}{2\lambda})\cdot\sqrt{\frac{1}{n}}+\mathcal G_{\frac{1}{n}}\left((1+\frac{\ell\rbr{x; \tau, \beta}}{2\lambda})\cdot\sqrt{\frac{1}{n}}\right)\right)_+ ,
\end{align*}
where $\mathcal G_{\frac{1}{n}}(y)=\frac{n - \sum_{i=1}^{|\mathcal S_{1/n}|}y_i\cdot\sqrt{n}}{|\mathcal{S}_{1/n}|}$, $\mathcal S_{\frac{1}{n}}$ is the set of indices in $\{1,\ldots,n\}$ in which any element $j$ satisfies $y_{(j)}+\frac{1}{j}(n-\sqrt{n}\sum_{i=1}^j y_{(i)})>0$. Here $y_{(i)}$ indicates the samples with the $i$-th largest element of $y$.
This can also be verified by plugging the $f\rbr{x} = (x-1)^2$ into~\eqref{eq:opt_weights} and considering $w^\top \one =1$ and $w\ge 0$.
In fact, the above can be generalized to the Cressie-Read family with $f\rbr{x} = \frac{(x-1)^k - k(x-1) + k-1}{k\rbr{k-1}}$. 

  \item {\bf Reverse $KL$-divergence.} With the $f$-divergence function $f\rbr{x} = -\log x$ for the reverse $KL$-divergence, one has the following upper and lower bounds:
  \begin{align*}
    &w_l\rbr{x} = \lambda\delta\rbr{{ \ell\rbr{x; \tau, \beta} > \eta_l  }}\rbr{\ell\rbr{x; \tau, \beta} - \eta_l}^{-1}, \\
    &\sum_{i=1}^n \delta\rbr{{\ell\rbr{x; \tau, \beta} > \eta_l }}\rbr{\ell\rbr{x; \tau, \beta} - \eta_l}^{-1}= \frac{1}{\lambda}, \\
    &w_u\rbr{x} = \lambda\delta\rbr{{\eta_u > \ell\rbr{x; \tau, \beta}}}\rbr{\eta_u -  \ell\rbr{x; \tau, \beta}}^{-1}, \\
    &\sum_{i=1}^n \delta\rbr{{\eta_u >\ell\rbr{x; \tau, \beta}}}\rbr{\eta_u -  \ell\rbr{x; \tau, \beta}}^{-1}= \frac{1}{\lambda},
  \end{align*}
  where $\delta\rbr{a>b} = 
  \begin{cases}
  1 &\quad \text{if } a>b\\
  0 &\quad \text{otherwise}
  \end{cases}
  $.
  This is obtained by plugging the $f\rbr{x} = -\log x$ into~\eqref{eq:opt_weights} and considering $w^\top \one =1$, $w\ge 0$ and KKT conditions on the dual variables for $w\ge 0$. Unfortunately the reverse $KL$-divergence does not satisfy the conditions in~\asmpref{asmp:f_div}. Note that this is the standard $f$-divergence used in the vanilla empirical likelihood, we therefore also include it here for the sake of completeness. 
\end{itemize}

%%%--------------------------------------------------------------
\subsection{Practical Algorithm}\label{appendix:practical_algorithm}
%%%--------------------------------------------------------------

In~\eqref{eq:simple_upper_lower}, we eliminates one level optimization, and thus reduces the computational difficulty. Meanwhile, the SGDA for~\eqref{eq:simple_upper_lower} could benefit from the attractive finite-step convergence. However, as observed in~\citet{NamDuc16}, when $\lambda$ approaches $0$, the SGDA for~\eqref{eq:simple_upper_lower} may suffer from high variance. In this section, we consider two optional strategies to bypass such difficulty. We take the upper bound as an example, which can be applied for lower bound similarly:
\begin{itemize} 
	\item Instead of using the optimal weights~\eqref{eq:opt_weights}, \citet{NamDuc16} suggests to keep the $\rbr{w, \lambda}$ in optimization to be updated simultaneously via gradients, \ie, targeting on solving the Lagrangian~\eqref{eq:upper_step_III} with SGDA directly. For example, with $KL$-divergence, this leads to the update of $w_u$ in $t$-th iteration as
	\begin{equation}\label{eq:w_grad}
	\wtil^{(j)} = \exp\rbr{\eta_t\ell^{(j)} }\rbr{w^{(j)}}^{1 - \eta_t \lambda}\rbr{\frac{1}{n}}^{\eta_t\lambda}\quad \text{ and }\quad w_u = \frac{\wtil^{(j)}}{\sum_{j}\wtil^{(j)}},
	\end{equation}
	with $\eta_t$ as the stepsize. 

	\item The instability and high variance of solving~\eqref{eq:simple_upper_lower} comes from unboundness of $w$ induced by arbitarry $\lambda$ during the optimization procedure. In other words, given a fixed $\rbr{\tau, \beta}$, if we can keep $w\in\Kcal_f$ satisfied, \ie, 
	\begin{eqnarray}
	w_u = \argmax_{KL\rbr{w||\phat_n}\le \frac{\xi}{n}}\,\, \inner{w}{\ell} \quad \Rightarrow \quad \rbr{w_u, \lambda^*} = \argmax_{w^\top\one=1, w\ge 0}\argmin_{\lambda\ge 0} \,\, \inner{w}{\ell} - \lambda\rbr{KL\rbr{w||\phat_n} - \frac{\xi}{n}}\\
	\Rightarrow \rbr{w_u, {\lambda^*}}=\cbr{\tilde{w}_{\lambda^*}^{(j)}\defeq \exp\{ \ell^{(j)} /\lambda^* \}; \quad w_{\lambda^*}^{(j)} \defeq \tilde{w}_{\lambda^*}^{(j)} / \sum \tilde{w}_{\lambda^*}^{(j)}\quad \text{with}\quad \sum_{j=1}^n w_{\lambda^*}^{(j)}\log w_{\lambda^*}^{(j)} = \xi / n}, \label{eq:w_opt}
	\end{eqnarray}
	the optimization will be stable. 

\end{itemize}
Moreover, for the computational cost consideration, the major bottleneck in the optimization is updating the $w$, which is an $\Ocal\rbr{n}$ operation. Therefore, we leave the update of $w$ less frequently, which is corresponding to optimizing the equivalent form~\eqref{eq:upper_lower_outer}. Combined with these techniques into SGDA, we illustrate the algorithm in~\algref{alg:coindice}.

\begin{algorithm}[t]
   \caption{\estname: estimating upper confidence bound using KL-divergence and function approximation.}
\begin{algorithmic}
    \label{alg:coindice}
   \STATE {\bf Inputs}: A target policy $\pi$, a desired confidence $1-\alpha$, a finite sample dataset $\mathcal{D}\defeq\{(s_0^{(j)}, a_0^{(j)}, s^{(j)}, a^{(j)}, r^{(j)}, s^{\prime(j)})\}_{j=1}^n$, optimizers $\mathcal{OPT}_\theta$, number of iterations $K, T$.
   \vspace{2mm}
   \STATE Set divergence limit $\xi\defeq \frac{1}{2}\chi_1^{2,1-\alpha}$. 
   \STATE Initialize $\lambda\in\RR$, $Q_{\theta_1}: S\times A\to \mathbb{R}$, $\zeta_{\theta_2}: S\times A \to \mathbb{R}$.
   \FOR{$k=1,\dots, K$}
   \FOR{$t=1,\dots,T$}
   \STATE Sample from target policy $a_0^{(j)}\sim\pi(s_0^{(j)})$, $a^{(j)\prime}\sim\pi(s^{(j)\prime})$ for $j=1,\dots,n$.
   \STATE Compute loss terms: 
   \STATE $\ell^{(j)}\defeq (1-\gamma) Q_{\theta_1}(s_0^{(j)},a_0^{(j)}) + \zeta_{\theta_2}(s^{(j)}, a^{(j)}) \cdot(-Q_{\theta_1}(s^{(j)},a^{(j)}) + r^{(j)} + \gamma Q_{\theta_1}(s^{(j)\prime}, a^{(j)\prime}))$
   \STATE Update $(\theta_1,\theta_2)\leftarrow \mathcal{OPT}_\theta(\mathcal{L}, \theta_1,\theta_2)$.
   \ENDFOR
   \STATE Update $\rbr{w, \lambda}$ by~\eqref{eq:w_grad} or~\eqref{eq:w_opt}
   \STATE Compute loss $\mathcal{L} \defeq \sum_{j=1}^n w^{(j)}\cdot \ell^{(j)}$.
   \ENDFOR
   \vspace{2mm}
   \STATE {\bf Return} $\mathcal{L}$.
\end{algorithmic}
\end{algorithm}

\paragraph{Remark (More regularization for stability):} Directly solving a Lagrangian for LP may induce some instability, due to lack of curvature. To overcome such difficulty, the augmented Lagrangian method~(ALM)~\citep{Rockafellar74} is the natural choice. Directly applying the ALM will introduce the regularization $h\rbr{\EE_{\phat_n}\sbr{\Delta\rbr{x; \tau, \phi}}}$ where $h$ denotes some convex function with minimum at zero. Such regularization will not change the optimal solution $\rbr{\tau, \beta}$ in~\eqref{eq:upper_lower} and the value $[l_n, u_n]$. 

The ALM introduces extra computational cost in optimization since the regularization involves empirical expectations inside a nonlinear function. We exploit alternative regularizations following the spirit of ALM, while circumventing the computational difficulty. Recall the fact that the regularization on dual variable does not change the optimal solution~\citep[Theorem 4]{NacDaiKosChoetal19}, \ie
\begin{eqnarray}\label{eq:optimal_sol}
\tau^*\rbr{s, a} &=& \cbr{\argmax_{\tau\ge 0}\,\, \EE_{d^\Dcal}\sbr{\tau\cdot r\rbr{s, a}}\big| \EE_{d^\Dcal}\sbr{\Delta\rbr{x; \tau, \phi}} = 0}\\
&=& \cbr{\argmax_{\tau\ge 0}\,\, \EE_{d^\Dcal}\sbr{\tau\cdot r\rbr{s, a}} - \alpha \EE_{p}\sbr{h\rbr{\tau}} \big| \EE_{d^\Dcal}\sbr{\Delta\rbr{x; \tau, \phi}} = 0},
\end{eqnarray}
where $p$ is some distribution over $S\times A$. 

We show the upper bound as an example, and the lower bound can be treated similarly.  We have
\begin{eqnarray}\label{eq:dual_reg}
\rbr{w_u, \tau^*} &=& \argmax_{w\in \Kcal_f}\cbr{\argmax_{\tau\ge 0}\,\, \EE_{w}\sbr{\tau\cdot r\rbr{s, a}}\big| \EE_{w}\sbr{\Delta\rbr{x; \tau, \phi}} = 0}\nonumber \\
&=&\argmax_{w\in \Kcal_f}\cbr{\argmax_{\tau\ge 0}\,\, \EE_{w}\sbr{\tau\cdot r\rbr{s, a}} - \alpha \EE_{p}\sbr{h\rbr{\tau}} \big| \EE_{w}\sbr{\Delta\rbr{x; \tau, \phi}} = 0},
\end{eqnarray}
where the equality comes from~\citet[Theorem 4]{NacDaiKosChoetal19} and the fact the regularization  $\EE_{p}\sbr{h\rbr{\tau}}$ does not depend on $w$. Then, we can solve~\eqref{eq:dual_reg} alternatively for $\rbr{w_u, \tau^*}$ by Lagrangian,
\begin{equation}\label{eq:dual_reg_lagrangian}
\max_{\tau\ge 0}\min_{\beta}\max_{w\in \Kcal_f}\,\, \EE_{w}\sbr{\tau\cdot r\rbr{s, a} + \beta^\top \Delta\rbr{x; \tau, \phi}} - \alpha \EE_{p}\sbr{h\rbr{\tau}}.
\end{equation}
Although the optimal $\tilde\beta^*$ to~\eqref{eq:dual_reg_lagrangian} differs from $\beta^*$, $\rbr{w_u, \tau^*}$ are the same. Once we have the $\rbr{w_u, \tau^*}$, we can recover the original Lagrangian $\tilde\rho_\pi\rbr{w_u} = \EE_{w_u}\sbr{\tau\cdot r\rbr{s, a}}$, since $\EE_{w_u}\sbr{\beta^{*\top} \Delta\rbr{x; \tau^*, \phi}} = 0$ in the original Lagrangian $\EE_w\rbr{\ell\rbr{x; \tau^*, \beta^*}}$ in~\eqref{eq:upper_lower} due to the KKT condition. 

Comparing to the original ALM, the new regularization takes the advantage of ALM while keeps the original computational efficiency. 

%%%%%%%%%%%%%%%%%%%%%%%%%%%%%%%%%%%%%%%%%%%%%%%%%%%%%%%%%%%%%%%%%
\section{Proofs for Statistical Properties}\label{appendix:proofs_theorems}
%%%%%%%%%%%%%%%%%%%%%%%%%%%%%%%%%%%%%%%%%%%%%%%%%%%%%%%%%%%%%%%%%

In this section, we provide the detailed proofs for the asymptotic coverage~\thmref{thm:asymptotic_coverage} and the finite-sample correction~\thmref{thm:finite_sample}. For notation simplicity, we use $\sup$, $\max$ and ${\inf, \min}$ interchangeably. With a little abuse of notation, we use $\int$ as $\sum$ on discrete domain.

%%%--------------------------------------------------------------
\subsection{Asymptotic Coverage}\label{appendix:asymptotic}
%%%--------------------------------------------------------------

\thmref{thm:asymptotic_coverage} follows from a result in~\citet{DucGlyNam16}.  The following notation will be needed:
\begin{itemize}
  \item $\ell\rbr{x; \tau, \beta}= \rbr{1 - \gamma} \beta^\top \phi\rbr{s_0, a_0} + \tau\rbr{s, a}\rbr{r\rbr{s, a} + \gamma \beta^\top\phi\rbr{s', a'} - \beta^\top \phi\rbr{s, a}}$;
  \item $\nbr{f}_1 \defeq \int \abr{f\rbr{s, a}} d^\Dcal\rbr{s, a}dsda$, and $\nbr{\phi\rbr{s, a}}_2 \defeq \sqrt{\inner{\phi}{\phi}}$;
  \item $\nbr{f\rbr{s, a}}_{L^2\rbr{d^\Dcal}}\defeq \EE_{d^\Dcal}\sbr{f^2\rbr{s, a}}^{\frac{1}{2}}$, $\Hcal\subset L^2\rbr{d^\Dcal}$,
    we define $L^{\infty}\rbr{\Hcal}$ be the space of bounded linear functionals on $\Hcal$ with $\nbr{L_1 - L_2}_{\Hcal}\defeq \sup_{h\in\Hcal}|L_1h - L_2h|$ for $L_1, L_2\in L^\infty\rbr{\Hcal}$;
  \item $p = \frac{dP}{d\mu}$, with a Lebesgue measure $\mu$, is the Radon-Nikodym derivative. Abusing notation a bit, we use $\rbr{D_f\rbr{P||Q}, D\rbr{p||q}}$, and $\rbr{\EE_P\sbr{\cdot}, \EE_p\rbr{\cdot}}$ interchangeably. 
\end{itemize}

\begin{definition}\citep[{\bf Hadamard directionally differentiability}]{DucGlyNam16}\label{def:hadamard_diff}
Let $\Qcal$ be the space of signed measures bounded with norm $\nbr{\cdot}_\Hcal$. The functional $T: \Pcal\rightarrow \RR$ is \emph{Hadamard directionally differentiable} at $P\in \Pcal$ tangentially to $B\subset \Qcal$ if for all $H\in B$, there exists $dT_p\rbr{H}\in\RR$ such that for all convergent sequences $t_n\rightarrow 0$ and $\nbr{H_n - H}_\Hcal\rightarrow 0$ that satisfies $P + t_nH_n\in \Pcal$, the following holds
$$
\frac{T\rbr{P + t_n H_n} - T\rbr{P}}{t_n}\rightarrow dT_P\rbr{H},\quad \text{as}\quad n\rightarrow \infty. 
$$
We say $T:\Pcal \rightarrow \RR$ has an \emph{influence function} $T^1\rbr{x; P}\in\RR$ if
$$
dT_P\rbr{Q - P}\defeq \int T^1\rbr{x; P}d\rbr{Q-P}\rbr{x},
$$
and $\EE_P\sbr{T^{1}\rbr{x; P}} = 0$. 
\end{definition}

We consider $f$ in $D_f$ satisfying the following assumption~\citep{DucGlyNam16},
\vspace{-2mm}
\begin{assumption}[Smoothness of $f$-divergence]\label{asmp:f_div}
The function $f: \RR_+\rightarrow \RR$ is convex, three times differentiable in a neighborhood of $1$, $f\rbr{1} = f'\rbr{1} = 0$ and $f''\rbr{1} = 2$.\footnote{That $f(1)=0$ is required in the definition of f-divergence.  If $f'\rbr{1}\ne0$, one can ``lift'' it by $\bar{f}\rbr{t}=f(t) - f'(1)(t-1)$ so that the new function satisfies $\bar{f}'\rbr{1}=0$. $f''\rbr{1} = 2$ is assumed for easier calculation without loss of generality, as discussed in~\citet{DucGlyNam16}. For example, one can use $f\rbr{t} = 2x\log x- 2\rbr{x-1}$ for modified KL-divergence, $f\rbr{t} = \rbr{x-1}^2$ for $\chi^2$-divergence, and $f\rbr{t} = -\log x +(x-1)-\frac{1}{2}\rbr{x-1}^2$ for reverse KL-divergence. 
}
\end{assumption}
Then, the following theorem, which slightly simplifies~\citet[Theorem 10]{DucGlyNam16}, characterizes the asymptotic coverage of the general uncertainty estimation,
\begin{theorem}[General asymptotic coverage]\label{thm:general}
Let~\asmpref{asmp:f_div} hold and $\Hcal = \cbr{h\rbr{x; \tau, \beta}}$, where $h\rbr{x; \tau, \beta}$ is Lipschitz and the space of $\rbr{\tau, \beta}$ is compact. Denote $B\subset\Qcal$ be such that $\nbr{\sqrt{n}\rbr{\Phat_n - P_0} - G}_\Hcal\rightarrow 0$ with $G\in B$. Assume $T:\Pcal\rightarrow \RR$ is Hadamard differentiable at $P_0$ tangentially to $B$ with influence function $T^1\rbr{\cdot; P_0}$ and $dT_P$ is defined and continuous on the whole $\Qcal$, then,  
$$
\lim_{n\rightarrow \infty}\PP\rbr{T\rbr{P_0}\in \cbr{T\rbr{P}: D_f\rbr{P||P_n}\le \frac{\xi}{n}}} = \PP\rbr{\chi_1^2\le\xi}. 
$$
\end{theorem}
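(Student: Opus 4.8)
The plan is to deduce the claim from the empirical-$\varphi_*$-divergence / generalized-empirical-likelihood machinery of \citet{DucGlyNam16}, of which this statement is essentially Theorem~10; the present hypotheses — Lipschitzness of $h$, compactness of the $\rbr{\tau,\beta}$-space, and the existence of an influence function $T^1$ — are exactly what is needed to run their argument, so I would reproduce its core steps. Write $\theta_0\defeq T\rbr{P_0}$, let $\delta_n\defeq T\rbr{\Phat_n}-\theta_0$ be the plug-in estimation error, and define the profile $f$-divergence statistic $R_n\defeq n\inf\cbr{D_f\rbr{P\|\Phat_n}: T\rbr{P}=\theta_0}$, the infimum being over reweightings $P$ of $\Phat_n$. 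Since the simplex is compact and $D_f\rbr{\cdot\|\Phat_n}$ is lower semicontinuous, this infimum is attained, so $\theta_0\in\cbr{T\rbr{P}:D_f\rbr{P\|\Phat_n}\le\xi/n}$ if and only if $R_n\le\xi$. Hence the theorem is equivalent to $R_n\Rightarrow\chi_1^2$, and the whole proof reduces to showing $R_n=n\delta_n^2/\hat\sigma_n^2+o_p\rbr{1}$ for a variance estimate $\hat\sigma_n^2$ consistent for $\sigma^2\defeq\EE_{P_0}\!\sbr{T^1\rbr{x;P_0}^2}$.

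\emph{Step 1 (functional CLT and delta method).} Lipschitzness of $\rbr{\tau,\beta}\mapsto h\rbr{x;\tau,\beta}$ over the compact index set gives polynomial bracketing numbers for $\Hcal$, whence $\Hcal$ is $P_0$-Donsker and $\sqrt n\rbr{\Phat_n-P_0}\Rightarrow G$ in $L^\infty\rbr{\Hcal}$ for a tight Gaussian process $G$; this supplies the tangent set $B$ in the hypothesis. Combining this weak convergence with the assumed Hadamard differentiability of $T$ at $P_0$ — with linear, $\Qcal$-continuous derivative $dT_{P_0}\rbr{H}=\int T^1\rbr{x;P_0}\,dH$ — and the functional delta method~\citep{Romisch14,VaaWel96} yields $\sqrt n\,\delta_n\Rightarrow dT_{P_0}\rbr{G}\sim N\rbr{0,\sigma^2}$; in particular $\delta_n=O_p\rbr{n^{-1/2}}$.

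\emph{Step 2 (matching bounds on $R_n$).} For the upper bound, set $g_i\defeq T^1\rbr{x_i;\Phat_n}$ (which has empirical mean zero) and consider the one-parameter tilt $P_t$ given by $w_i\rbr{t}\propto f'_*\rbr{t\,g_i}$ (for $D_f$ the $\chi^2$-divergence simply $w_i\rbr{t}=\tfrac1n\rbr{1+tg_i}$); a Taylor expansion using \asmpref{asmp:f_div} gives $D_f\rbr{P_t\|\Phat_n}=t^2\hat\sigma_n^2+o\rbr{t^2}$ with $\hat\sigma_n^2\defeq\tfrac1n\sum_i g_i^2$, while $t\mapsto T\rbr{P_t}$ has derivative $\hat\sigma_n^2$ at $t=0$, so solving $T\rbr{P_{t_n}}=\theta_0$ gives $t_n=-\delta_n/\hat\sigma_n^2+o_p\rbr{n^{-1/2}}$ and therefore $R_n\le n\,D_f\rbr{P_{t_n}\|\Phat_n}=n\delta_n^2/\hat\sigma_n^2+o_p\rbr{1}$. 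For the lower bound, any feasible $P$ has $D_f\rbr{P\|\Phat_n}\le\xi/n$, which forces $\nbr{\sqrt n\rbr{P-\Phat_n}}_{\Hcal}=O_p\rbr{1}$; the Hadamard expansion then holds \emph{uniformly} over that $f$-divergence ball, so within it $T\rbr{P}=\theta_0$ amounts to the single scalar moment condition $\EE_P\sbr{g}=-\delta_n+o_p\rbr{n^{-1/2}}$, and minimizing $D_f\rbr{P\|\Phat_n}$ over reweightings subject only to this one moment condition has leading value $\delta_n^2/\hat\sigma_n^2$ (the normalization $f''\rbr{1}=2$ makes $D_f$ agree to second order with the $\chi^2$-divergence), giving $R_n\ge n\delta_n^2/\hat\sigma_n^2-o_p\rbr{1}$. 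Since $\hat\sigma_n^2\to_p\sigma^2$ by the law of large numbers and continuity of the saddle point in $P$, Step~1 then gives $R_n=\rbr{\sqrt n\,\delta_n}^2/\hat\sigma_n^2+o_p\rbr{1}\Rightarrow\zeta^2/\sigma^2$ with $\zeta\sim N\rbr{0,\sigma^2}$, i.e.\ $\chi_1^2$.

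\emph{Main obstacle.} The delicate point is the uniform linearization of $T$ over the whole $f$-divergence ball of radius $\xi/n$ used for the lower bound: it requires a quantitative comparison between $D_f\rbr{P\|\Phat_n}$ and the $\Hcal$-seminorm of $P-\Phat_n$, so that after rescaling by $\sqrt n$ the ball lands in a fixed bounded subset of $L^\infty\rbr{\Hcal}$ where the stochastic-equicontinuity control coming from the Donsker property is available; moreover, because $T=\tilde\rho_\pi$ is itself a $\sup_\tau\inf_\beta$, its Hadamard differentiability, the linearity of $dT_{P_0}$, the explicit form $T^1\rbr{\cdot;P}=\ell\rbr{\cdot;\tau^*\rbr{P},\beta^*\rbr{P}}-T\rbr{P}$, and the consistency of $\hat\sigma_n^2$ all rest on the assumed \emph{uniqueness} of the saddle point, extracted via a Danskin/envelope argument. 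It is precisely this reduction to a single scalar functional that keeps the limit at one degree of freedom, instead of the $\chi^2_m$ that a direct treatment of the $p$ estimating equations would give.
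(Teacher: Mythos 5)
The paper does not actually prove this statement: Theorem~\ref{thm:general} is imported as a ``slight simplification'' of \citet[Theorem~10]{DucGlyNam16}, and the paper's own work consists only of verifying its hypotheses (Lemmas~\ref{lem:lip_cont} and~\ref{lem:hadamard_diff}) for the specific functional $T$. Your proposal therefore reconstructs the cited argument rather than paralleling anything in the paper, and the reconstruction is essentially sound, but it takes the dual route: you profile out the constraint and prove a Wilks-type expansion $R_n = n\delta_n^2/\hat\sigma_n^2 + o_p(1) \Rightarrow \chi_1^2$ for $R_n \defeq n\inf\cbr{D_f\rbr{P\|\Phat_n}: T\rbr{P}=T\rbr{P_0}}$, whereas \citet{DucGlyNam16} expand the two \emph{endpoints} of the confidence set, $\sup/\inf_{D_f\le\xi/n}T\rbr{P} = T\rbr{\Phat_n} \pm \sqrt{\xi\,\mathrm{Var}_{P_0}\rbr{T^1}/n} + o_P\rbr{n^{-1/2}}$, and then apply the scalar CLT for $\sqrt{n}\rbr{T\rbr{\Phat_n}-T\rbr{P_0}}$. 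The two routes are linked exactly by your equivalence $T\rbr{P_0}\in C_n \Leftrightarrow R_n\le\xi$; the endpoint route buys you freedom from having to establish attainment/closedness of the level set $\cbr{P: T\rbr{P}=T\rbr{P_0}}$ and from the uniform linearization over that constrained set, at the price of a variance-expansion lemma for $\sup_{D_f\le\xi/n}\EE_P\sbr{Z}$ (the analogue of Lemma~\ref{lem:variance_upper}, which the paper does state for its finite-sample analysis). Two points in your sketch deserve tightening: (i) the upper bound on $R_n$ tilts along $g_i=T^1\rbr{x_i;\Phat_n}$, but the hypotheses only provide the influence function at $P_0$, so you should either tilt along $T^1\rbr{x_i;P_0}$ recentered by its empirical mean or add a consistency condition for $P\mapsto T^1\rbr{\cdot;P}$ (this is where the uniqueness of the saddle point, which the paper's Theorem~\ref{thm:asymptotic_coverage} assumes but Theorem~\ref{thm:general} does not mention, actually enters); (ii) the uniform linearization over the $f$-divergence ball, which you correctly flag as the main obstacle, is precisely the content you would have to import from the Donsker/asymptotic-equicontinuity machinery of \citet{VaaWel96} rather than derive from the stated hypotheses alone. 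Neither point invalidates the approach; both are resolved in the cited reference.
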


Denote the $T\rbr{P} = \max_{\tau\ge 0}\min_{\beta\in \RR^p} \EE_P\sbr{\ell\rbr{x; \tau, \beta}}$ by convexity-concavity, our proof for~\thmref{thm:asymptotic_coverage} will be mainly checking the conditions required by~\thmref{thm:general}: {\bf i)}, Lipschitz continuity of functions in $\Hcal$, and {\bf ii)} Hadamard differentiability of $T\rbr{P}$.  

We first specify the regularity assumption for stationary distribution ratio:
\begin{assumption}[Stationary ratio regularity]\label{asmp:bounded_ratio}
The target stationary state-action correction rato is bounded: $\nbr{\tau^*}_\infty\le C_\tau<\infty$, 
and $\tau^*\in\Fcal_\tau$ where $\Fcal_\tau$ is a convex, compact and bounded RKHS space with bounded kernel function $\nbr{k\rbr{\rbr{\cdot, \cdot}, \rbr{s, a}}}_{\Fcal_\tau}\le K$.
\end{assumption}
The bounded ratio component of~\asmpref{asmp:bounded_ratio} is a standard assumption used in~\citet{NacChoDaiLi19,ZhaDaiLiSch20,UehHuaJia19}. The latter part regarding $\Fcal_\tau$ is required for the existence of solutions. In fact, the RKHS assumption $\Fcal_\tau$ is already quite flexible, and it includes deep neural networks by adopting the neural tangent kernels~\citep{AroDuHuLietal19}. 

With~\asmpref{asmp:bounded_ratio}, we can immediately obtain 
$$
T\rbr{P} = \max_{\tau\in\Fcal_\tau}\min_{\beta\in \RR^p} \EE_P\sbr{\ell\rbr{x; \tau, \beta}} = \min_{\beta\in \RR^p}\max_{\tau\in\Fcal_\tau} \EE_P\sbr{\ell\rbr{x; \tau, \beta}}
$$
by the minimax theorem~\citep[Proposition 2.1]{EkeTem99}. By this equivalence, we will focus on the $\min$-$\max$ form. 

Since $r \in\sbr{0, \rmax}$, one has for every $\pi$ that $Q^\pi \le \rmax/(1-\gamma)$. Therefore, it is reasonable to assume the following regularity conditions for $\phi$:
\begin{assumption}[Embedding feature regularity]\label{asmp:phi_regularity}
There exist some finite constants $C_\beta$ and $C_\phi$, such that $\nbr{\beta}_2\le C_\beta$, $\nbr{\phi}_2\le C_\phi$.  Moreover, $\phi\rbr{s, a}$ is $L_\phi$-Lipschitz continuous. 
\end{assumption}
This assumption implies $\nbr{\beta^\top\phi}_\infty\le \nbr{\beta}_2\nbr{\phi}_2\le C_\beta C_\phi$ and Lipschitz continuity of $\beta^\top\phi\rbr{s, a}$. We define $\Fcal_\beta\defeq \cbr{\beta| \nbr{\beta}_2\le C_\beta}$. 

\begin{lemma}[Lipschitz continuity]\label{lem:lip_cont}
Under \asmpsref{asmp:bounded_ratio} and~\ref{asmp:phi_regularity}, 
function $\ell$ satisfies $\nbr{\ell\rbr{x; \tau, \beta}}_\infty\le M$ and is $C_\ell$-Lipschitz in $\rbr{\tau, \beta}$, for some proper finite constants $M$ and $C_\ell$.
\end{lemma}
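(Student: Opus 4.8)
The plan is to establish both claims by elementary estimates, using only the boundedness parts of~\asmpsref{asmp:bounded_ratio} and~\ref{asmp:phi_regularity} together with the structural fact that, for fixed $x$, the map $\rbr{\tau,\beta}\mapsto\ell\rbr{x;\tau,\beta}$ is bilinear in $\rbr{\tau\rbr{s,a},\beta}$ up to the $\tau$-linear reward term, hence Lipschitz on any bounded domain. A preliminary reduction I would use repeatedly is the reproducing inequality: since $\Fcal_\tau$ is a bounded RKHS with $\nbr{k\rbr{\rbr{\cdot,\cdot},\rbr{s,a}}}_{\Fcal_\tau}\le K$, every $\tau\in\Fcal_\tau$ obeys $\abr{\tau\rbr{s,a}} = \abr{\inner{\tau}{k\rbr{\rbr{\cdot,\cdot},\rbr{s,a}}}}\le K\nbr{\tau}_{\Fcal_\tau}$, which is uniformly bounded over $\Fcal_\tau$ (absorbing the RKHS-diameter into a finite constant, still denoted $C_\tau$); similarly~\asmpref{asmp:phi_regularity} gives $\abr{\beta^\top\phi\rbr{s,a}}\le\nbr{\beta}_2\nbr{\phi}_2\le C_\beta C_\phi$ for $\beta\in\Fcal_\beta$.

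For the sup-norm bound I would split $\ell$ into its three summands and apply the triangle inequality, obtaining
\[
\abr{\ell\rbr{x;\tau,\beta}}\le\rbr{1-\gamma}C_\beta C_\phi + C_\tau\rbr{\rmax + \rbr{1+\gamma}C_\beta C_\phi}\, ,
\]
so the right-hand side serves as $M$. For the Lipschitz bound, I would fix $x$ and two parameter pairs, insert the hybrid term $\tau_1\rbr{s,a}\rbr{r+\gamma\beta_2^\top\phi\rbr{s',a'}-\beta_2^\top\phi\rbr{s,a}}$ to separate the $\beta$-variation from the $\tau$-variation, and write
\begin{align*}
\ell\rbr{x;\tau_1,\beta_1}-\ell\rbr{x;\tau_2,\beta_2}
&= \rbr{1-\gamma}\rbr{\beta_1-\beta_2}^\top\phi\rbr{s_0,a_0}\\
&\quad + \tau_1\rbr{s,a}\rbr{\gamma\rbr{\beta_1-\beta_2}^\top\phi\rbr{s',a'} - \rbr{\beta_1-\beta_2}^\top\phi\rbr{s,a}}\\
&\quad + \rbr{\tau_1\rbr{s,a}-\tau_2\rbr{s,a}}\rbr{r+\gamma\beta_2^\top\phi\rbr{s',a'}-\beta_2^\top\phi\rbr{s,a}}.
\end{align*}
Bounding the first two terms by $\rbr{\rbr{1-\gamma}+\rbr{1+\gamma}C_\tau}C_\phi\,\nbr{\beta_1-\beta_2}_2$ and the third, via $\nbr{\tau_1-\tau_2}_\infty\le K\nbr{\tau_1-\tau_2}_{\Fcal_\tau}$, by $K\rbr{\rmax+\rbr{1+\gamma}C_\beta C_\phi}\,\nbr{\tau_1-\tau_2}_{\Fcal_\tau}$ yields the claim with $C_\ell$ the larger of the two coefficients, distances being measured in the product norm $\nbr{\tau_1-\tau_2}_{\Fcal_\tau}+\nbr{\beta_1-\beta_2}_2$.

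There is no substantive obstacle; the computation is routine. The one point that needs care is that the bound on $\abr{\tau\rbr{s,a}}$ must be uniform over the \emph{feasible class} $\Fcal_\tau$, not merely at the optimizer $\tau^*$ — this is exactly why~\asmpref{asmp:bounded_ratio} posits $\Fcal_\tau$ to be a bounded RKHS with bounded kernel rather than only bounding $\tau^*$. It is also worth checking that the norm used for the Lipschitz estimate is compatible with the compactness hypothesis of~\thmref{thm:general} and the covering-number bounds of~\thmref{thm:finite_sample}; note that the $L_\phi$-Lipschitz clause of~\asmpref{asmp:phi_regularity} is \emph{not} needed for this lemma (it is used elsewhere, to control continuity of the integrands in $x$).
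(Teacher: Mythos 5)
Your proof is correct and follows essentially the same route as the paper's: a triangle-inequality bound for $M$, insertion of a hybrid term to separate the $\beta$- and $\tau$-variations (you use $\tau_1(\cdot)(r+\gamma\beta_2^\top\phi(s',a')-\beta_2^\top\phi(s,a))$ where the paper uses the symmetric choice $\tau_2(\cdot)(\cdots\beta_1\cdots)$), the reproducing property to pass from $\nbr{\tau_1-\tau_2}_\infty$ to $K\nbr{\tau_1-\tau_2}_{\Fcal_\tau}$, and the sum norm on $\Fcal_\tau\times\Fcal_\beta$. Your explicit constant for the $\beta$-term, $\rbr{\rbr{1-\gamma}+\rbr{1+\gamma}C_\tau}C_\phi$, is in fact the correct one (the paper's displayed coefficient appears to contain a typo), and your observations about uniformity over $\Fcal_\tau$ and the non-use of $L_\phi$ are accurate.
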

\begin{proof}
We first show the boundedness claim.  By the definition of $\ell\rbr{x; \tau, \beta}$, one has
\begin{eqnarray*}
\lefteqn{\nbr{\ell\rbr{x; \tau, \beta}}_\infty} \\
&\le& \rbr{1 - \gamma}\nbr{\beta^\top\phi}_\infty + \nbr{\tau\rbr{s, a}\rbr{r\rbr{s, a} + \gamma \beta^\top\phi\rbr{s', a'} - \beta^\top \phi\rbr{s, a}}}_\infty \\
&\le &
\rbr{1 - \gamma}\nbr{\beta^\top\phi}_\infty + \nbr{\tau\rbr{s, a}}_{\infty} \rbr{r\rbr{s, a} + \gamma {\beta^\top\phi\rbr{s', a'} - \beta^\top \phi\rbr{s, a}}}\\
&\le&
\rbr{1 - \gamma} C_\beta C_\phi + C_\tau \rbr{R_{\max} + \rbr{1 + \gamma}C_\beta C_\phi} \\
& = &\rbr{C_\tau + 1}\rbr{1 - \gamma} C_\beta C_\phi + C_\tau R_{\max} \defeq M.
\end{eqnarray*}

We equip $\Fcal_\tau\times\Fcal_\beta$ with the norm 
\begin{align}
\nbr{\rbr{\tau, \beta}} \defeq \nbr{\tau}_{\Fcal_\tau} + \nbr{\beta}_{2},
\end{align}
Then, we show the Lipschitz continuity of $\ell\rbr{x; \tau, \beta}$ in $\rbr{\tau, \beta}$,
\begin{eqnarray*}
\lefteqn{\abr{\ell\rbr{x; \tau_1, \beta_1} - \ell\rbr{x; \tau_2, \beta_2}}} \\
&\le&  \rbr{1 - \gamma}\abr{\phi\rbr{s_0, a_0}^\top\rbr{\beta_1 - \beta_2}} + \abr{\tau_2\rbr{s, a}\rbr{\beta_1 - \beta_2}^\top \rbr{\gamma\phi\rbr{s', a'} + \phi\rbr{s, a}}}\\
&+& \abr{\rbr{\tau_1\rbr{s, a} - \tau_2\rbr{s, a}} \rbr{r\rbr{s, a} + \gamma \beta_1^\top\phi\rbr{s', a'} - \beta_1^\top\phi\rbr{s, a}}}\\
&\le&\rbr{1 - \gamma} \rbr{\rbr{2 + \gamma}C_\phi + C_\tau}\nbr{\beta_1 - \beta_2}_2 + \rbr{R_{\max} + \rbr{1 + \gamma}C_\phi C_\beta}\abr{\tau_1\rbr{s, a} - \tau_2\rbr{s, a}},\\
&\le&\rbr{1 - \gamma} \rbr{\rbr{2 + \gamma}C_\phi + C_\tau}\nbr{\beta_1 - \beta_2}_2 + \rbr{R_{\max} + \rbr{1 + \gamma}C_\phi C_\beta}K\nbr{\tau_1 - \tau_2}_{\Fcal_\tau},\\
&\le& C_\ell \rbr{\nbr{\beta_1 - \beta_2}_2 + \nbr{\tau_1 - \tau_2}_{\Fcal_\tau}},
\end{eqnarray*}
which implies the $\ell\rbr{x; \tau, \beta}$ is $C_\ell$-Lipschitz continuous with 
$$
C_\ell\defeq \max\cbr{\rbr{1 - \gamma} \rbr{\rbr{2 + \gamma}C_\phi + C_\tau, \rbr{1 + \gamma}C_\phi C_\beta}K }.
$$ 
\end{proof}
We now check the Hadamard directional differentiability of $T\rbr{P}$. The following proof largely follows~\citet{DucGlyNam16,Romisch14}. 
\begin{lemma}[Hadamard Differentiability]\label{lem:hadamard_diff}
Under~\asmpsref{asmp:bounded_ratio} and~\ref{asmp:phi_regularity}, the functional $T\rbr{P} = \min_{\beta\in \Fcal_\beta}\max_{\tau\in\Fcal_\tau} \EE_P\sbr{\ell\rbr{x; \tau, \beta}}$ is Hadamard directionally differentiable on $\Pcal$ tangentially to $B\rbr{\Hcal, P_0}\subset L^\infty\rbr{\Hcal}$ with derivative
\begin{equation*}
dT_P\rbr{H}\defeq \int \ell\rbr{x; \tau^*, \beta^*}dH\rbr{x},
\end{equation*}
where $\rbr{\beta^*, \tau^*} = \argmin_{\beta\in \Fcal_\beta}\argmax_{\tau\in\Fcal_\tau} \EE_{P_0}\sbr{\ell\rbr{x; \tau, \beta}}$. 
\end{lemma}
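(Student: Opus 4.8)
The plan is to establish Hadamard directional differentiability of $T$ at $P_0$ by exploiting the bi-affine minimax structure of $P\mapsto\EE_P\sbr{\ell\rbr{x;\tau,\beta}}$ (affine in $\tau$ and affine in $\beta$ separately, hence convex--concave) together with an envelope-type argument in the spirit of \citet{DucGlyNam16}. Throughout I fix the saddle point $\rbr{\tau^*,\beta^*}=\argmin_{\beta\in\Fcal_\beta}\argmax_{\tau\in\Fcal_\tau}\EE_{P_0}\sbr{\ell\rbr{x;\tau,\beta}}$, which exists and is unique by the minimax theorem (already invoked above) on the compact convex sets $\Fcal_\tau,\Fcal_\beta$ and by the uniqueness hypothesis of \thmref{thm:asymptotic_coverage}; in particular $\beta^*$ attains the outer $\min$ in $\min_\beta\max_\tau$, $\tau^*$ attains the outer $\max$ in $\max_\tau\min_\beta$, the common value is $T\rbr{P_0}=\EE_{P_0}\sbr{\ell\rbr{x;\tau^*,\beta^*}}$, and (this is the precise form in which uniqueness enters) $\tau^*$ is the unique maximizer of $\tau\mapsto\EE_{P_0}\sbr{\ell\rbr{x;\tau,\beta^*}}$ while $\beta^*$ is the unique minimizer of $\beta\mapsto\EE_{P_0}\sbr{\ell\rbr{x;\tau^*,\beta}}$. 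Fix sequences $t_n\to 0$, $\nbr{H_n-H}_\Hcal\to 0$ with $P_0+t_nH_n\in\Pcal$.

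The first step is a sandwich. Using $\beta^*$ as a feasible point for the outer $\min$ at $P_0+t_nH_n$ (and its optimality at $P_0$), and dually using $\tau^*$ as a feasible point for the outer $\max$, we get
\[
\min_{\beta\in\Fcal_\beta}\EE_{P_0+t_nH_n}\sbr{\ell\rbr{x;\tau^*,\beta}}-\min_{\beta\in\Fcal_\beta}\EE_{P_0}\sbr{\ell\rbr{x;\tau^*,\beta}}\;\le\; T\rbr{P_0+t_nH_n}-T\rbr{P_0}\;\le\;\max_{\tau\in\Fcal_\tau}\EE_{P_0+t_nH_n}\sbr{\ell\rbr{x;\tau,\beta^*}}-\max_{\tau\in\Fcal_\tau}\EE_{P_0}\sbr{\ell\rbr{x;\tau,\beta^*}}.
\]
It therefore suffices to show that both outer quantities, divided by $t_n$, converge to $dT_{P_0}\rbr{H}\defeq\int\ell\rbr{x;\tau^*,\beta^*}\,dH\rbr{x}$.

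I would then treat the upper term in detail (the lower one is symmetric, with $(\tau,\max)$ and $(\beta,\min)$ interchanged). Compactness of $\Fcal_\tau$ gives a maximizer $\tau_n$ of $\tau\mapsto\EE_{P_0+t_nH_n}\sbr{\ell\rbr{x;\tau,\beta^*}}$; evaluating the first $\max$ at $\tau_n$ and the second at $\tau^*$, and writing $\EE_{P_0+t_nH_n}=\EE_{P_0}+t_n\int(\cdot)\,dH_n$ with $\ell\rbr{\cdot;\tau,\beta^*}\in\Hcal$, yields
\[
\int\ell\rbr{x;\tau^*,\beta^*}\,dH_n\;\le\;\frac{1}{t_n}\rbr{\max_\tau\EE_{P_0+t_nH_n}\sbr{\ell\rbr{x;\tau,\beta^*}}-\max_\tau\EE_{P_0}\sbr{\ell\rbr{x;\tau,\beta^*}}}\;\le\;\int\ell\rbr{x;\tau_n,\beta^*}\,dH_n.
\]
Since $\nbr{\ell}_\infty\le M$ (\lemref{lem:lip_cont}) and $\ell\rbr{\cdot;\tau,\beta^*}\in\Hcal$, $\sup_\tau\abr{t_n\int\ell\rbr{x;\tau,\beta^*}dH_n}\le t_n\nbr{H_n}_\Hcal\to0$, so the perturbed objectives converge uniformly to $\EE_{P_0}\sbr{\ell\rbr{x;\tau,\beta^*}}$; combined with compactness of $\Fcal_\tau$ and uniqueness of the limiting maximizer $\tau^*$, the standard $\argmax$-continuity (Berge) argument gives $\tau_n\to\tau^*$. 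Decomposing $\int\ell\rbr{x;\tau_n,\beta^*}dH_n-\int\ell\rbr{x;\tau^*,\beta^*}dH=\int\ell\rbr{x;\tau_n,\beta^*}d\rbr{H_n-H}+\int\rbr{\ell\rbr{x;\tau_n,\beta^*}-\ell\rbr{x;\tau^*,\beta^*}}dH$, the first term is at most $\nbr{H_n-H}_\Hcal\to0$ (as $\ell\rbr{\cdot;\tau_n,\beta^*}\in\Hcal$) and the second $\to0$ by $C_\ell$-Lipschitzness of $\ell$ in $\tau$ (\lemref{lem:lip_cont}), $\tau_n\to\tau^*$, and finiteness of $H$; likewise $\int\ell\rbr{x;\tau^*,\beta^*}dH_n\to\int\ell\rbr{x;\tau^*,\beta^*}dH$. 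Hence the upper term, the lower term, and so $\rbr{T(P_0+t_nH_n)-T(P_0)}/t_n$ all converge to $dT_{P_0}\rbr{H}$. Finally, $H\mapsto\int\ell\rbr{x;\tau^*,\beta^*}dH$ is linear and, since $\nbr{\ell}_\infty\le M$, bounded on all of $\Qcal$, hence defined and continuous there; and because $\int dH=0$ for $H=Q-P_0$, $dT_{P_0}\rbr{H}=\int\sbr{\ell\rbr{x;\tau^*,\beta^*}-T\rbr{P_0}}dH$, exhibiting the mean-zero influence function $T^1\rbr{x;P_0}=\ell\rbr{x;\tau^*,\beta^*}-T\rbr{P_0}$ in the sense of Definition~\ref{def:hadamard_diff}.

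I expect the main obstacle to be controlling the perturbed optimizer $\tau_n$ (and the analogous $\beta_n$ for the lower bound): the step $\tau_n\to\tau^*$ is where every hypothesis is used at once --- compactness of $\Fcal_\tau,\Fcal_\beta$ for existence and for the $\argmax$-continuity argument; the boundedness and Lipschitz bounds of \lemref{lem:lip_cont} together with $\ell\rbr{\cdot;\tau,\beta}\in\Hcal$ for the uniform convergence of the perturbed objectives and for controlling $\int\ell\,d\rbr{H_n-H}$ via $\nbr{H_n-H}_\Hcal$; and, crucially, the uniqueness of the optimizer to collapse the envelope derivative to a single linear functional. Without uniqueness one would still obtain Hadamard \emph{directional} differentiability, but with $dT_{P_0}\rbr{H}=\inf_{\beta}\sup_{\tau}\int\ell\rbr{x;\tau,\beta}dH$ over the solution sets, which is generally not linear in $H$; linearity of $dT_{P_0}$ is exactly what \thmref{thm:general} needs to obtain the $\chi^2_{(1)}$ rather than a $\chi^2_{(m)}$ limiting law.
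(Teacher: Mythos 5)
Your proof is correct and rests on the same ingredients as the paper's own argument --- the envelope/Danskin-type analysis of \citet{DucGlyNam16}, uniform convergence of the perturbed objectives via $\nbr{H_n-H}_\Hcal\rightarrow 0$, the boundedness and Lipschitz constants of \lemref{lem:lip_cont}, compactness of $\Fcal_\tau\times\Fcal_\beta$, and uniqueness of the optimizer --- but it is organized differently and, in one place, more tightly. The paper treats the $\limsup$ and $\liminf$ asymmetrically on the full minimax: for the $\limsup$ it plugs in $\beta^*$ and bounds the difference by $\max_\tau t_n\Htil_n\rbr{\tau,\beta^*}$, then passes to the limit, a step that as written only yields $\max_\tau\Htil\rbr{\tau,\beta^*}$ and silently identifies this with $\Htil\rbr{\tau^*,\beta^*}$; for the $\liminf$ it runs a more involved argument through the $\epsilon$-solution sets $S_P\rbr{\epsilon}$ and convergent subsequences $\tilde\beta_m^*$. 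Your symmetric sandwich instead reduces the two-level problem to two one-level envelope problems at the outset, and in each you carry out the correct two-sided Danskin bound $\int\ell\rbr{x;\tau^*,\beta^*}dH_n\le\cdots\le\int\ell\rbr{x;\tau_n,\beta^*}dH_n$ followed by the argmax-continuity step $\tau_n\rightarrow\tau^*$; this is precisely the repair the paper's $\limsup$ step needs, and it makes the $\liminf$ direction no harder than the $\limsup$. The one caveat, which you flag yourself, is that both your argument and the paper's require uniqueness in the partial sense ($\tau^*$ the unique maximizer of $\tau\mapsto\EE_{P_0}\sbr{\ell\rbr{x;\tau,\beta^*}}$ and $\beta^*$ the unique minimizer of $\beta\mapsto\EE_{P_0}\sbr{\ell\rbr{x;\tau^*,\beta}}$), which is a slightly stronger reading of ``the optimal solution to the Lagrangian is unique'' than uniqueness of the saddle pair alone; without it the perturbed optimizers need not converge to $\rbr{\tau^*,\beta^*}$ and the directional derivative need not be linear in $H$, exactly as you note.
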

\begin{proof}
For convenience, we define
\begin{equation*}
\Htil\rbr{\tau, \beta}\defeq \int \ell\rbr{x; \tau, \beta}dH\rbr{x},
\end{equation*}
where $H$ is associated with a measure in $\Qcal$. 

We first show the upper bound convergence. For $H_n\in B\rbr{\Hcal, P_0}$ with $\nbr{H_n - H}_\Hcal\rightarrow 0$, for any sequence $t_n\rightarrow 0$, we have
\begin{eqnarray*}
\lefteqn{T\rbr{P_0 + t_nH_n} - T\rbr{P_0}} \\
&=& \min_{\beta\in \Fcal_\beta}\max_{\tau\in\Fcal_\tau} \rbr{\EE_{P_0}\sbr{\ell\rbr{x; \tau, \beta}} + t_n\Htil_n\rbr{\tau, \beta} } - \min_{\beta\in \Fcal_\beta}\max_{\tau\in\Fcal_\tau} \EE_{P_0}\sbr{\ell\rbr{x; \tau, \beta}}\\
&\le& \max_{\tau\in\Fcal_\tau}\rbr{\EE_{P_0}\sbr{\ell\rbr{x; \tau, \beta^*}} + t_n\Htil_n\rbr{\tau, \beta^*}} - \EE_{P_0}\sbr{\ell\rbr{x; \tau, \beta^*}}\\
&\le& \max_{\tau\in\Fcal_\tau} t_n\Htil_n\rbr{\tau, \beta^*}.
\end{eqnarray*}
Denote $\tau_n^* = \argmax_{\tau\in\Fcal_\tau}\Htil_n\rbr{\tau, \beta^*}$, by definition, we have
\begin{eqnarray*}
\max_{\tau\in\Fcal_\tau} \Htil_n\rbr{\tau, \beta^*} - \max_{\tau\in\Fcal_\tau} \Htil\rbr{\tau, \beta^*} \le \Htil_n\rbr{\tau_n^*, \beta^*} - \Htil\rbr{\tau_n^*, \beta^*} 
\le \nbr{\Htil_n - \Htil}_\Hcal \rightarrow 0.
\end{eqnarray*}
Therefore, we obtain
\begin{equation*}
\limsup_{n}\frac{1}{t_n} \rbr{T\rbr{P_0 + t_nH_n} - T\rbr{P_0}} \le \Htil\rbr{\tau^*, \beta^*}. 
\end{equation*}

For the lower bound part, we have
\begin{eqnarray*}
  \lefteqn{T\rbr{P_0 + t_nH_n}} \\
&=& \min_{\beta \in \Fcal_\beta} \cbr{ \max_{\tau\in\Fcal_\tau} \rbr{\EE_{P_0}\sbr{\ell\rbr{x; \tau, \beta}} +t_n\Htil_n\rbr{\tau, \beta} } }\\
&=& \min_{\beta\in \Fcal_\beta} \cbr{ {\EE_{P_0}\sbr{\ell\rbr{x; \tau_n\rbr{\beta}, \beta}} +t_n\rbr{\Htil_n\rbr{\tau_n\rbr{\beta}, \beta} -\Htil\rbr{\tau_n\rbr{\beta}, \beta} } } +  t_n\Htil\rbr{\tau_n\rbr{\beta}, \beta} } \\
&\le & \min_{\beta\in \Fcal_\beta} \cbr{ {\EE_{P_0}\sbr{\ell\rbr{x; \tau_n\rbr{\beta}, \beta}} +t_n\nbr{\Htil_n -\Htil}_\Hcal }  + t_n\nbr{\Htil}_\Hcal } \\
&\le&\min_{\beta\in \Fcal_\beta} { {\EE_{P_0}\sbr{\ell\rbr{x; \tau_n\rbr{\beta}, \beta}} } }+ \Ocal\rbr{1}\cdot t_n,
\end{eqnarray*}
where $\tau_n\rbr{\beta} = \argmax_{\tau\in\Fcal_\tau} \rbr{\EE_{P_0}\sbr{\ell\rbr{x; \tau, \beta}} +t_n\Htil_n\rbr{\tau, \beta} }$.

Denote the set of $\epsilon$-ball of solutions w.r.t. $P$ as
\begin{equation*}
S_{P}\rbr{\epsilon}\defeq \cbr{{\beta\in \Fcal_\beta}: \max_{\tau\in\Fcal_\tau}\EE_{P}\sbr{\ell\rbr{x; \tau, \beta}} \le  \min_{\beta\in \Fcal_\beta}\max_{\tau\in\Fcal_\tau}\EE_{P}\sbr{\ell\rbr{x; \tau, \beta}} + \epsilon}.
\end{equation*}
Then, $\beta^*_n \in S_{P_0+t_nH_n}\rbr{0}$ implies $\beta^*_n\in S_{P_0}\rbr{ct_n}$, which in turn implies the sequence of $\beta_n^*$ has a subsequence $\tilde\beta_m^*$ that converges to $\beta^*\in S_{P_0}\rbr{0}$. 

It is straightforward to check the Lipschitz continuity of $\bar\ell\rbr{\beta}\defeq \max_{\tau}\EE\sbr{\ell\rbr{x; \tau, \beta}}$ as
\begin{eqnarray*}
  \lefteqn{\abr{\bar\ell\rbr{\beta_1} - \bar\ell\rbr{\beta_2} }} \\
&\le& \rbr{1 - \gamma}\nbr{\beta_1 - \beta_2}_2\EE_{\mu_0\pi}\sbr{\nbr{\phi_{s_0, a_0}}}_2 + \abr{\max_{\tau\in\Fcal_\tau} \EE\sbr{\tau\cdot r + \beta_1^\top \Delta} - \max_{\tau\in\Fcal_\tau} \EE\sbr{\tau\cdot r + \beta_2^\top \Delta}}\\
&\le&\rbr{1 - \gamma}\nbr{\beta_1 - \beta_2}_2\EE_{\mu_0\pi}\sbr{\nbr{\phi_{s_0, a_0}}}_2 + \max_{\tau\in\Fcal} \abr{\EE\sbr{\tau\cdot r + \beta_1^\top \Delta} - \EE\sbr{\tau\cdot r + \beta_2^\top \Delta}}\\
&\le&\rbr{1 - \gamma}\nbr{\beta_1 - \beta_2}_2\EE_{\mu_0\pi}\sbr{\nbr{\phi_{s_0, a_0}}}_2 + \max_{\tau\in\Fcal} \abr{\EE\sbr{ \rbr{\beta_1 - \beta_2}^\top \Delta}}\\
&\le& \rbr{\rbr{1 - \gamma}C_\phi + C_\tau\rbr{1 + \gamma}C_\phi} \nbr{\beta_1 - \beta_2}_2.
\end{eqnarray*}
Therefore, with $\tilde\beta_n^*\rightarrow \beta^*$, we have
\begin{equation*}
\lim_m \tilde\ell\rbr{\tilde\beta_m^*} = \min_{\beta}\tilde\ell\rbr{\beta} = T\rbr{P_0},
\end{equation*}
and due to the optimality, for any $m$,
\begin{equation*}
\tilde\ell\rbr{\tilde\beta_m^*} \ge \min_{\beta}\tilde\ell\rbr{\beta}. 
\end{equation*}
\begin{eqnarray*}
  \lefteqn{T\rbr{P_0 + t_m H_m} - T\rbr{P_0}} \\
&\ge& \max_{\tau\in\Fcal_\tau}\cbr{\EE_{P_0}\sbr{\ell\rbr{x; \tau, \tilde\beta_m^*}}  + t_n \Htil_n\rbr{\tau, \tilde\beta_m^*}} - \max_{\tau\in\Fcal_\tau}\EE_{P_0}\sbr{\ell\rbr{x; \tau,\tilde\beta_m^*}}\\
&\ge& \EE_{P_0}\sbr{\ell\rbr{x; \tau_m\rbr{\tilde\beta_m^*}, \tilde\beta_m^*}}  + t_n \Htil_n\rbr{\tau_m\rbr{\tilde\beta_m^*}, \tilde\beta_m^*} - \EE_{P_0}\sbr{\ell\rbr{x; \tau_m\rbr{\tilde\beta_m^*}, \tilde\beta_m^*}}\\
&=&t_n \Htil_n\rbr{\tau_m\rbr{\tilde\beta_m^*}, \tilde\beta_m^*},
\end{eqnarray*}
where $\tau_m\rbr{\tilde\beta_m^*} = \argmax_{\tau\in\Fcal_\tau}\EE_{P_0}\sbr{\ell\rbr{x; \tau,\tilde\beta_m^*}}$.

Since $\tilde\beta_m^*\rightarrow \beta^*$, we have $\tau_m\rbr{\tilde\beta_m^*}\rightarrow \tau^*$, and thus,
\begin{eqnarray*}
&&\abr{\Htil_n\rbr{\tau_m\rbr{\tilde\beta_m^*}, \tilde\beta_m^*} - \Htil\rbr{\tau^*, \beta^*}}\\
&\le& \abr{\Htil_n\rbr{\tau_m\rbr{\tilde\beta_m^*}, \tilde\beta_m^*} - \Htil\rbr{\tau_m\rbr{\tilde\beta_m^*}, \tilde\beta_m^*}}  + \abr{\Htil\rbr{\tau_m\rbr{\tilde\beta_m^*}, \tilde\beta_m^*} - \Htil\rbr{\tau^*, \beta^*}}\\
&\le&\nbr{\Htil_n - \Htil}_\Hcal + \abr{\Htil\rbr{\tau_m\rbr{\tilde\beta_m^*}, \tilde\beta_m^*} - \Htil\rbr{\tau^*, \beta^*}}\rightarrow 0,
\end{eqnarray*}
where we use $\ell\rbr{\tau, \beta; x}$ is Lipschitz continuous. Therefore, we obtain
\begin{equation*}
\liminf_{n} \frac{1}{t_n} \rbr{T\rbr{P_0 + t_nH_n} - T\rbr{P_0}} \ge \Htil\rbr{\tau^*, \beta^*}.
\end{equation*}
\end{proof}

\repeatthm{thm:asymptotic_coverage}{\bf (Asymptotic coverage)}
{\thmasymptoticcoverage}

\begin{proof}
The proof is to verify the conditions in~\thmref{thm:general} hold. By~\lemref{lem:dice_region_II}, we can rewrite 
\begin{equation*}
\PP\rbr{\rho_\pi\in C^f_{n, \confrange}} = \PP\rbr{\rho_\pi\in \cbr{\rholag\rbr{w}\big|w\in\Kcal_f}},
\end{equation*}
where, according to the boundedness assumption on $\beta$ in Assumption~\ref{asmp:phi_regularity},
\begin{equation*}
\rholag\rbr{w} = \max_{\tau\ge 0}\min_{\beta\in \Fcal_\beta} \EE_w\sbr{\tau\cdot r + \beta^\top \Delta\rbr{x; \tau, \phi}} = \min_{\beta\in \Fcal_\beta}\max_{\tau\ge 0} \EE_w\sbr{\tau\cdot r + \beta^\top \Delta\rbr{x; \tau, \phi}}.
\end{equation*}
With~\lemref{lem:lip_cont} and~\lemref{lem:hadamard_diff}, the conditions in~\thmref{thm:general} are satisfied. We apply~\thmref{thm:general} on the unique optimal solution $\rbr{\tau^*, \beta^*} =\argmin_{\beta\in \Fcal_\beta}\argmax_{\tau\ge 0}\EE_{P_0}\sbr{\ell\rbr{x; \tau, \beta}}$. We have $dT_P$ is a linear functional on the space of bounded measures and
$$
dT_{P_0}\rbr{H} = \int \ell\rbr{x; \tau^*, \beta^*}dH\rbr{x},
$$
with the canonical gradient given by $T^1\rbr{\cdot; P_0} = \ell\rbr{x; \tau^*, \beta^*} - \EE_{P_0}\sbr{\ell\rbr{x; \tau^*, \beta^*}}$. 
\end{proof}

%%%--------------------------------------------------------------
\subsection{Finite-Sample Correction}\label{appendix:finite_sample}
%%%--------------------------------------------------------------

The previous section considers the asymptotic coverage of~\estname. We now analyze the finite-sample effect for the estimator, for the special case $f\rbr{x} = \rbr{x-1}^2$.  Thus, $D_f$ is the $\chi^2$-divergence.

Consider the optimization problem,
\begin{eqnarray}\label{eq:z_upper}
\max_{w\in\RR^n} w^\top z, \quad \st\quad D_f\rbr{w||\phat_n}\le \frac{\confrange}{n}, w\in \Pcal^{n-1}\rbr{\phat_n}.
\end{eqnarray}
The following result will be needed.
\begin{lemma}\citep[{\bf Theorem 1}]{NamDuc17}\label{lem:variance_upper}
Let $Z\in[0, M]$ be a random variable, $\sigma^2 = Var\rbr{Z}$ and $s_n^2 = \EE_{\phat_n}\sbr{Z^2} - \EE_{\phat_n}\sbr{Z}^2$ as the population and sample variance of $Z$, respectively. For $\confrange\ge0$, we have
\begin{equation*}
\sbr{\sqrt{\frac{\confrange}{n}s_n^2} - \frac{M\confrange}{n}}_+\le \max_w\cbr{\EE_w\sbr{Z}| D_f\rbr{w||\phat_n}\le\frac{\confrange}{n}, w\in\Pcal^{n-1}\rbr{\phat_n}} - \EE_{\phat_n}\sbr{Z}\le \sqrt{\frac{\confrange}{n}s_n^2}.
\end{equation*}
Moreover, for $n\ge \max\cbr{2, \frac{M^2}{\sigma^2}\max\cbr{4\sigma, 22}}$, with probability at least $1 - \exp\rbr{-\frac{3n\sigma^2}{5M^2}}$,
\begin{equation*}
\max_w\cbr{\EE_w\sbr{Z}| D_f\rbr{w||\phat_n}\le\frac{\confrange}{n}, w\in\Pcal^{n-1}\rbr{\phat_n}} = \EE_{\phat_n}\sbr{Z}+ \sqrt{\frac{\confrange}{n}s_n^2}.
\end{equation*}
\end{lemma}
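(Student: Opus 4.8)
First I would make the inner optimization fully explicit. Since $\phat_n$ assigns mass $1/n$ to each of the $n$ atoms, writing $z_1,\dots,z_n$ for the values of $Z$ there, the $\chi^2$-divergence is $D_f\rbr{w||\phat_n}=\frac1n\sum_i\rbr{nw_i-1}^2=n\nbr{w-\frac1n\one}_2^2$, so the constraint $D_f\rbr{w||\phat_n}\le\confrange/n$ is simply $\nbr{w-\frac1n\one}_2^2\le\confrange/n^2$. Setting $\bar z\defeq\EE_{\phat_n}\sbr{Z}$ and using that every feasible $w$ lies in the simplex, $\EE_w\sbr{Z}-\bar z=\sum_i\rbr{w_i-\frac1n}\rbr{z_i-\bar z}$, so the problem becomes maximizing this centered form over $\cbr{w:\ \nbr{w-\frac1n\one}_2\le\sqrt{\confrange}/n,\ w\ge0}$, while noting $\sum_i\rbr{z_i-\bar z}^2=n s_n^2$.

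\textbf{The two-sided bound (all $\confrange\ge0$).} For the upper bound I would drop the nonnegativity constraint and apply Cauchy--Schwarz: $\EE_w\sbr{Z}-\bar z\le\nbr{w-\frac1n\one}_2\,\bigl(\sum_i\rbr{z_i-\bar z}^2\bigr)^{1/2}\le\sqrt{\confrange/n}\cdot\sqrt{s_n^2}$. For the lower bound I would exhibit the candidate $w^\star_i\defeq\frac1n+t\rbr{z_i-\bar z}$; one checks $\sum_i w^\star_i=1$, $D_f\rbr{w^\star||\phat_n}=n^2t^2 s_n^2$, and $\EE_{w^\star}\sbr{Z}-\bar z=t\,n s_n^2$, so the choice $t=\sqrt{\confrange/(n^3 s_n^2)}$ makes the $\chi^2$-constraint tight and attains exactly $\sqrt{\confrange s_n^2/n}$. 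The only possible obstruction is positivity: since $Z\ge0$ forces $z_i-\bar z\ge-\bar z\ge-M$, one has $w^\star_i\ge\frac1n-tM$, so $w^\star$ is feasible (and the upper bound attained, giving the exact identity) as soon as $M^2\confrange\le n s_n^2$. When $M^2\confrange> n s_n^2$, shrink $t$ to the largest value $\bar t$ keeping all coordinates $\ge0$: feasibility is preserved, and since $\bar z-z_i\le\bar z\le M$ we get $\bar t\ge\frac{1}{nM}$, hence $\EE_w\sbr{Z}-\bar z=\bar t\,n s_n^2\ge s_n^2/M$. A single AM--GM step $s_n^2/M+M\confrange/n\ge2\sqrt{\confrange s_n^2/n}\ge\sqrt{\confrange s_n^2/n}$ then yields $s_n^2/M\ge\sqrt{\confrange s_n^2/n}-M\confrange/n$, which together with $s_n^2/M\ge0$ gives the stated lower envelope $\sbr{\sqrt{\confrange s_n^2/n}-M\confrange/n}_+$.

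\textbf{Exact identity with high probability.} The analysis above shows the exact equality in the lemma holds precisely on the event $\cbr{n s_n^2\ge M^2\confrange}$. Since $\confrange$ is a fixed $\Ocal(1)$ quantity, it remains to show this event has probability at least $1-\exp\rbr{-3n\sigma^2/(5M^2)}$ once $n\ge\max\cbr{2,\frac{M^2}{\sigma^2}\max\cbr{4\sigma,22}}$. For this I would write $s_n^2=\frac1n\sum_i\rbr{z_i-\EE Z}^2-\rbr{\bar z-\EE Z}^2$, lower-bound the first term near $\sigma^2$ with a one-sided Bernstein inequality (the bounded variable $\rbr{Z-\EE Z}^2$ has range $\le M^2$ and variance $\le M^2\sigma^2$), and control $\rbr{\bar z-\EE Z}^2$ by a standard mean-deviation bound; the sample-size threshold of order $M^2/\sigma^2$ is exactly what converts the resulting relative deviation of $s_n^2$ into the absolute inequality $n s_n^2\ge M^2\confrange$, while $n\ge2$ keeps $s_n^2$ nondegenerate.

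\textbf{Main obstacle.} The geometric half (recentering, Cauchy--Schwarz, and the shrink-then-AM--GM argument) is routine. The real work is the last step: pinning down the exact exponent $3/(5M^2)$ and verifying that the explicit constants $4\sigma$ and $22$ in the threshold suffice requires a careful Bernstein bound for the non-sum statistic $s_n^2$, with full bookkeeping of the centering correction $\rbr{\bar z-\EE Z}^2$ --- this is precisely the content of~\citet[Theorem~1]{NamDuc17}, which one may either reproduce with care or invoke directly.
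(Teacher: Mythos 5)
Your proposal is correct and follows essentially the same route the paper takes: the paper states this lemma as a citation to \citet{NamDuc17} and proves only the mirror-image statement (Lemma~\ref{lem:variance_lower}, obtained by negating $Z$), and that proof is exactly your recipe --- rewrite the $\chi^2$-ball as an $\ell_2$-ball of radius $\sqrt{\confrange}/n$ around the uniform weights, get the upper bound from Cauchy--Schwarz after dropping nonnegativity, exhibit the Cauchy--Schwarz-tight perturbation $w_i=\tfrac1n+t(z_i-\bar z)$ for the lower bound, and observe that nonnegativity is automatic precisely when $ns_n^2\ge M^2\confrange$. Two small points of comparison. First, in the regime $ns_n^2<M^2\confrange$ your shrink-then-AM--GM argument is actually more explicit than the paper's, which simply asserts the $[\cdot]_+$ envelope there; your version is a genuine (minor) improvement in completeness, though note that $ns_n^2\ge M^2\confrange$ is only \emph{sufficient} for exact equality, not ``precisely'' the event on which it holds. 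Second, for the high-probability step the paper invokes \citet[Theorem~10]{maurer2009empirical} to concentrate $s_n$ around $\sigma$ directly, whereas you propose decomposing $s_n^2$ and applying Bernstein to $(Z-\EE Z)^2$; both work, but neither you nor the paper actually verifies the specific constants $3/(5M^2)$, $4\sigma$, and $22$ --- both defer that bookkeeping to \citet{NamDuc17}, which is acceptable given the lemma is explicitly cited from there.
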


The follow is the symmetric version of \lemref{lem:variance_upper}, which can be obtained immediately by negating the random variable $Z$.  For completeness, we give the proof below, which is adapted from \citet{{NamDuc17}}. Recall that the lower bound is obtained by solving the following:
\begin{eqnarray}\label{eq:z_lower}
\min_{w\in\RR^n} w^\top z, \quad \st\quad D_f\rbr{w||\phat_n}\le \frac{\confrange}{n}, \,\, w\in \Pcal^{n-1}\rbr{\phat_n}.
\end{eqnarray}

\begin{lemma}[Lower bound variance representation]\label{lem:variance_lower}
Under the same conditions in~\lemref{lem:variance_upper}, for $\confrange\ge0$, we have
\begin{equation*}
\sbr{\sqrt{\frac{\confrange}{n}s_n^2} - \frac{M\confrange}{n}}_+\le  \EE_{\phat_n}\sbr{Z} - \min_w\cbr{\EE_w\sbr{Z}| D_f\rbr{w||\phat_n}\le\frac{\confrange}{n}, w\in\Pcal^{n-1}\rbr{\phat_n}}\le \sqrt{\frac{\confrange}{n}s_n^2}.
\end{equation*}
Moreover, for $n\ge \max\cbr{2, \frac{M^2}{\sigma^2}\max\cbr{4\sigma, 22}}$, with probability at least $1 - \exp\rbr{-\frac{3n\sigma^2}{5M^2}}$,
\begin{equation*}
\min_w\cbr{\EE_w\sbr{Z}| D_f\rbr{w||\phat_n}\le\frac{\confrange}{n}, w\in\Pcal^{n-1}\rbr{\phat_n}} = \EE_{\phat_n}\sbr{Z} - \sqrt{\frac{\confrange}{n}s_n^2}.
\end{equation*}
\end{lemma}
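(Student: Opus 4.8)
The plan is to deduce Lemma~\ref{lem:variance_lower} directly from Lemma~\ref{lem:variance_upper} by reflecting the random variable, and then, for completeness, to note how the same bound also follows from reproducing the self-contained argument of~\citet{NamDuc17} with $z$ replaced by $-z$. First I would set $Z' \defeq M - Z$. Since $Z\in[0,M]$ we still have $Z'\in[0,M]$, so $Z'$ satisfies the hypotheses of Lemma~\ref{lem:variance_upper}. Moreover, variance is invariant under translation and sign change, so $\mathrm{Var}(Z')=\mathrm{Var}(Z)=\sigma^2$, and the sample variance is unchanged as well, $\EE_{\phat_n}\sbr{Z'^2}-\EE_{\phat_n}\sbr{Z'}^2 = s_n^2$. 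The feasible set $\cbr{w : D_f\rbr{w||\phat_n}\le \confrange/n,\ w\in \Pcal^{n-1}\rbr{\phat_n}}$ does not involve the random variable at all, so it is identical for the maximization~\eqref{eq:z_upper} applied to $Z'$ and the minimization~\eqref{eq:z_lower} applied to $Z$.

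Next I would use that, for any $w$ in the simplex, $\EE_w\sbr{Z'} = M - \EE_w\sbr{Z}$, because the entries of $w$ sum to one. Hence maximizing $\EE_w\sbr{Z'}$ over the feasible set is equivalent to minimizing $\EE_w\sbr{Z}$, and in particular
$$
\max_w\cbr{\EE_w\sbr{Z'} \,\big|\, D_f\rbr{w||\phat_n}\le \tfrac{\confrange}{n},\ w\in \Pcal^{n-1}\rbr{\phat_n}} = M - \min_w\cbr{\EE_w\sbr{Z} \,\big|\, D_f\rbr{w||\phat_n}\le \tfrac{\confrange}{n},\ w\in \Pcal^{n-1}\rbr{\phat_n}},
$$
together with $\EE_{\phat_n}\sbr{Z'} = M - \EE_{\phat_n}\sbr{Z}$. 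Substituting these two identities into the two-sided bound of Lemma~\ref{lem:variance_upper} applied to $Z'$ gives
$$
\sbr{\sqrt{\tfrac{\confrange}{n}s_n^2} - \tfrac{M\confrange}{n}}_+ \le \rbr{M - \min_w \EE_w\sbr{Z}} - \rbr{M - \EE_{\phat_n}\sbr{Z}} \le \sqrt{\tfrac{\confrange}{n}s_n^2},
$$
and the two copies of $M$ cancel, yielding exactly the claimed sandwich $\sbr{\sqrt{\confrange s_n^2/n}-M\confrange/n}_+\le \EE_{\phat_n}\sbr{Z}-\min_w\EE_w\sbr{Z}\le \sqrt{\confrange s_n^2/n}$. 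The high-probability equality transfers the same way: the event on which Lemma~\ref{lem:variance_upper} gives an exact expression for $Z'$ becomes, after the cancellation, the event on which $\min_w\EE_w\sbr{Z}=\EE_{\phat_n}\sbr{Z}-\sqrt{\confrange s_n^2/n}$, and its probability $1-\exp\rbr{-3n\sigma^2/(5M^2)}$, as well as the sample-size requirement $n\ge\max\cbr{2,\,(M^2/\sigma^2)\max\cbr{4\sigma,22}}$, carry over verbatim since $\sigma$ and $M$ are the same for $Z$ and $Z'$.

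There is essentially no obstacle here; the only point requiring a little care is choosing the reflection that keeps the variable in $[0,M]$, i.e.\ $M-Z$ rather than $-Z$, so that Lemma~\ref{lem:variance_upper} applies without modification — everything else is bookkeeping with the simplex constraint. Since the paper promises a proof "adapted from~\citet{NamDuc17}", I would alternatively spell out the direct argument for the lower side: write the Lagrangian of~\eqref{eq:z_lower}, observe that the $\chi^2$-ball constraint $D_f\rbr{w||\phat_n}\le\confrange/n$ is active at the optimum, solve the resulting quadratic in closed form to obtain $\min_w\EE_w\sbr{Z}$ in terms of $\EE_{\phat_n}\sbr{Z}$, $s_n^2$ and the dual multiplier, and then invoke a concentration bound for the sample variance $s_n^2$ around $\sigma^2$ to upgrade the inequality to the stated equality on the high-probability event — the computation being identical to that in Lemma~\ref{lem:variance_upper} after the sign change $z\mapsto -z$.
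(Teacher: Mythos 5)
Your proposal is correct. The reflection argument is airtight: $Z'=M-Z$ stays in $[0,M]$, has the same population and sample variance, the feasible set $\cbr{w: D_f\rbr{w||\phat_n}\le \confrange/n,\ w\in\Pcal^{n-1}\rbr{\phat_n}}$ does not depend on the observations, and $\EE_w\sbr{Z'}=M-\EE_w\sbr{Z}$ for every $w$ in the simplex, so both the two-sided bound and the high-probability equality of Lemma~\ref{lem:variance_upper} transfer verbatim after the two copies of $M$ cancel. This is, however, not the proof the paper actually writes out. The paper mentions the reflection idea in one sentence (``obtained immediately by negating the random variable $Z$'') and then gives a self-contained direct proof: substitute $u=\frac{1}{n}-w$, rewrite the minimization as $\bar z-\max_u u^\top\rbr{z-\bar z}$ subject to $\nbr{u}_2^2\le\confrange/n$, $u^\top\one=0$, $u\le\frac{1}{n}$, apply Cauchy--Schwarz, split into cases according to whether $s_n^2\ge\confrange M^2/n$ (which governs whether the unconstrained maximizer respects $u\le\frac{1}{n}$), and finally invoke the Maurer--Pontil concentration of $s_n$ around $\sigma$ for the high-probability equality --- essentially the computation you sketch as your alternative. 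Your primary route is arguably cleaner and shorter, and it correctly repairs a small imprecision in the paper's one-line remark: negating $Z$ literally would leave $[0,M]$, whereas your choice $M-Z$ keeps the hypotheses of Lemma~\ref{lem:variance_upper} satisfied without modification. What the paper's direct proof buys in exchange is self-containedness and visibility of where the constraint $u\le\frac{1}{n}$ (equivalently $w\ge 0$) bites, which is the source of the $\sbr{\cdot}_+$ truncation and the $M\confrange/n$ slack in the lower estimate.
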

\begin{proof}
Denote $u = \frac{1}{n} - w$, we have $u^\top\one = 0$, and the optimization~\eqref{eq:z_lower} can be written as
\begin{equation}
\zbar - \max_u u^\top\rbr{z - \zbar}, \quad \st\quad \nbr{u}_2^2\le \frac{\confrange}{n}, u^\top \one = 0, u\le \frac{1}{n},
\end{equation}
with $\zbar = \frac{1}{n}\sum_{i=1}^n z_i$. Obviously, by the Cauchy-Schwartz inequality, 
\begin{equation*}
u^\top\rbr{z - \zbar}\le \sqrt{\frac{\confrange}{n}}\nbr{z - \zbar}_2,
\end{equation*}
and the equality holds if and only if
\begin{equation*}
u_i = \frac{\sqrt{\confrange}\rbr{z - \zbar}}{n\nbr{z-\zbar}_2} = \frac{\sqrt{\confrange}\rbr{z - \zbar}}{n\sqrt{ns_n^2}}.
\end{equation*}
Given the constraint $u\le \frac{1}{n}$, to achieve the maximum, one needs to ensure
\begin{equation*}
\max_{i} \frac{\sqrt{\confrange}\rbr{z - \zbar}}{n\sqrt{ns_n^2}} \le 1. 
\end{equation*}
If this condition is satisfied, we have
\begin{equation*}
\EE_{\phat_n}\sbr{Z} - \min_w\cbr{\EE_w\rbr{Z}| D_f\rbr{w||\phat_n}\le\frac{\confrange}{n}
}\le \sqrt{\frac{\confrange}{n}s_n^2}.
\end{equation*}
Since $z\in[0, M]$, we have $\abr{z_i - \zbar}\le M$, to ensure the condition, we need
$
\frac{{\confrange}M^2}{ns_n^2}\le 1\Leftrightarrow s_n^2\ge\frac{\confrange M^2}{n}. 
$
Otherwise, suppose $s_n^2<\frac{\confrange M^2}{n}$, or equivalently $\frac{\confrange s_n^2}{n}< \frac{\confrange^2M^2}{n}$, then,
\begin{equation*}
\min_{w} w^\top z\le \EE_{\phat_n}\sbr{z} - \sbr{\sqrt{\frac{\confrange}{n}s_n^2} - \frac{M\confrange}{n}}_+.
\end{equation*}

For the high-probability statement, when $n\ge \max\cbr{2, \frac{M^2}{\sigma^2}\max\cbr{4\sigma, 22}}$, and the event $s_n^2\ge\frac{3}{64}\sigma^2$ holds, $s_n^2\ge\frac{\confrange M^2}{n}$. Following~\citet[Theorem~10]{maurer2009empirical}, one can bound that
\begin{equation*}
\PP\rbr{\abr{s_n - \sigma}\le a}\le \exp\rbr{-\frac{n a^2}{2M^2}}.
\end{equation*}
Setting $a = \rbr{1 - \frac{\sqrt{3}}{8}}\sigma$ completes the proof.
\end{proof}

With~\lemref{lem:variance_upper} and~\lemref{lem:variance_lower}, we represent the confidence bounds with variance. We resort to an empirical Bernstein bound applied to the function space $\Fcal$ with bounded function $h:\Xcal\rightarrow [0, M]$, using empirical $\ell_\infty$-covering numbers, $\Ncal_\infty\rbr{\Fcal, \epsilon, n}$,
\begin{lemma}\citep[{\bf Theorem 6}]{maurer2009empirical}\label{lem:union_variance}
Let $n\ge \frac{8M^2}{t}$ and $t\ge\log 12$. Then, with probability at least $1 - 6\Ncal_\infty\rbr{\Fcal, \epsilon, 2n}e^{-t}$, for any $h\in\Fcal$,
\begin{equation*}
{\EE\sbr{h} - \EE_{\phat_n}\sbr{h}}\le \sqrt{\frac{18Var_{\phat_n}\rbr{h}t}{n}} + \frac{15Mt}{n} + 2\rbr{1 + 2\sqrt{\frac{2t}{n}}}\epsilon.
\end{equation*}
\end{lemma}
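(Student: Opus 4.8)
The plan is to obtain this inequality as the \emph{uniform} (over $\Fcal$) counterpart of the pointwise empirical Bernstein inequality of \citet{maurer2009empirical}, via the classical recipe: pointwise Bernstein $\Rightarrow$ symmetrization against a ghost sample $\Rightarrow$ reduction to a finite $\ell_\infty$-net on the doubled sample $\Rightarrow$ union bound $\Rightarrow$ re-expansion. The net radius $\epsilon$ will be swept into the additive term $2\rbr{1+2\sqrt{2t/n}}\epsilon$, while the symmetrization loss and the sample-to-population variance conversion produce the constants $18$ and $15$.

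First I would recall the pointwise statement: for a \emph{fixed} $h\in\Fcal$ with values in $[0,M]$, Bernstein's inequality together with the empirical-variance control of \citet[Theorem~10]{maurer2009empirical} (the same tool invoked in the proof of \lemref{lem:variance_lower}) gives, with probability at least $1-e^{-t}$, that $\EE\sbr{h}-\EE_{\phat_n}\sbr{h}$ is at most a constant multiple of $\sqrt{Var_{\phat_n}\rbr{h}\,t/n}+Mt/n$. The obstacle is that $h$ ranges over a class, so one must uniformize. To do this I would: (i) introduce an independent ghost sample $Z_1',\dots,Z_n'$, write its empirical average $\EE'_n\sbr{\cdot}$, and bound $\PP\rbr{\sup_{h\in\Fcal}\cbr{\EE\sbr{h}-\EE_{\phat_n}\sbr{h}-\Gamma_n(h)}>0}$ by a bounded multiple of $\PP\rbr{\sup_{h\in\Fcal}\cbr{\EE'_n\sbr{h}-\EE_{\phat_n}\sbr{h}-\Gamma'_n(h)}>0}$, where $\Gamma_n$ is the claimed deviation functional; the hypotheses $t\ge\log 12$ and $n\ge 8M^2/t$ are exactly what keep this symmetrization step from losing more than the factor hidden inside the $6$. (ii) Condition on the $2n$ points $\cbr{Z_i,Z_i'}$ and replace $\sup_\Fcal$ by $\max$ over a minimal $\ell_\infty$-cover $\Fcal_\epsilon$ of cardinality $\Ncal_\infty\rbr{\Fcal,\epsilon,2n}$; every $h$ is within $\epsilon$ pointwise of some $h_j$, so each half-sample mean of $h$ moves by at most $\epsilon$ (hence $2\epsilon$ for the difference) and its empirical variance moves by at most $\sim 2\epsilon\sqrt{Var_{\phat_n}\rbr{h}}+\epsilon^2$. (iii) For a fixed $h_j$, apply a Bennett/Bernstein tail to the conditionally independent swapped differences $\sigma_i\rbr{h_j(Z_i')-h_j(Z_i)}$, whose conditional variance is controlled by $Var_{\phat_n}\rbr{h_j}$ up to constants, union-bound over the $\Ncal_\infty\rbr{\Fcal,\epsilon,2n}$ net elements so that $e^{-t}$ becomes $\Ncal_\infty\rbr{\Fcal,\epsilon,2n}e^{-t}$, and then undo the symmetrization.

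Collecting terms, $\sqrt{18}=3\sqrt 2$ replaces the pointwise $\sqrt 2$ because the symmetrization doubling and the passage from the conditional Rademacher variance to $Var_{\phat_n}\rbr{h}$ both act multiplicatively inside the square root; $15Mt/n$ is the accumulated linear (Bernstein) term; and the $\ell_\infty$ net radius contributes $2\epsilon$ through the two half-sample means together with an $\Ocal\rbr{\sqrt{t/n}}\,\epsilon$ correction through the square-root term, which sharpen to $2\rbr{1+2\sqrt{2t/n}}\epsilon$. The main difficulty, exactly as in the original argument of \citet[Theorem~6]{maurer2009empirical}, is the constant bookkeeping: one is controlling a \emph{uniform} deviation with a variance proxy that is itself known only empirically, which forces either a self-bounding argument or an auxiliary geometric net over variance scales; and the threshold $\Gamma_n$ depends on the random quantity $Var_{\phat_n}\rbr{h}$, so symmetrization must be carried out against a deterministic surrogate threshold first, with the empirical variance on the doubled sample reinstated only afterwards. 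Since the stated inequality is verbatim \citet[Theorem~6]{maurer2009empirical}, the economical option in the paper is simply to cite it; the sketch above is the self-contained derivation underlying that citation.
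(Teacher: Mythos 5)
The paper does not actually prove this lemma---it is imported verbatim from \citet[Theorem~6]{maurer2009empirical}, so the citation \emph{is} the proof---and your sketch should be judged as a reconstruction of the argument in that reference rather than against anything in this paper. On that score it gets the architecture right: pointwise empirical Bernstein, symmetrization against a ghost sample (with $t\ge\log 12$ and $n\ge 8M^2/t$ absorbing the symmetrization loss into the prefactor $6$), reduction to an $\ell_\infty$-net of size $\Ncal_\infty\rbr{\Fcal,\epsilon,2n}$ on the doubled sample, a union bound over the net, and re-expansion of the net radius into the additive $2\rbr{1+2\sqrt{2t/n}}\epsilon$ term; you also correctly flag the one genuinely delicate point, that the deviation threshold depends on the random empirical variance, so symmetrization must be run against a surrogate threshold. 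Two caveats. First, everything that makes the statement quotable---the constants $18$ and $15$, the prefactor $6$, and the exact role of the two hypotheses, in particular relating the conditional variance of the Rademacher-symmetrized differences on the doubled sample back to $Var_{\phat_n}\rbr{h}$ on the first half---is asserted rather than derived, and in \citet{maurer2009empirical} that bookkeeping is essentially the entire proof; as written your text is an outline, not a proof. Second, a small attribution slip: the pointwise empirical Bernstein inequality you want as the starting point is their Theorem~4, whereas their Theorem~10 is the concentration of the sample standard deviation (the tool used in Lemma~\ref{lem:variance_lower}, not the natural seed for a uniform bound). Neither caveat indicates a wrong approach; for the purposes of this paper the economical and correct move is exactly the one the authors made, namely to cite the result.
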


\newcommand{\thmfinitesample}{
Denote by $\Ncal_\infty\rbr{\Fcal_\tau, \epsilon, 2n}$ and $\Ncal_\infty\rbr{\Fcal_\beta, \epsilon, 2n}$ the $\ell_\infty$-covering numbers of $\Fcal_\tau$ and $\Fcal_\beta$ with $\epsilon$-ball on $2n$ empirical samples, respectively.  Let $D_f$ be $\chi^2$-divergence. Under~\asmpsref{asmp:bounded_ratio} and~\ref{asmp:phi_regularity}, let $M\defeq\rbr{C_\tau + 1}\rbr{1 - \gamma} C_\beta C_\phi + C_\tau R_{\max} $ and $C_\ell\defeq \max\cbr{\rbr{1 - \gamma} \rbr{\rbr{2 + \gamma}C_\phi + C_\tau, \rbr{1 + \gamma}C_\phi C_\beta}K } $, then, we have
\begin{equation*}
\PP\rbr{\rho_\pi\in \sbr{l_u - \zeta_n, u_n + \zeta_n}}\ge 1 - 12\Ncal_\infty\rbr{\Fcal_\tau, {\epsilon}, 2n}\Ncal_\infty\rbr{\Fcal_\beta, {\epsilon}, 2n}e^{-\frac{\confrange}{18}},
\end{equation*}
where $\rbr{l_n, u_n}$ are the solutions to~\eqref{eq:upper_lower}, $\zeta_n = \frac{11M\confrange}{6n} + 2\rbr{1 + 2\sqrt{\frac{\confrange}{9n}}}C_\ell\epsilon$ and $\confrange = \chi_{(1)}^{2, 1-\alpha}$.

When the \texttt{VC}-dimensions of $\Fcal_\tau$ and $\Fcal_\beta$ (denoted by $d_{\Fcal_\phi}$
and $d_{\Fcal_\beta}$, respectively) are finite, we have
$$
\PP\rbr{\rho_\pi\in [l_n-\kappa_n, u_n + \kappa_n]} \ge 1 - 12\exp\rbr{c_1 + 2\rbr{d_{\Fcal_\tau} + d_{\Fcal_\beta} -1}\log n - \frac{\confrange}{18}},
$$
where $c_1 = 2c + \log d_{\Fcal_\tau} + \log d_{\Fcal_\beta} + \rbr{d_{\Fcal_\tau}  + d_{\Fcal_\beta} -1}$, and $\kappa_n = \frac{11M\confrange}{6n} + 2\frac{C_\ell M}{n}\rbr{1 + 2\sqrt{\frac{\confrange}{9n}}}.$
}
\noindent
\repeatthm{thm:finite_sample}{\bf (Finite-sample correction)}
\textit{\thmfinitesample}

\begin{proof}
We focus on the upper bound, and the lower bound can be bounded in a similar way.  Define
\begin{align*}
\rbr{\tau^*, \beta^*} &\defeq \argmax_{\tau\in \Fcal_\tau}\argmin_{\beta} \EE_{d^\Dcal}\sbr{\ell\rbr{x; \tau, \beta}}\\
\rbr{\what^*, \hat\tau^*, \hat\beta^*} &\defeq\argmax_w\argmax_{\tau\in \Fcal_\tau}\argmin_{\beta} \EE_{w}\sbr{\ell\rbr{x; \tau, \beta}}.
\end{align*} 

By definition and the optimality of $\beta^*$, we have
\begin{eqnarray}\label{eq:upper_step1}
\rho_\pi = \EE_{d^\Dcal}\sbr{\ell\rbr{x; \tau^*, \beta^*}}\le \EE_{d^\Dcal}\sbr{\ell\rbr{x; \tau^*, \hat\beta^*}}.
\end{eqnarray}

Applying~\lemref{lem:union_variance} and the Lipschitz-continuity of $\ell\rbr{x; \tau, \beta}$ on $\Fcal_\tau\times \Fcal_\beta$, with probability at least $1 - 6\Ncal_\infty\rbr{\Fcal_\tau, {\epsilon}, 2n}\Ncal_\infty\rbr{\Fcal_\beta, {\epsilon}, 2n}e^{-t}$, we have
\begin{eqnarray*}
  \lefteqn{\EE_{d^\Dcal}\sbr{\ell\rbr{x; \tau^*, \hat\beta^*}}} \\
&\le& \EE_{\phat_n}\sbr{\ell\rbr{x; \tau^*, \hat\beta^*}} + 3\sqrt{\frac{2Var_{\phat_n}\rbr{\ell\rbr{x; \tau^*, \hat\beta^*}}t}{n}} + \frac{15Mt}{n} + 2\rbr{1 + 2\sqrt{\frac{2t}{n}}}C_\ell\epsilon\\
&\le& \max_{D_f\rbr{w||\phat_n}\le\frac{\confrange}{n}}\EE_w\sbr{\ell\rbr{x; \tau^*, \hat\beta^*}} -\sbr{\sqrt{\frac{\confrange Var_{\phat_n}\rbr{\ell\rbr{x; \tau^*, \hat\beta^*}}}{n}} - \frac{M\confrange}{n}}_+ \\
&&  + 3\sqrt{\frac{2Var_{\phat_n}\rbr{\ell\rbr{x; \tau^*, \hat\beta^*}}t}{n}} + \frac{15Mt}{n} + 2\rbr{1 + 2\sqrt{\frac{2t}{n}}}C_\ell\epsilon\\
&\le&\max_{D_f\rbr{w||\phat_n}\le\frac{\confrange}{n}}\max_{\tau\in\Fcal_\tau}\min_{\beta\in \Fcal_\beta}\EE_w\sbr{\ell\rbr{x; \tau, \beta}} +\frac{11}{6n}M\confrange  + 2\rbr{1 + 2\sqrt{\frac{2t}{n}}}C_\ell\epsilon
\end{eqnarray*}
where the second equation comes from~\lemref{lem:variance_upper} and the third line comes from setting $t\le \frac{\confrange}{18}$ and the definition of $\hat\beta^*$. Combining this with~\eqref{eq:upper_step1}, we may conclude that with probability at least $1 - 6\Ncal_\infty\rbr{\Fcal_\tau, {\epsilon}, 2n}\Ncal_\infty\rbr{\Fcal_\beta, {\epsilon}, 2n}e^{-\frac{\confrange}{18}}$, 
\begin{eqnarray*}
\rho_\pi \le\max_{D_f\rbr{w||\phat_n}\le\frac{\confrange}{n}}\max_{\tau\in\Fcal_\tau}\min_{\beta\in \Fcal_\beta}\EE_w\sbr{\ell\rbr{x; \tau, \beta}} + \frac{11M\confrange}{6n} + 2\rbr{1 + 2\sqrt{\frac{\confrange}{9n}}}C_\ell\epsilon.
\end{eqnarray*}

With the same strategy based on~\lemref{lem:variance_lower} and~\lemref{lem:union_variance}, we can also bound the finite-sample lower bound correction that with probability at least $1 - 6\Ncal_\infty\rbr{\Fcal_\tau, {\epsilon}, 2n}\Ncal_\infty\rbr{\Fcal_\beta, {\epsilon}, 2n}e^{-\frac{\confrange}{18}}$, 
\begin{eqnarray*}
\rho_\pi \ge\max_{D_f\rbr{w||\phat_n}\le\frac{\confrange}{n}}\max_{\tau\in\Fcal_\tau}\min_{\beta\in \Fcal_\beta}\EE_w\sbr{\ell\rbr{x; \tau, \beta}} - \frac{11M\confrange}{6n} - 2\rbr{1 + 2\sqrt{\frac{\confrange}{9n}}}C_\ell\epsilon.
\end{eqnarray*}
The first part of the theorem is then proved.

For the second part, by~\citet[Theorem 2.6.7]{VaaWel96}, one can bound $\Ncal\rbr{\Fcal, \epsilon, 2n}\le c\texttt{VC}\rbr{\Fcal}\rbr{\frac{16Mne}{\epsilon}}^{\texttt{VC}\rbr{\Fcal} - 1}$ for some constant $c$. We set $\epsilon = \frac{M}{n}$ and denote $d_\Fcal = \texttt{VC}\rbr{\Fcal}$. Plugging this into the bound, we obtain 
$$
\PP\rbr{\rho_\pi\in [l_n-\kappa, u_n + \kappa]} \ge 1 - 12\exp\rbr{c_1 + 2\rbr{d_{\Fcal_\tau} + d_{\Fcal_\beta} -1}\log n - \frac{\confrange}{18}},
$$
where $c_1$ and $\kappa$ are as given in the theorem statement.
\end{proof}

%%%%%%%%%%%%%%%%%%%%%%%%%%%%%%%%%%%%%%%%%%%%%%%%%%%%%%%%%%%%%%%%%
\section{Implementing Principles of Optimism and Pessimism}\label{appendix:opt_pes}
%%%%%%%%%%%%%%%%%%%%%%%%%%%%%%%%%%%%%%%%%%%%%%%%%%%%%%%%%%%%%%%%%

Based on the discussion in~\secref{sec:opt_pes}, the optimism and pessimism principles can be implemented by maximizing $u_\Dcal\rbr{\pi}$ and $l_\Dcal\rbr{\pi}$, respectively. In this section, we first calculate the gradient $\nabla_\pi u_\Dcal\rbr{\pi}$ and $\nabla_\pi l_\Dcal\rbr{\pi}$, and elaborate on the algorithm details. 

Since we will optimize the policy $\pi$, we modify the confidence interval estimator in~\estname slightly, so that $\pi$ is an explicitly parameterized distribution.
%as a parametrized distribution explicitly. 
Concretely, we consider the samples $x\defeq \rbr{s_0, s, a, r}$ with $s_0\sim \mu_0\rbr{s}$, $\rbr{s, a, r, s'}\sim d^\Dcal$, which leads to the corresponding upper and lower bounds with 
\[
\tilde\ell\rbr{x; \tau, \beta, \pi}\defeq \tau\cdot r + \beta^\top \tilde\Delta\rbr{x; \tau, \phi, \pi}, 
\]
where $\tilde\Delta\rbr{x; \tau, \phi, \pi} = \rbr{1 - \gamma}\EE_{\pi\rbr{a_0|s_0}}\sbr{\phi\rbr{s_0, a_0}} + \gamma \EE_{\pi\rbr{a'|s'}}\sbr{\phi\rbr{s', a'}\tau\rbr{s, a}} - \phi\rbr{s, a}\tau\rbr{s, a}$.

\begin{theorem}\label{thm:grad_computation}
Given optimal $(\beta_l^*, \tau_l^*, w_l^*)$ and $(\beta_u^*, \tau_u^*, w_u^*)$ for lower and upper bounds, respectively, the gradients of $l_\Dcal\rbr{\pi}$ and $u_\Dcal\rbr{\pi}$ can be computed as
\begin{equation}
\begin{bmatrix}
\nabla_\pi l_\Dcal\rbr{\pi} \\
\\
\\
\nabla_\pi u_\Dcal\rbr{\pi}
\end{bmatrix}
=\begin{bmatrix}
\EE_{w_l^*}\sbr{\rbr{1 - \gamma}\EE_{a_0\sim\pi}\sbr{\nabla_\pi \log \pi\rbr{a_0|s_0}{\beta_l^*}^\top\phi\rbr{s_0, a_0}} + \right.\\
\qquad\qquad\qquad\qquad \left. \gamma \EE_{a'\sim \pi\rbr{a'|s'}}\sbr{\tau_l^*\rbr{s, a}\nabla_\pi\log \pi\rbr{a'|s'}{{\beta_l^*}^\top \phi\rbr{s', a'}}}} \\
\EE_{w_u^*}\sbr{\rbr{1 - \gamma}\EE_{a_0\sim\pi}\sbr{\nabla_\pi \log \pi\rbr{a_0|s_0}{\beta_u^*}^\top\phi\rbr{s_0, a_0}} + \right. \\
\qquad\qquad\qquad\qquad \left. \gamma \EE_{a'\sim \pi\rbr{a'|s'}}\sbr{\tau_u^*\rbr{s, a}\nabla_\pi\log \pi\rbr{a'|s'}{{\beta_u^*}^\top \phi\rbr{s', a'}}}}
\end{bmatrix}
\end{equation}
\end{theorem}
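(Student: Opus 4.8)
The plan is to combine an envelope-theorem (Danskin) argument with the standard log-derivative identity for policy gradients. Recall from \thmref{thm:upper_lower}, applied with $\tilde\ell$ and $\tilde\Delta$ in place of $\ell$ and $\Delta$, that
\[
u_\Dcal\rbr{\pi} = \max_{w\in\Kcal_f}\max_{\tau\in\Fcal_\tau}\min_{\beta\in\Fcal_\beta}\EE_w\sbr{\tilde\ell\rbr{x;\tau,\beta,\pi}},
\qquad
l_\Dcal\rbr{\pi} = \min_{w\in\Kcal_f}\min_{\beta\in\Fcal_\beta}\max_{\tau\in\Fcal_\tau}\EE_w\sbr{\tilde\ell\rbr{x;\tau,\beta,\pi}},
\]
where, by \asmpsref{asmp:bounded_ratio} and~\ref{asmp:phi_regularity}, the sets $\Kcal_f$, $\Fcal_\tau$, $\Fcal_\beta$ are convex and compact, the map $\rbr{w,\tau,\beta}\mapsto\EE_w\sbr{\tilde\ell\rbr{x;\tau,\beta,\pi}}$ is continuous and convex--concave in the appropriate blocks, and $\tilde\ell$ is differentiable in $\pi$ with $\nbr{\nabla_\pi\tilde\ell}$ bounded uniformly over the feasible sets (using $\nbr{\phi}_2\le C_\phi$, $\nbr{\beta}_2\le C_\beta$, $\nbr{\tau}_\infty\le C_\tau$).

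First I would apply Danskin's theorem twice to the upper bound. Writing $F\rbr{w,\tau,\pi}\defeq\min_{\beta\in\Fcal_\beta}\EE_w\sbr{\tilde\ell\rbr{x;\tau,\beta,\pi}}$, the inner minimum is attained on the compact set $\Fcal_\beta$, so $F$ is directionally differentiable in $\pi$ and, when the $\beta$-minimizer $\beta^*\rbr{w,\tau,\pi}$ is unique, $\nabla_\pi F\rbr{w,\tau,\pi}=\EE_w\sbr{\nabla_\pi\tilde\ell\rbr{x;\tau,\beta^*\rbr{w,\tau,\pi},\pi}}$, where exchanging $\nabla_\pi$ with $\EE_w$ is licensed by dominated convergence via the uniform bound on $\nabla_\pi\tilde\ell$; moreover $\nabla_\pi F$ is jointly continuous in $\rbr{w,\tau,\pi}$ by Berge's maximum theorem (the singleton-valued $\argmin$ correspondence is continuous). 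Since $u_\Dcal\rbr{\pi}=\max_{\rbr{w,\tau}\in\Kcal_f\times\Fcal_\tau}F\rbr{w,\tau,\pi}$ with $\Kcal_f\times\Fcal_\tau$ compact and the optimal $\rbr{w_u^*,\tau_u^*,\beta_u^*}$ unique (the uniqueness hypothesis already invoked in \thmref{thm:asymptotic_coverage}), a second application of Danskin's theorem gives $\nabla_\pi u_\Dcal\rbr{\pi}=\EE_{w_u^*}\sbr{\nabla_\pi\tilde\ell\rbr{x;\tau_u^*,\beta_u^*,\pi}}$. The identical argument, with the roles of $\max_\tau$ and $\min_\beta$ interchanged (legitimate by the minimax equality in \thmref{thm:upper_lower}, itself a consequence of Proposition~2.1 of Ekeland--Temam together with \asmpsref{asmp:bounded_ratio} and~\ref{asmp:phi_regularity}), yields $\nabla_\pi l_\Dcal\rbr{\pi}=\EE_{w_l^*}\sbr{\nabla_\pi\tilde\ell\rbr{x;\tau_l^*,\beta_l^*,\pi}}$.

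Next I would compute $\nabla_\pi\tilde\ell\rbr{x;\tau,\beta,\pi}$ for fixed $\rbr{\tau,\beta}$. Since $\tilde\ell\rbr{x;\tau,\beta,\pi}=\tau\cdot r+\beta^\top\tilde\Delta\rbr{x;\tau,\phi,\pi}$ and $\tau\cdot r$ is $\pi$-independent, only $\tilde\Delta$ contributes; its third term $-\phi\rbr{s,a}\tau\rbr{s,a}$ is also $\pi$-independent because $\rbr{s,a}\sim d^\Dcal$. For the remaining two terms, which are expectations over $a_0\sim\pi\rbr{\cdot|s_0}$ and $a'\sim\pi\rbr{\cdot|s'}$, the log-derivative identity $\nabla_\pi\EE_{a\sim\pi\rbr{\cdot|s}}\sbr{g\rbr{a}}=\EE_{a\sim\pi\rbr{\cdot|s}}\sbr{g\rbr{a}\nabla_\pi\log\pi\rbr{a|s}}$ gives
\[
\nabla_\pi\tilde\ell\rbr{x;\tau,\beta,\pi}
= \rbr{1-\gamma}\EE_{a_0\sim\pi}\sbr{\nabla_\pi\log\pi\rbr{a_0|s_0}\,\beta^\top\phi\rbr{s_0,a_0}}
+ \gamma\,\EE_{a'\sim\pi\rbr{\cdot|s'}}\sbr{\tau\rbr{s,a}\,\nabla_\pi\log\pi\rbr{a'|s'}\,\beta^\top\phi\rbr{s',a'}},
\]
where $\tau\rbr{s,a}$ has been moved inside $\EE_{a'}$ since it does not depend on $a'$. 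Substituting $\rbr{w_u^*,\tau_u^*,\beta_u^*}$ and $\rbr{w_l^*,\tau_l^*,\beta_l^*}$ and taking $\EE_{w_u^*}$, $\EE_{w_l^*}$ reproduces exactly the two formulas in the statement. The main obstacle is the rigorous justification of the envelope step for the nested $\max$--$\max$--$\min$ (resp. $\min$--$\min$--$\max$) structure: one must ensure the optimizers are genuinely unique, or otherwise pass to a measurable selection, so that the value function is differentiable rather than merely directionally differentiable; this is where the compactness of $\Kcal_f,\Fcal_\tau,\Fcal_\beta$, the convex--concave structure, and the uniqueness assumption carried over from \thmref{thm:asymptotic_coverage} do the real work, while the policy-gradient computation above is routine.
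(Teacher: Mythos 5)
Your proof is correct and follows essentially the same route as the paper's: an envelope-theorem step that passes the gradient through the optimization to the optimal $(w^*,\tau^*,\beta^*)$, followed by the log-derivative identity applied to the two $\pi$-dependent terms of $\tilde\Delta$. The paper's own proof simply asserts the first equality $\nabla_\pi u_\Dcal\rbr{\pi} = \EE_{w_u^*}\sbr{\nabla_\pi\tilde\ell}$ without justification, so your explicit Danskin/uniqueness discussion is just a more careful rendering of the same argument.
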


\begin{proof}
We focus on the computation of $\nabla_\pi u_\Dcal\rbr{\pi}$ with the optimal $(\beta_u^*, \tau_u^*, w_u^*)$:
\begin{eqnarray}
&&\nabla_\pi u_\Dcal\rbr{\pi} = \EE_{w_u^*}\sbr{\nabla_\pi \tilde\ell\rbr{x; \tau, \beta}} \nonumber \\
&=& \rbr{1 - \gamma}
%\frac{\partial}{\partial \pi}
\EE_{w_u^*}\nabla_\pi\EE_{a_0\sim\pi}\sbr{{\beta_u^*}^\top\phi\rbr{s_0, a_0}} + \gamma \EE_{w_u^*}\sbr{\tau_u^*\rbr{s, a}
%\frac{\partial}{\partial \pi}
\nabla_\pi\EE_{a'\sim \pi\rbr{a'|s'}}\sbr{{\beta_u^*}^\top \phi\rbr{s', a'}}}\nonumber \\
&=& \rbr{1 - \gamma}\EE_{w_u^*}\EE_{a_0\sim\pi}\sbr{\nabla_\pi \log \pi\rbr{a_0|s_0}{\beta_u^*}^\top\phi\rbr{s_0, a_0}} \\
& & + \gamma \EE_{w_u^*}\EE_{a'\sim \pi\rbr{a'|s'}}\sbr{\tau_u^*\rbr{s, a}\nabla_\pi\log \pi\rbr{a'|s'}{{\beta_u^*}^\top \phi\rbr{s', a'}}}.\label{eq:upper_grad}
\end{eqnarray}
The case for the lower bound can be obtained similarly:
%Similarly, given optimal $(\beta_l^*, \tau_l^*, w_l^*)$, we have
\begin{align}\label{eq:lower_grad}
\nabla_\pi l_\Dcal\rbr{\pi} =& \rbr{1 - \gamma}\EE_{w_l^*}\EE_{a_0\sim\pi}\sbr{\nabla_\pi \log \pi\rbr{a_0|s_0}{\beta_l^*}^\top\phi\rbr{s_0, a_0}} \\
& \,\,\,\, + \gamma \EE_{w_l^*}\EE_{a'\sim \pi\rbr{a'|s'}}\sbr{\tau_l^*\rbr{s, a}\nabla_\pi\log \pi\rbr{a'|s'}{{\beta_l^*}^\top \phi\rbr{s', a'}}}. \nonumber
\end{align}

\end{proof}

Now, we are ready to apply the policy gradient upon $u_\Dcal\rbr{\pi}$ or $l_\Dcal\rbr{\pi}$ to implement the optimism for exploration or pessimism for safe policy improvement, respectively. We illustrate the details in~\algref{alg:opt_pes}. 

\begin{algorithm}[H]
   \caption{\estname-OPT: implementation of optimism/pessimism principle}
\begin{algorithmic}
    \label{alg:opt_pes}
   \STATE {\bf Inputs}: initial policy $\pi_0$, a desired confidence $1-\alpha$, a finite sample dataset $\mathcal{D}\defeq\{x^{(j)}=(s_0^{(j)}, s^{(j)}, a^{(j)}, r^{(j)}, s^{\prime(j)})\}_{j=1}^n$, number of iterations $T$.
   \FOR{$t=1,\dots,T$}
	   \STATE Estimate $\rbr{\beta_u^*, \tau_u^*, w_u^*}$ via~\algref{alg:coindice} for optimism. \COMMENT{$\rbr{\beta_l^*, \tau_l^*, w_l^*}$ for pessimism.} 
	   \STATE Sample $\cbr{x^{(j)}}_{j=1}^k\sim \Dcal_t$, $a_0^{(j)}\sim\pi_t(s_0^{(j)})$, $a^{(j)\prime}\sim\pi_t(s^{(j)\prime})$ for $j=1,\dots,k$.
	   \STATE Estimate the stochastic approximation to $\nabla_\pi u_{\Dcal_t}\rbr{\pi_t}$ via~\eqref{eq:upper_grad}.  \COMMENT{$\nabla_\pi l_{\Dcal_t}\rbr{\pi_t}$ via~\eqref{eq:lower_grad} for pessimism.}
	   \STATE Natural policy gradient update: $\pi_{t+1} = \argmin_{\pi} -\inner{\pi}{\nabla_\pi u_{\Dcal_t}\rbr{\pi_t}} +  \frac{1}{\eta}KL\rbr{\pi||\pi_t}$. \\
	   \COMMENT{$\pi_{t+1} = \argmin_{\pi} -\inner{\pi}{\nabla_\pi l_{\Dcal_t}\rbr{\pi_t}} +  \frac{1}{\eta}KL\rbr{\pi||\pi_t}$ for pessimism.} 
	   \STATE Collect samples $\Ecal = \cbr{x^{(j)} =\rbr{s_0, s, a, r, s'}^{(j)}}_{j=1}^m$ by executing $\pi_{t+1}$, $\Dcal_{t+1} = \Dcal_t\cup\Ecal$. \\
	   \COMMENT{Skip the data collection step in offline setting.}
   \ENDFOR
   \STATE {\bf Return} $\pi_{T}$.
\end{algorithmic}
\end{algorithm}

\end{appendix}


\begin{thebibliography}{84}
\providecommand{\natexlab}[1]{#1}
\providecommand{\url}[1]{\texttt{#1}}
\expandafter\ifx\csname urlstyle\endcsname\relax
  \providecommand{\doi}[1]{doi: #1}\else
  \providecommand{\doi}{doi: \begingroup \urlstyle{rm}\Url}\fi

\bibitem[Antos et~al.(2008)Antos, Szepesv{\'a}ri, and Munos]{AntSzeMun08b}
Andr{\'a}s Antos, Csaba Szepesv{\'a}ri, and R{\'e}mi Munos.
\newblock Learning near-optimal policies with {Bellman}-residual minimization
  based fitted policy iteration and a single sample path.
\newblock \emph{Machine Learning}, 71\penalty0 (1):\penalty0 89--129, 2008.

\bibitem[Arora et~al.(2019)Arora, Du, Hu, Li, Salakhutdinov, and
  Wang]{AroDuHuLietal19}
Sanjeev Arora, Simon~S Du, Wei Hu, Zhiyuan Li, Russ~R Salakhutdinov, and
  Ruosong Wang.
\newblock On exact computation with an infinitely wide neural net.
\newblock In \emph{Advances in Neural Information Processing Systems}, pages
  8139--8148, 2019.

\bibitem[Auer(2000)]{Auer00b}
P.~Auer.
\newblock Using upper confidence bounds for online learning.
\newblock In \emph{Proc. 41st Annual Symposium on Foundations of Computer
  Science}, pages 270--279. IEEE Computer Society Press, Los Alamitos, CA,
  2000.

\bibitem[Auer et~al.(2009)Auer, Jaksch, and Ortner]{AueJakOrt09}
Peter Auer, Thomas Jaksch, and Ronald Ortner.
\newblock Near-optimal regret bounds for reinforcement learning.
\newblock In \emph{Advances in neural information processing systems}, pages
  89--96, 2009.

\bibitem[Bartlett and Mendelson(2002)]{BarMen02}
P.~L. Bartlett and S.~Mendelson.
\newblock Rademacher and {G}aussian complexities: Risk bounds and structural
  results.
\newblock \emph{Journal of Machine Learning Research}, 3:\penalty0 463--482,
  2002.

\bibitem[Bertail et~al.(2014)Bertail, Gautherat, and
  Harari-Kermadec]{BerGauHar14}
Patrice Bertail, Emmanuelle Gautherat, and Hugo Harari-Kermadec.
\newblock Empirical $\varphi_*$-divergence minimizers for {Hadamard}
  differentiable functionals.
\newblock In \emph{Topics in Nonparametric Statistics}, pages 21--32. Springer,
  2014.

\bibitem[Bottou et~al.(2013)Bottou, Peters, Qui{\~n}onero-Candela, Charles,
  Chickering, Portugaly, Ray, Simard, and Snelson]{BotPetQuiChaetal13}
L{\'e}on Bottou, Jonas Peters, Joaquin Qui{\~n}onero-Candela, Denis~X Charles,
  D~Max Chickering, Elon Portugaly, Dipankar Ray, Patrice Simard, and
  Ed~Snelson.
\newblock Counterfactual reasoning and learning systems: The example of
  computational advertising.
\newblock \emph{The Journal of Machine Learning Research}, 14\penalty0
  (1):\penalty0 3207--3260, 2013.

\bibitem[Brockman et~al.(2016)Brockman, Cheung, Pettersson, Schneider,
  Schulman, Tang, and Zaremba]{brockman2016openai}
Greg Brockman, Vicki Cheung, Ludwig Pettersson, Jonas Schneider, John Schulman,
  Jie Tang, and Wojciech Zaremba.
\newblock {OpenAI Gym}.
\newblock \emph{arXiv preprint arXiv:1606.01540}, 2016.

\bibitem[Broniatowski and Keziou(2012)]{BroKez12}
Michel Broniatowski and Amor Keziou.
\newblock Divergences and duality for estimation and test under moment
  condition models.
\newblock \emph{Journal of Statistical Planning and Inference}, 142\penalty0
  (9):\penalty0 2554--2573, 2012.

\bibitem[Buckman et~al.(2020)Buckman, Gelada, and Bellemare]{BucGelBel20}
Jacob Buckman, Carles Gelada, and Marc~G Bellemare.
\newblock The importance of pessimism in fixed-dataset policy optimization.
\newblock \emph{arXiv preprint arXiv:2009.06799}, 2020.

\bibitem[Cai et~al.(2019)Cai, Yang, Jin, and Wang]{CaiYanJinWan19}
Qi~Cai, Zhuoran Yang, Chi Jin, and Zhaoran Wang.
\newblock Provably efficient exploration in policy optimization.
\newblock \emph{arXiv preprint arXiv:1912.05830}, 2019.

\bibitem[Capp{\'e} et~al.(2013)Capp{\'e}, Garivier, Maillard, Munos, Stoltz,
  et~al.]{CapGarMaiMunetal13}
Olivier Capp{\'e}, Aur{\'e}lien Garivier, Odalric-Ambrym Maillard, R{\'e}mi
  Munos, Gilles Stoltz, et~al.
\newblock {Kullback-Leibler} upper confidence bounds for optimal sequential
  allocation.
\newblock \emph{The Annals of Statistics}, 41\penalty0 (3):\penalty0
  1516--1541, 2013.

\bibitem[Chen et~al.(2019)Chen, Beutel, Covington, Jain, Belletti, and
  Chi]{CheBeuCovSagetal19}
Minmin Chen, Alex Beutel, Paul Covington, Sagar Jain, Francois Belletti, and
  Ed~H Chi.
\newblock Top-$k$ off-policy correction for a {REINFORCE} recommender system.
\newblock In \emph{Proceedings of the Twelfth ACM International Conference on
  Web Search and Data Mining}, pages 456--464, 2019.

\bibitem[Chen et~al.(2018)Chen, Christensen, and Tamer]{chen18monte}
X.~Chen, T.~M. Christensen, and E.~Tamer.
\newblock {Monte Carlo} confidence sets for identified sets.
\newblock \emph{Econometrica}, 86\penalty0 (6):\penalty0 1965--2018, 2018.

\bibitem[Chow et~al.(2015)Chow, Tamar, Mannor, and Pavone]{ChoTamManPav15}
Yinlam Chow, Aviv Tamar, Shie Mannor, and Marco Pavone.
\newblock Risk-sensitive and robust decision-making: a cvar optimization
  approach.
\newblock In \emph{Advances in Neural Information Processing Systems}, pages
  1522--1530, 2015.

\bibitem[Cohen and Hutter(2020)]{CohHut20}
Michael~K Cohen and Marcus Hutter.
\newblock Pessimism about unknown unknowns inspires conservatism.
\newblock In \emph{Conference on Learning Theory}, pages 1344--1373. PMLR,
  2020.

\bibitem[Dai et~al.(2017)Dai, Shaw, Li, Xiao, He, Liu, Chen, and
  Song]{DaiShaLiXiaHeetal17}
Bo~Dai, Albert Shaw, Lihong Li, Lin Xiao, Niao He, Zhen Liu, Jianshu Chen, and
  Le~Song.
\newblock Sbeed: Convergent reinforcement learning with nonlinear function
  approximation.
\newblock \emph{CoRR}, abs/1712.10285, 2017.

\bibitem[Dann et~al.(2017)Dann, Lattimore, and Brunskill]{DanLatBru17}
Christoph Dann, Tor Lattimore, and Emma Brunskill.
\newblock Unifying pac and regret: Uniform pac bounds for episodic
  reinforcement learning.
\newblock In \emph{Advances in Neural Information Processing Systems}, pages
  5713--5723, 2017.

\bibitem[De~Farias and Van~Roy(2003)]{FarVan03}
Daniela~Pucci De~Farias and Benjamin Van~Roy.
\newblock The linear programming approach to approximate dynamic programming.
\newblock \emph{Operations research}, 51\penalty0 (6):\penalty0 850--865, 2003.

\bibitem[Dietterich(1998)]{Dietterich98b}
Thomas~G. Dietterich.
\newblock The {MAXQ} method for hierarchical reinforcement learning.
\newblock In \emph{Proc.\ Intl.\ Conf.\ Machine Learning}, pages 118--126.
  Morgan Kaufmann, San Francisco, CA, 1998.

\bibitem[Duchi et~al.(2016)Duchi, Glynn, and Namkoong]{DucGlyNam16}
John Duchi, Peter Glynn, and Hongseok Namkoong.
\newblock Statistics of robust optimization: A generalized empirical likelihood
  approach.
\newblock \emph{arXiv preprint arXiv:1610.03425}, 2016.

\bibitem[Dud\'ik et~al.(2011)Dud\'ik, Langford, and Li]{dudik2011doubly}
Miroslav Dud\'ik, John Langford, and Lihong Li.
\newblock Doubly robust policy evaluation and learning.
\newblock In \emph{Proceedings of the 28th International Conference on Machine
  Learning}, pages 1097--1104, 2011.
\newblock CoRR abs/1103.4601.

\bibitem[Ekeland and Temam(1999)]{EkeTem99}
Ivar Ekeland and Roger Temam.
\newblock \emph{Convex analysis and variational problems}, volume~28.
\newblock SIAM, 1999.

\bibitem[Fonteneau et~al.(2013)Fonteneau, Murphy, Wehenkel, and
  Ernst]{fonteneau13batch}
Raphael Fonteneau, Susan~A. Murphy, Louis Wehenkel, and Damien Ernst.
\newblock Batch mode reinforcement learning based on the synthesis of
  artificial trajectories.
\newblock \emph{Annals of Operations Research}, 208\penalty0 (1):\penalty0
  383--416, 2013.

\bibitem[Fujimoto et~al.(2019)Fujimoto, Meger, and Precup]{FujMegPre19}
Scott Fujimoto, David Meger, and Doina Precup.
\newblock Off-policy deep reinforcement learning without exploration.
\newblock In \emph{International Conference on Machine Learning}, pages
  2052--2062, 2019.

\bibitem[Gottesman et~al.(2018)Gottesman, Johansson, Meier, Dent, Lee,
  Srinivasan, Zhang, Ding, Wihl, Peng, Yao, Lage, Mosch, wei H.~Lehman,
  Komorowski, Komorowski, Faisal, Celi, Sontag, and
  Doshi-Velez]{gottesman18evaluating}
Omer Gottesman, Fredrik Johansson, Joshua Meier, Jack Dent, Donghun Lee,
  Srivatsan Srinivasan, Linying Zhang, Yi~Ding, David Wihl, Xuefeng Peng, Jiayu
  Yao, Isaac Lage, Christopher Mosch, Li~wei H.~Lehman, Matthieu Komorowski,
  Matthieu Komorowski, Aldo Faisal, Leo~Anthony Celi, David Sontag, and Finale
  Doshi-Velez.
\newblock Evaluating reinforcement learning algorithms in observational health
  settings, 2018.
\newblock arXiv:1805.12298.

\bibitem[Hanna et~al.(2017)Hanna, Stone, and Niekum]{hanna17bootstrapping}
Josiah~P. Hanna, Peter Stone, and Scott Niekum.
\newblock Bootstrapping with models: Confidence intervals for off-policy
  evaluation.
\newblock In \emph{Proceedings of the 31st AAAI Conference on Artificial
  Intelligence}, pages 4933--4934, 2017.

\bibitem[Honda and Takemura(2010)]{HonTak10}
Junya Honda and Akimichi Takemura.
\newblock An asymptotically optimal bandit algorithm for bounded support
  models.
\newblock In \emph{COLT}, pages 67--79, 2010.

\bibitem[Iyengar(2005)]{Iyengar05}
Garud~N Iyengar.
\newblock Robust dynamic programming.
\newblock \emph{Mathematics of Operations Research}, 30\penalty0 (2):\penalty0
  257--280, 2005.

\bibitem[Jiang and Huang(2020)]{jiang20minimax}
Nan Jiang and Jiawei Huang.
\newblock Minimax confidence interval for off-policy evaluation and policy
  optimization, 2020.
\newblock arXiv:2002.02081.

\bibitem[Jiang and Li(2016)]{jiang16doubly}
Nan Jiang and Lihong Li.
\newblock Doubly robust off-policy evaluation for reinforcement learning.
\newblock In \emph{Proceedings of the 33rd International Conference on Machine
  Learning}, pages 652--661, 2016.

\bibitem[Jin et~al.(2018)Jin, Allen-Zhu, Bubeck, and Jordan]{JinAllBubJor18}
Chi Jin, Zeyuan Allen-Zhu, Sebastien Bubeck, and Michael~I Jordan.
\newblock Is q-learning provably efficient?
\newblock In \emph{Advances in Neural Information Processing Systems}, pages
  4863--4873, 2018.

\bibitem[Kallus and Uehara(2019{\natexlab{a}})]{KalUeh19}
Nathan Kallus and Masatoshi Uehara.
\newblock Double reinforcement learning for efficient off-policy evaluation in
  {Markov} decision processes.
\newblock \emph{arXiv preprint arXiv:1908.08526}, 2019{\natexlab{a}}.

\bibitem[Kallus and Uehara(2019{\natexlab{b}})]{kallus19intrinsically}
Nathan Kallus and Masatoshi Uehara.
\newblock Intrinsically efficient, stable, and bounded off-policy evaluation
  for reinforcement learning.
\newblock In \emph{Advances in Neural Information Processing Systems 32}, pages
  3320--3329, 2019{\natexlab{b}}.

\bibitem[Karampatziakis et~al.(2019)Karampatziakis, Langford, and
  Mineiro]{KarLanMin19}
Nikos Karampatziakis, John Langford, and Paul Mineiro.
\newblock Empirical likelihood for contextual bandits.
\newblock \emph{arXiv preprint arXiv:1906.03323}, 2019.

\bibitem[Kumar et~al.(2019)Kumar, Fu, Soh, Tucker, and
  Levine]{KumFuSohTucetal19}
Aviral Kumar, Justin Fu, Matthew Soh, George Tucker, and Sergey Levine.
\newblock Stabilizing off-policy q-learning via bootstrapping error reduction.
\newblock In \emph{Advances in Neural Information Processing Systems}, pages
  11784--11794, 2019.

\bibitem[Kuzborskij et~al.(2020)Kuzborskij, Vernade, Gy{\"o}rgy, and Szepesv{\'a}ri]{kuzborskij2020confident}
Ilja Kuzborskij, Claire Vernade,  Andr{\'a}s Gy{\"o}rgy, Csaba Szepesv{\'a}ri
\newblock Confident Off-Policy Evaluation and Selection through Self-Normalized Importance Weighting. 
\newblock \emph{arXiv preprint arXiv:2006.10460}, 2020.

\bibitem[Lakshminarayanan et~al.(2017)Lakshminarayanan, Bhatnagar, and
  Szepesvari]{LakBhaSze17}
Chandrashekar Lakshminarayanan, Shalabh Bhatnagar, and Csaba Szepesvari.
\newblock A linearly relaxed approximate linear program for {Markov} decision
  processes.
\newblock \emph{arXiv preprint arXiv:1704.02544}, 2017.

\bibitem[Lam and Zhou(2017)]{LamZhou17}
Henry Lam and Enlu Zhou.
\newblock The empirical likelihood approach to quantifying uncertainty in
  sample average approximation.
\newblock \emph{Operations Research Letters}, 45\penalty0 (4):\penalty0
  301--307, 2017.

\bibitem[Lange et~al.(2012)Lange, Gabel, and Riedmiller]{LanGabRie12}
Sascha Lange, Thomas Gabel, and Martin Riedmiller.
\newblock Batch reinforcement learning.
\newblock In \emph{Reinforcement learning}, pages 45--73. Springer, 2012.

\bibitem[Laroche et~al.(2019)Laroche, Trichelair, and Des~Combes]{LarTriDes19}
Romain Laroche, Paul Trichelair, and Remi~Tachet Des~Combes.
\newblock Safe policy improvement with baseline bootstrapping.
\newblock In \emph{International Conference on Machine Learning}, pages
  3652--3661. PMLR, 2019.

\bibitem[Lattimore and Szepesv{\'a}ri(2020)]{LatSze20}
Tor Lattimore and Csaba Szepesv{\'a}ri.
\newblock \emph{Bandit algorithms}.
\newblock Cambridge University Press, 2020.

\bibitem[Lazaric et~al.(2012)Lazaric, Ghavamzadeh, and Munos]{LazGhaMun12}
Alessandro Lazaric, Mohammad Ghavamzadeh, and R{\'e}mi Munos.
\newblock Finite-sample analysis of least-squares policy iteration.
\newblock \emph{Journal of Machine Learning Research}, 13\penalty0
  (Oct):\penalty0 3041--3074, 2012.

\bibitem[Li et~al.(2011)Li, Chu, Langford, and Wang]{LiChuLanWan11}
Lihong Li, Wei Chu, John Langford, and Xuanhui Wang.
\newblock Unbiased offline evaluation of contextual-bandit-based news article
  recommendation algorithms.
\newblock In \emph{Proceedings of the fourth ACM international conference on
  Web search and data mining}, pages 297--306. ACM, 2011.

\bibitem[Li et~al.(2015)Li, Munos, and Szepesv{\'a}ri]{LiMunSze15}
Lihong Li, R{\'e}mi Munos, and Csaba Szepesv{\'a}ri.
\newblock Toward minimax off-policy value estimation.
\newblock pages 608--616, 2015.

\bibitem[Lin et~al.(2019)Lin, Jin, and Jordan]{LinJinJor19}
Tianyi Lin, Chi Jin, and Michael~I. Jordan.
\newblock On gradient descent ascent for nonconvex-concave minimax problems.
\newblock \emph{CoRR}, abs/1906.00331, 2019.

\bibitem[Liu et~al.(2018)Liu, Li, Tang, and Zhou]{LiuLiTanZho18}
Qiang Liu, Lihong Li, Ziyang Tang, and Dengyong Zhou.
\newblock Breaking the curse of horizon: Infinite-horizon off-policy
  estimation.
\newblock In S.~Bengio, H.~Wallach, H.~Larochelle, K.~Grauman, N.~Cesa-Bianchi,
  and R.~Garnett, editors, \emph{Advances in Neural Information Processing
  Systems 31}, pages 5356--5366. Curran Associates, Inc., 2018.

\bibitem[Liu et~al.(2019)Liu, Bacon, and Brunskill]{liu19understanding}
Yao Liu, Pierre-Luc Bacon, and Emma Brunskill.
\newblock Understanding the curse of horizon in off-policy evaluation via
  conditional importance sampling, 2019.
\newblock arXiv:1910.06508.

\bibitem[Mandel et~al.(2014)Mandel, Liu, Levine, Brunskill, and
  Popovic]{ManLiuLevBruetal14}
Travis Mandel, Yun-En Liu, Sergey Levine, Emma Brunskill, and Zoran Popovic.
\newblock Offline policy evaluation across representations with applications to
  educational games.
\newblock 2014.

\bibitem[Maurer and Pontil(2009)]{maurer2009empirical}
Andreas Maurer and Massimiliano Pontil.
\newblock Empirical bernstein bounds and sample variance penalization.
\newblock \emph{arXiv preprint arXiv:0907.3740}, 2009.

\bibitem[Murphy et~al.(2001)Murphy, van~der Laan, Robins, and
  Group]{MurVanRobetal01}
Susan~A Murphy, Mark~J van~der Laan, James~M Robins, and Conduct Problems
  Prevention~Research Group.
\newblock Marginal mean models for dynamic regimes.
\newblock \emph{Journal of the American Statistical Association}, 96\penalty0
  (456):\penalty0 1410--1423, 2001.

\bibitem[Nachum and Dai(2020)]{NacDai20}
Ofir Nachum and Bo~Dai.
\newblock Reinforcement learning via {Fenchel-Rockafellar} duality.
\newblock \emph{arXiv preprint arXiv:2001.01866}, 2020.

\bibitem[Nachum et~al.(2019{\natexlab{a}})Nachum, Chow, Dai, and
  Li]{NacChoDaiLi19}
Ofir Nachum, Yinlam Chow, Bo~Dai, and Lihong Li.
\newblock Dualdice: Behavior-agnostic estimation of discounted stationary
  distribution corrections.
\newblock pages 2315--2325, 2019{\natexlab{a}}.

\bibitem[Nachum et~al.(2019{\natexlab{b}})Nachum, Dai, Kostrikov, Chow, Li, and
  Schuurmans]{NacDaiKosChoetal19}
Ofir Nachum, Bo~Dai, Ilya Kostrikov, Yinlam Chow, Lihong Li, and Dale
  Schuurmans.
\newblock {AlgaeDICE}: Policy gradient from arbitrary experience.
\newblock \emph{arXiv preprint arXiv:1912.02074}, 2019{\natexlab{b}}.

\bibitem[Namkoong and Duchi(2016)]{NamDuc16}
Hongseok Namkoong and John~C Duchi.
\newblock Stochastic gradient methods for distributionally robust optimization
  with f-divergences.
\newblock In \emph{Advances in neural information processing systems}, pages
  2208--2216, 2016.

\bibitem[Namkoong and Duchi(2017)]{NamDuc17}
Hongseok Namkoong and John~C Duchi.
\newblock Variance-based regularization with convex objectives.
\newblock In \emph{Advances in neural information processing systems}, pages
  2971--2980, 2017.

\bibitem[Nilim and El~Ghaoui(2005)]{NilEl05}
Arnab Nilim and Laurent El~Ghaoui.
\newblock Robust control of markov decision processes with uncertain transition
  matrices.
\newblock \emph{Operations Research}, 53\penalty0 (5):\penalty0 780--798, 2005.

\bibitem[Owen(2001)]{Owen01}
Art~B Owen.
\newblock \emph{Empirical likelihood}.
\newblock Chapman and Hall/CRC, 2001.

\bibitem[Pazis and Parr(2011)]{PazPar11}
Jason Pazis and Ronald Parr.
\newblock Non-parametric approximate linear programming for {MDPs}.
\newblock In \emph{AAAI}, 2011.

\bibitem[Precup et~al.(2000)Precup, Sutton, and Singh]{PreSutSin00}
Doina Precup, R.~S. Sutton, and S.~Singh.
\newblock Eligibility traces for off-policy policy evaluation.
\newblock In \emph{Proc.\ Intl.\ Conf.\ Machine Learning}, pages 759--766.
  Morgan Kaufmann, San Francisco, CA, 2000.

\bibitem[Puterman(2014)]{Puterman14}
Martin~L Puterman.
\newblock \emph{Markov decision processes: discrete stochastic dynamic
  programming}.
\newblock John Wiley \& Sons, 2014.

\bibitem[Qin and Lawless(1994)]{QinLaw94}
Jin Qin and Jerry Lawless.
\newblock Empirical likelihood and general estimating equations.
\newblock \emph{the Annals of Statistics}, pages 300--325, 1994.

\bibitem[Rockafellar(1974)]{Rockafellar74}
R~Tyrrell Rockafellar.
\newblock Augmented lagrange multiplier functions and duality in nonconvex
  programming.
\newblock \emph{SIAM Journal on Control}, 12\penalty0 (2):\penalty0 268--285,
  1974.

\bibitem[R{\"o}misch(2014)]{Romisch14}
Werner R{\"o}misch.
\newblock Delta method, infinite dimensional.
\newblock \emph{Wiley StatsRef: Statistics Reference Online}, 2014.


\bibitem[Schaul et~al.(2016)Schaul, Quan, Antonoglou, and Silver]{SchQuaAntSil16}
Tom Schaul, John Quan, Ioannis Antonoglou, and David Silver
\newblock Prioritized experience replay.
\newblock In \emph{Proceedings of the 4th International Conference on Learning
  Representations}, 2016.

\bibitem[Sriperumbudur et~al.(2011)Sriperumbudur, Fukumizu, and
  Lanckriet]{SriFukLan11}
B.~Sriperumbudur, K.~Fukumizu, and G.~Lanckriet.
\newblock Universality, characteristic kernels and {RKHS} embedding of
  measures.
\newblock \emph{Journal of Machine Learning Research}, 12:\penalty0 2389--2410,
  2011.

\bibitem[Strehl et~al.(2009)Strehl, Li, and Littman]{Strehl09Pac}
Alexander L. Strehl, Lihong Li, and Michael L. Littman.
\newblock Reinforcement learning in finite MDPs: PAC analysis.
\newblock In \emph{Journal of Machine Learning Research}, 10:\penalty0 2413--2444,
  2009.

\bibitem[Sutton et~al.(2012)Sutton, Szepesv{\'a}ri, Geramifard, and
  Bowling]{SutSzeGerBow12}
Richard~S Sutton, Csaba Szepesv{\'a}ri, Alborz Geramifard, and Michael~P
  Bowling.
\newblock Dyna-style planning with linear function approximation and
  prioritized sweeping.
\newblock \emph{arXiv preprint arXiv:1206.3285}, 2012.

\bibitem[Swaminathan and Joachims(2015)]{SwaJoa15}
Adith Swaminathan and Thorsten Joachims.
\newblock Counterfactual risk minimization: Learning from logged bandit
  feedback.
\newblock In \emph{International Conference on Machine Learning}, pages
  814--823, 2015.

\bibitem[Szita and Szepesvari(2010)]{SziSze10}
Istvan Szita and Csaba Szepesvari.
\newblock Model-based reinforcement learning with nearly tight exploration
  complexity bounds.
\newblock In \emph{Proceedings of the 27th International Conference on
  International Conference on Machine Learning}, page 1031–1038. Omnipress,
  2010.

\bibitem[Tamar et~al.(2013)Tamar, Xu, and Mannor]{TamXuMan13}
Aviv Tamar, Huan Xu, and Shie Mannor.
\newblock Scaling up robust mdps by reinforcement learning.
\newblock \emph{arXiv preprint arXiv:1306.6189}, 2013.

\bibitem[Tang et~al.(2020)Tang, Feng, Li, Zhou, and Liu]{tang20doubly}
Ziyang Tang, Yihao Feng, Lihong Li, Dengyong Zhou, and Qiang Liu.
\newblock Doubly robust bias reduction in infinite horizon off-policy
  estimation.
\newblock In \emph{Proceedings of the 8th International Conference on Learning
  Representations}, 2020.

\bibitem[Thomas et~al.(2015{\natexlab{a}})Thomas, Theocharous, and
  Ghavamzadeh]{thomas2015high_opt}
Philip Thomas, Georgios Theocharous, and Mohammad Ghavamzadeh.
\newblock High confidence policy improvement.
\newblock In \emph{International Conference on Machine Learning}, pages
  2380--2388, 2015{\natexlab{a}}.

\bibitem[Thomas(2015)]{thomas2015safe}
Philip~S Thomas.
\newblock \emph{Safe reinforcement learning}.
\newblock PhD thesis, University of Massachusetts Libraries, 2015.

\bibitem[Thomas and Brunskill(2016)]{thomas16data}
Philip~S. Thomas and Emma Brunskill.
\newblock Data-efficient off-policy policy evaluation for reinforcement
  learning.
\newblock In \emph{Proceedings of the 33rd International Conference on Machine
  Learning}, pages 2139--2148, 2016.

\bibitem[Thomas et~al.(2015{\natexlab{b}})Thomas, Theocharous, and
  Ghavamzadeh]{thomas2015high}
Philip~S Thomas, Georgios Theocharous, and Mohammad Ghavamzadeh.
\newblock High-confidence off-policy evaluation.
\newblock In \emph{Twenty-Ninth AAAI Conference on Artificial Intelligence},
  2015{\natexlab{b}}.

\bibitem[Todorov et~al.(2012)Todorov, Erez, and Tassa]{TodEreTas12}
Emanuel Todorov, Tom Erez, and Yuval Tassa.
\newblock {MuJoCo}: A physics engine for model-based control.
\newblock In \emph{Intelligent Robots and Systems (IROS), 2012 IEEE/RSJ
  International Conference on}, pages 5026--5033. IEEE, 2012.

\bibitem[Uehara et~al.(2019)Uehara, Huang, and Jiang]{UehHuaJia19}
Masatoshi Uehara, Jiawei Huang, and Nan Jiang.
\newblock Minimax weight and {Q}-function learning for off-policy evaluation.
\newblock \emph{arXiv preprint arXiv:1910.12809}, 2019.

\bibitem[van~der Vaart and Wellner(1996)]{VaaWel96}
A.~W. van~der Vaart and J.~A. Wellner.
\newblock \emph{Weak Convergence and Empirical Processes}.
\newblock Springer, 1996.

\bibitem[Wang and Carreira-Perpin{\'a}n(2013)]{wang2013projection}
Weiran Wang and Miguel~A Carreira-Perpin{\'a}n.
\newblock Projection onto the probability simplex: An efficient algorithm with
  a simple proof, and an application.
\newblock \emph{arXiv preprint arXiv:1309.1541}, 2013.

\bibitem[Wu et~al.(2019)Wu, Tucker, and Nachum]{WuTucNac19}
Yifan Wu, George Tucker, and Ofir Nachum.
\newblock Behavior regularized offline reinforcement learning.
\newblock \emph{arXiv preprint arXiv:1911.11361}, 2019.

\bibitem[Xie et~al.(2019)Xie, Ma, and Wang]{xie19optimal}
Tengyang Xie, Yifei Ma, and Yu-Xiang Wang.
\newblock Optimal off-policy evaluation for reinforcement learning with
  marginalized importance sampling.
\newblock In \emph{Advances in Neural Information Processing Systems 32}, pages
  9665--9675, 2019.

\bibitem[Yu et~al.(2020)Yu, Thomas, Yu, Ermon, Zou, Levine, Finn, and
  Ma]{YuThoYuErmetal20}
Tianhe Yu, Garrett Thomas, Lantao Yu, Stefano Ermon, James Zou, Sergey Levine,
  Chelsea Finn, and Tengyu Ma.
\newblock Mopo: Model-based offline policy optimization.
\newblock \emph{arXiv preprint arXiv:2005.13239}, 2020.

\bibitem[Zhang et~al.(2020{\natexlab{a}})Zhang, Dai, Li, and
  Schuurmans]{ZhaDaiLiSch20}
Ruiyi Zhang, Bo~Dai, Lihong Li, and Dale Schuurmans.
\newblock {GenDICE}: Generalized offline estimation of stationary values.
\newblock In \emph{International Conference on Learning Representations},
  2020{\natexlab{a}}.

\bibitem[Zhang et~al.(2020{\natexlab{b}})Zhang, Liu, and
  Whiteson]{zhang20gradientdice}
Shangtong Zhang, Bo~Liu, and Shimon Whiteson.
\newblock {GradientDICE}: Rethinking generalized offline estimation of
  stationary values, 2020{\natexlab{b}}.
\newblock arXiv:2001.11113.

\end{thebibliography}
\end{document}